\newtheorem{theorem}{Theorem}[section]
\newtheorem{lemma}[theorem]{Lemma}
\newtheorem{corollary}[theorem]{Corollary}
\newtheorem{assumption}[theorem]{Assumption}
\newcommand{\mv}[1]{\textcolor{black}{#1}}
\newcommand{\nt}[1]{\textcolor{black}{#1}}
\title{On the Convergence of Loss and Uncertainty-based Active Learning Algorithms
}
\author{
  Daniel Haimovich 
  \\
  Meta, Central Applied Science \\
  \texttt{danielha@meta.com} \\
  \And
  Dima Karamshuk \\
  Meta, Central Applied Science \\
  \texttt{karamshuk@meta.com} \\
  \AND
  Fridolin Linder \\
  Meta, Central Applied Science\\
  \texttt{flinder@meta.com} \\
  \And
  Niek Tax \\
  Meta, Central Applied Science\\
  \texttt{niek@meta.com} \\
  \And
  Milan Vojnovi\' c
  \\
  London School of Economics \\
  \texttt{m.vojnovic@lse.ac.uk} \\
}
\begin{document}

\maketitle

\begin{abstract}
We investigate the convergence rates and data sample sizes required for training a machine learning model using a stochastic gradient descent (SGD) algorithm, where data points are sampled based on either their loss value or uncertainty value. These training methods are particularly relevant for active learning and data subset selection problems. For SGD with a constant step size update, we present convergence results for linear classifiers and linearly separable datasets using squared hinge loss and similar training loss functions. Additionally, we extend our analysis to more general classifiers and datasets, considering a wide range of loss-based sampling strategies and smooth convex training loss functions.
We propose a novel algorithm called \emph{Adaptive-Weight Sampling (AWS)} that utilizes SGD with an adaptive step size that achieves stochastic Polyak's step size in expectation.
We establish convergence rate results for AWS for smooth convex training loss functions. Our numerical experiments demonstrate the efficiency of AWS on various datasets by using either exact or estimated loss values.
\end{abstract}

\section{Introduction}

In practice, when training machine learning models for prediction tasks (classification or regression), one often has access  to an abundance of unlabeled data, while obtaining the corresponding labels may entail high costs. This may especially be the case in fields like computer vision, natural language processing, and speech recognition.
Active learning algorithms are designed to efficiently learn a prediction model by employing a label acquisition, with the goal of minimizing the number of labels used to train an accurate prediction model.

Various label acquisition strategies have been proposed, each aiming to select informative points for the underlying model training task; including query-by-committee \citep{query-by-committee}, expected model change \citep{settles}, expected error reduction \citep{roy}, expected variance reduction \citep{wangchow}, and mutual information maximization \citep{batchbald, kirsch}. 

A common label acquisition strategy involves estimating uncertainty, which can be viewed as self-disagreement about predictions made by a given model. Algorithms using an uncertainty acquisition strategy are referred to as \emph{uncertainty-based} active learning algorithms. Different variants of uncertainty strategies include margin of confidence, least confidence, and entropy-based sampling \citep{nguyen}. Recently, a \emph{loss-based} active learning approach gained attention in research~\cite{Yoo_2019_CVPR}, \cite{lahlou2022deup}, \cite{nguyen2021loss}, \cite{luo2021loss}, and is now applied at scale in industry, such as for training integrity violation classifiers at Meta. This method involves selecting points for which there is a disagreement between the predicted label and the true label, as measured by a loss function. Since the true loss of a data point is unknown prior to the acquisition of the label, in practice, it is estimated using supervised learning.
Loss-based sampling aligns with the spirit of the perceptron algorithm \citep{rosenblatt}, which updates the model only for falsely-classified points.

Convergence guarantees for some uncertainty-based active learning algorithms have recently been established, such as for margin of confidence sampling \citep{raj}. 
By contrast, there are only limited results on the convergence properties of loss-based active learning algorithms, as these only recently been started to be studied, e.g., \cite{liu2023understanding}.

The primary focus of this paper is to establish convergence guarantees for stochastic gradient descent (SGD) algorithms where points are sampled based on their loss. Our work provides new results on conditions that ensure certain convergence rates and bounds on the expected sample size, accommodating various data sampling strategies.
Our theoretical results are under assumption that the active learner has access to an oracle that provides unbiased estimate of the conditional expected loss for a point, given the feature vector of the point and the current prediction model.
In practice, the loss cannot be evaluated at acquisition time since labels are yet unknown. Instead, a separate prediction model is used for loss estimation. In our experiments, we assess the impact of the bias and noise in such a loss estimator. Our convergence rate analysis accommodates also uncertainty-based data selection, for which we provide new results.

Uncertainty and loss-based acquisition strategies are also of interest for the data subset selection problem, often referred to as core-set selection or data pruning. This problem involves finding a small subset of training data such that the predictive performance of a classifier trained on it is close to that of a classifier trained on the full training data. Recent studies have explored this problem in the context of training neural networks, as seen in works like \cite{DBLP:conf/iclr/TonevaSCTBG19, DBLP:conf/iclr/ColemanYMMBLLZ20, masheej, sorscher2022beyond, mindermann2022prioritized}. In such scenarios, the oracle can evaluate an underlying loss function exactly, avoiding the need for using a loss estimator.

\mv{There is a large body of work on convergence of SGD algorithms, e.g. see \cite{bubeck} and \cite{nesterov}. These results are established for SGD algorithms under either constant, diminishing or adaptive step sizes. Recently, \cite{polyak}, studied SGD with the stochastic Polyak's step size, depending on the ratio of the loss and the squared gradient of the loss of a point. Our work proposes an adaptive-window sampling algorithm and provides its convergence analysis, with the algorithm defined as SGD with a sampling of points and an adaptive step size update that conform to the stochastic Polyak's step size in expectation. This is unlike to the adaptive step size SGD algorithm by \cite{polyak} which does not use sampling.}

\subsection{Summary of our Contributions}

Our contributions can be summarizes as given in the following points:

$\bullet$ For SGD with a constant step size, we present conditions under which a non-asymptotic convergence rate of order $O(1/n)$ holds, where $n$ represents the number of iterations of the algorithm, i.e., the number of unlabeled points presented to the algorithm. These conditions enable us to establish convergence rate results for loss-based sampling in the case of linear classifiers and linearly separable datasets, with the loss function taking on various forms such as the squared hinge loss function, generalized hinge loss function, or satisfying other specified conditions. Our results provide bounds for both expected loss and the number of sampled points, encompassing different loss-based strategies. These results are established by using a convergence rate lemma that may be of independent interest. 

$\bullet$ For SGD with a constant step size, we provide new convergence rate results for more general classifiers and datasets, with sampling of points according to an increasing function $\pi$ of the conditional expected loss of a point. In this case, we present conditions for smooth convex training loss functions under which a non-asymptotic convergence rate of order $O(\Pi^{-1}(1/\sqrt{n}))$ holds, where $\Pi$ is the primitive function of $\pi$. These results are established by leveraging the fact that the algorithm behaves akin to a SGD algorithm with an underlying objective function, as referred to as an \emph{equivalent loss} in \cite{liu2023understanding}, allowing us to apply known convergence rate results for SGD algorithms.

$\bullet$ We propose \emph{Adaptive-Weight Sampling (AWS)}, a novel learning algorithm that combines a sampling-based acquisition strategy with an adaptive step-size SGD update, achieving the stochastic Polyak's step size update in expectation, which can be used with any differentiable loss function. We establish a condition under which a non-asymptotic convergence rate of order $O(1/n)$ holds for AWS with smooth convex loss functions. We present uncertainty and loss-based strategies that satisfy this condition for binary classification, as well as an uncertainty strategy for multi-class classification.

$\bullet$ We present numerical results that demonstrate the efficiency of AWS on various datasets.

\subsection{Related Work}

The early proposal of the query-by-committee (QBC) algorithm by \citep{query-by-committee} demonstrated the benefits of active learning, an analysis of which was conducted under the selective sampling model by \cite{freund} and \cite{gillad-bachrach}. \cite{dasgupta} showed that the performance of QBC can be efficiently achieved by a modified perceptron algorithm with adaptive filtering. The efficient and label-optimal learning of halfspaces was studied by \cite{yan} and, subsequently, by \cite{shen}. Online active learning algorithms, studied under the name of selective sampling, include works by \citep{JMLR:v7:cesa-bianchi06b,10.1145/1553374.1553390,10.5555/2503308.2503327,10.5555/3104482.3104537,cavallanti,agarwal}. For a survey, refer to \cite{settles-book}.

Uncertainty sampling has been utilized for classification tasks since as early as \citep{lewis}, and subsequently in many other works, such as \citep{schohn,zhu10,yang15,yang16,lughofer18}. \cite{NEURIPS2018_5abdf8b8} demonstrated that threshold-based uncertainty sampling on a convex loss can be interpreted as performing a pre-conditioned stochastic gradient step on the population zero-one loss. However, none of these works have provided theoretical convergence guarantees. 

The convergence of margin of confidence sampling was recently studied by \cite{raj}, who demonstrated linear convergence for linear classifiers and linearly separable datasets, specifically for the hinge loss function, for a family of selection probability functions and an algorithm that performs a SGD update with respect to the squared hinge loss function. However, our results for linear classifiers and linearly separable datasets differ, as our focus lies on loss-based sampling strategies and providing bounds on the convergence rate of a loss function and the expected number of sampled points. These results are established using a convergence rate lemma, which may be of independent interest. It is noteworthy that the convergence rate for uncertainty-based sampling, as in Theorem 3.1 of \cite{raj}, can be derived by checking the conditions of the convergence rate lemma.

A loss-based active learning algorithm was proposed by \cite{Yoo_2019_CVPR}, comprising a loss prediction module and a target prediction model. The algorithm uses the loss prediction module to compute a loss estimate and prioritizes sampling points with a high estimated loss under the current prediction model. \cite{lahlou2022deup} generalize this idea within a framework for uncertainty prediction. However, neither \cite{Yoo_2019_CVPR} nor \cite{lahlou2022deup} provided theoretical guarantees for convergence rates. Recent analysis of convergence for loss and uncertainty-based active learning strategies has been presented by \cite{liu2023understanding}. Specifically, they introduced the concept of an equivalent loss, demonstrating that a gradient descent algorithm employing point sampling can be viewed as a SGD algorithm optimizing an equivalent loss function. While they focused on specific cases like sampling proportional to conditional expected loss, our results allow for sampling based on any continuous increasing function of expected conditional loss, and provide explicit convergence rate bounds in terms of the underlying sampling probability function.

In addition, \cite{polyak} introduced a SGD algorithm featuring an adaptive stochastic Polyak's step size, which has theoretical convergence guarantees under various assumptions. This algorithm showcased robust performance in comparison to state-of-the-art optimization methods, especially when training over-parametrized models. Our work proposes a novel sampling method that employs stochastic Polyak's step size in expectation, offering a convergence rate guarantee for smooth convex loss functions, contingent on a condition related to the sampling probability function. Notably, we demonstrate the fulfillment of this condition for logistic regression and binary cross-entropy loss functions, encompassing both a loss-based strategy involving proportional sampling to absolute error loss and an uncertainty sampling strategy. Furthermore, we extend this condition to hold for an uncertainty sampling strategy designed for multi-class classification.

\section{Problem Statement}
\label{sec:background}

\mv{We consider the setting of streaming algorithms where a machine learning model parameter $\theta_t$ is updated sequentially, upon encountering each data point, with  $(x_1,y_1),\ldots, (x_n,y_n) \in \mathcal{X}\times \mathcal{Y}$ denoting the sequence of data points with the corresponding labels, assumed to be independent and identically distributed with distribution $\mathcal{D}$. Specifically, we consider the class of projected SGD algorithms defined as: given an initial value $\theta_1\in \Theta$,
\begin{equation}
\theta_{t+1} = \mathcal{P}_{\Theta_0}\left(\theta_t - z_t \nabla_\theta \ell(x_t, y_t, \theta_t)\right), \hbox{ for } t\geq 1
\label{equ:sgd}
\end{equation}
where $\ell:\mathcal{X}\times \mathcal{Y}\times \Theta\rightarrow \mathbb{R}$ is a training loss function, $z_t$ is a stochastic step size with mean $\zeta(x_t, y_t, \theta_t)$ for some function $\zeta: \mathcal{X} \times \mathcal{Y} \times \Theta \mapsto \mathbb{R}_+$, $\Theta_0 \subseteq \Theta$, and $\mathcal{P}_{\Theta_0}$ is the projection function, i.e., $\mathcal{P}_{\Theta_0}(u) = \arg\min_{v \in \Theta_0} ||u - v||$. Unless specified otherwise, we consider the case $\Theta_0 = \Theta$, which requires no projection. For binary classification tasks, we assume $\mathcal{Y} = \{-1,1\}$. For every $t>0$, we define $\bar{\theta}_t = (1/t)\sum_{s=1}^t \theta_s$.}

\mv{By defining the distribution of the stochastic step  size $z_t$ in Equation~(\ref{equ:sgd}) appropriately, we can accommodate different active learning and data subset selection algorithms. In the context of active learning algorithms, at each step $t$, the algorithm observes the value of $x_t$ and decides whether or not to observe the value of the label $y_t$. The value of $z_t$ determine whether or not we observe the label $y_t$. Deciding not to observe the value of the label $y_t$ implies the step size $z_t$ of value zero (not updating the machine learning model).}

\mv{For the choice of the stochastic step size, we consider two cases: 
(a) \emph{Constant-Weight Sampling}: a Bernoulli sampling with a constant step size, and 
(b) \emph{Adaptive-Weight Sampling}: a sampling that achieves stochastic Polyak's step size in expectation. 
For case (a), $z_t$ is the product of a constant step size $\gamma$ and a Bernoulli random variable with mean $\pi(x_t, y_t, \theta_t)$. 
For case (b), $\zeta(x, y, \theta)$ is the "stochastic" Polyak's step size, and $z_t$ is equal to $\zeta(x_t, y_t, \theta_t) / \pi(x_t, y_t, \theta_t)$ with probability $\pi(x_t, y_t, \theta_t)$ and is equal to $0$ otherwise. Note that using the notation $\pi(x,y,\theta)$ allows for the case when the sampling probability does not depend on the value of the label $y$.
} 

For a \emph{loss-based sampling}, $\pi$ is an increasing function of some loss function $\ell^\star$, which does not necessarily correspond to the training loss function $\ell$. Specifically, for a binary classifier with $p(x,y,\theta)$ denoting the expected prediction label, \emph{sampling proportional to the absolute error loss} is defined as $\pi(\ell^*) = \omega \ell^*$ where $\ell^*(x,y,\theta) = |y-p(x,y,\theta)|$ and $\omega \in (0,1/2]$. For an \emph{uncertainty-based sampling}, $\pi$ is a function of some quantity reflecting the uncertainty of the prediction model.

Our focus is on finding convergence conditions for algorithm (\ref{equ:sgd}) and convergence rates under these conditions, as well as bounds on the expected number of points sampled by the algorithm. 

\paragraph{Additional Assumptions and Notation} For binary classification, we say that data is \emph{separable} if, for every point $(x,y)\in \mathcal{X}\times \mathcal{Y}$, either $y = 1$ with probability $1$ or $y = -1$ with probability $1$. The data is \emph{linearly separable} if there exists $\theta^*\in \Theta$ such that $y = \mathrm{sgn}(x^\top \theta^*)$ for every $x\in \mathcal{X}$. Linearly separable data has a \emph{$\rho^*$-margin} if $|x^\top \theta^*|\geq \rho^*$ for every $x\in \mathcal{X}$, for some $\theta^*\in \Theta$.

Some of our results are for \emph{linear classifiers}, where the predicted label of a point $x$ is a function of $x^\top \theta$. For example, a model with a predicted label $\mathrm{sgn}(x^\top \theta)$ is a linear classifier. For logistic regression, the predicted label is $1$ with probability $\sigma(x^\top \theta)$ and $-1$ otherwise, where $\sigma$ is the logistic function defined as $\sigma(z) = 1/(1+e^{-z})$. For binary classification, we model the prediction probability of the positive label as $\sigma(x^\top \theta)$, where $\sigma:\mathbb{R}_+\rightarrow [0,1]$ is an increasing function, and $\sigma(-u)+\sigma(u)=1$ for all $u\in \mathbb{R}$. The absolute error loss takes value $1-\sigma(x^\top \theta)$ if $y = 1$ or value $\sigma(x^\top \theta)$ if $y = -1$, which corresponds to $1-\sigma(yx^\top \theta)$.
The binary cross-entropy loss for a point $(x,y)$ under model parameter $\theta$ can be written as $\ell(x,y,\theta) = -\log(\sigma(yx^\top \theta))$. Hence, absolute error loss-based sampling corresponds to the sampling probability function $\pi(\ell) = 1-e^{-\ell}$.

For any given $(x,y)\in \mathcal{X}\times \mathcal{Y}$, the loss function $\ell(x,y,\theta)$ is considered \emph{smooth} on $\Theta'\subseteq \Theta$ if it has a Lipschitz continuous gradient on $\Theta'$, i.e., there exists $L_{x,y}$ such that $||\nabla_{\theta}\ell(x,y,\theta_1)-\nabla_\theta \ell (x,y,\theta_2)||\leq L_{x,y} ||\theta_1-\theta_2||$ for all $\theta_1,\theta_2\in \Theta'$. For any distribution $q$ over $\mathcal{X}\times \mathcal{Y}$, $\mathbb{E}_{(x,y)\sim q}[\ell(x,y,\theta)]$ is $\mathbb{E}_{(x,y)\sim q}[L_{x,y}]$-smooth.

\section{Convergence Rate Guarantees}

In this section, we present conditions on the stochastic step size of algorithm (\ref{equ:sgd}) under which we can bound the total expected loss and the expected number of samples. For the Constant-Weight Sampling, we provide conditions that allow us to derive bounds for linear classifiers and linearly separable datasets and more general cases. For Adaptive-Weight Sampling, we offer a condition that allows us to establish convergence bounds for both loss and uncertainty-based sampling. 

\subsection{Constant-Weight Sampling}
\label{sec:first-bernoulli}

\paragraph{Linear Classifiers and Linearly Separable Datasets} We focus on binary classification and briefly discuss extension to multi-class classification. We consider the linear classifier with the predicted label $\hbox{sgn}(x^\top \theta)$. With a slight abuse of notation, let $\ell(x,y,\theta)\equiv \ell(u)$ and $\pi(x,y,\theta) \equiv \pi(u)$ where $u=yx^\top \theta$. 
We assume that the domain $\mathcal{X}$ is bounded, i.e., there exists $R$ such that $||x||\leq R$ for all $x\in \mathcal{X}$, $||\theta_1-\theta^*||\leq S$ for some $S \geq 0$, and that the data is $\rho^*$-margin linearly separable.  

We present convergence rate results for the training loss function corresponding to the squared hinge loss function, i.e. $\ell(u) = (1/2)\max\{1-u,0\}^2$. 
Our additional results also cover other cases, including a class of smooth convex loss functions and a generalized smooth hinge loss function, which are presented in the Appendix.

\begin{theorem} 
\mv{Assume that $\rho^* > 1$, the loss function is the squared hinge loss function, and the sampling probability function $\pi$ is such that for all $u \leq 1$, $\pi(u) \leq \beta/2$ and
\begin{equation}
\pi(u) \geq \pi^*(\ell(u)) := \frac{\beta}{2}\left(1-\frac{1}{1+\mu\sqrt{\ell(u)}}\right)
\label{equ:Bsh}
\end{equation}
for some constants $0 < \beta \leq 2$ and $\mu \geq \sqrt{2}/(\rho^*-1)$. Then, for any initial value $\theta_1$ such that $||\theta_1-\theta^*||\leq S$ and $\{\theta_t\}_{t>1}$ according to algorithm (\ref{equ:sgd}) with $\gamma = 1/R^2$, 
$$
\mathbb{E}\left[\ell(yx^\top \bar{\theta}_n)\right] \leq \mathbb{E}\left[\frac{1}{n}\sum_{t=1}^n \ell(y_t x_t^\top \theta_t)\right]
 \leq  \frac{R^2 S^2}{\beta}\frac{1}{n},
$$
where $(x,y)$ is an independent sample of a labeled data point from $\mathcal{D}$.}

\mv{Moreover, if the sampling is according to $\pi^*$, then the expected number of sampled points satisfies
$$
\mathbb{E}\left[\sum_{t=1}^n \pi^*(\ell(y_t x_t^\top \theta_t))\right] \\
\leq  \min\left\{\frac{1}{2}R S \mu\sqrt{\beta} \sqrt{n},\frac{1}{2}\beta n\right\}.
$$}
\label{thm:squaredhinge}
\end{theorem}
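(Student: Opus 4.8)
The plan is to control the squared distance $\|\theta_t-\theta^*\|^2$ to the margin-separating $\theta^*$ and telescope a per-step contraction. Write $u_t=y_tx_t^\top\theta_t$, abbreviate $\ell_t=\ell(u_t)$, and let $B_t\in\{0,1\}$ be the Bernoulli sampling indicator, so that $z_t=\gamma B_t$ and $\mathbb{E}[B_t\mid x_t,y_t,\theta_t]=\pi(u_t)$. Expanding the update (\ref{equ:sgd}) and using $B_t^2=B_t$,
\[
\|\theta_{t+1}-\theta^*\|^2=\|\theta_t-\theta^*\|^2-\gamma B_t\bigl(2\langle\theta_t-\theta^*,\nabla_\theta\ell_t\rangle-\gamma\|\nabla_\theta\ell_t\|^2\bigr).
\]
Taking the conditional expectation over $B_t$ replaces $B_t$ by $\pi(u_t)$.

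Next I would lower bound the bracket using the squared-hinge structure. For $u_t<1$ we have $\nabla_\theta\ell_t=-(1-u_t)y_tx_t$, giving the self-bounding bound $\|\nabla_\theta\ell_t\|^2=(1-u_t)^2\|x_t\|^2\le2R^2\ell_t$; and $\rho^*$-margin separability gives $y_tx_t^\top\theta^*\ge\rho^*$, so
\[
\langle\theta_t-\theta^*,\nabla_\theta\ell_t\rangle=(1-u_t)(y_tx_t^\top\theta^*-u_t)\ge(1-u_t)(\rho^*-u_t)=(\rho^*-1)\sqrt{2\ell_t}+2\ell_t,
\]
where the last step uses $1-u_t=\sqrt{2\ell_t}$. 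With $\gamma=1/R^2$ the noise term obeys $\gamma\|\nabla_\theta\ell_t\|^2\le2\ell_t$, so the bracket is at least $2\bigl[(\rho^*-1)\sqrt{2\ell_t}+\ell_t\bigr]\ge0$. Being nonnegative, it lets me substitute the lower bound $\pi(u_t)\ge\pi^*(\ell_t)=\tfrac\beta2\,\mu\sqrt{\ell_t}/(1+\mu\sqrt{\ell_t})$; writing $s=\sqrt{\ell_t}$, this produces
\[
\mathbb{E}\bigl[\|\theta_{t+1}-\theta^*\|^2\mid x_t,y_t,\theta_t\bigr]\le\|\theta_t-\theta^*\|^2-\gamma\beta\,\ell_t\cdot\frac{\mu\bigl((\rho^*-1)\sqrt2+s\bigr)}{1+\mu s}.
\]
The trailing ratio is at least $1$ precisely when $\mu(\rho^*-1)\sqrt2\ge1$, which is guaranteed by the assumption $\mu\ge\sqrt2/(\rho^*-1)$, leaving the clean drift $\mathbb{E}[\|\theta_{t+1}-\theta^*\|^2\mid\cdot]\le\|\theta_t-\theta^*\|^2-\gamma\beta\ell_t$.

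Taking full expectations, telescoping over $t=1,\dots,n$, and using $\|\theta_{n+1}-\theta^*\|^2\ge0$ together with $\|\theta_1-\theta^*\|\le S$ and $\gamma=1/R^2$ gives $\sum_{t=1}^n\mathbb{E}[\ell_t]\le R^2S^2/\beta$, which is the right-hand inequality of the first claim after dividing by $n$. The left-hand inequality then follows from Jensen applied to the convex map $u\mapsto\ell(u)$ composed with the linear $\theta\mapsto yx^\top\theta$, together with the fact that a fresh sample $(x,y)\sim\mathcal{D}$ is independent of $\theta_t$ and shares its distribution with $(x_t,y_t)$, which is also independent of $\theta_t$; hence $\mathbb{E}[\ell(yx^\top\theta_t)]=\mathbb{E}[\ell_t]$.

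For the sample-size bound I would estimate $\mathbb{E}[\sum_t\pi^*(\ell_t)]$ in two ways. The bound $\tfrac12\beta n$ is immediate from $\pi^*\le\beta/2$. For the $\sqrt n$ bound, use $\mu s/(1+\mu s)\le\mu s$ to get $\pi^*(\ell_t)\le\tfrac{\beta\mu}2\sqrt{\ell_t}$, and then combine concavity of $\sqrt{\cdot}$ (Jensen) with Cauchy--Schwarz and the loss bound just proved:
\[
\sum_{t=1}^n\mathbb{E}[\sqrt{\ell_t}]\le\sum_{t=1}^n\sqrt{\mathbb{E}[\ell_t]}\le\sqrt{n}\,\Bigl(\sum_{t=1}^n\mathbb{E}[\ell_t]\Bigr)^{1/2}\le\sqrt n\cdot RS/\sqrt{\beta},
\]
which yields $\mathbb{E}[\sum_t\pi^*(\ell_t)]\le\tfrac12 RS\mu\sqrt\beta\,\sqrt n$; taking the minimum of the two bounds finishes the proof. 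The main obstacle is engineering the per-step contraction so that it telescopes to an $O(1/n)$ rate even though $\pi(u_t)\to0$ as the loss vanishes: the two ingredients that make it work are the self-bounding property of the squared hinge loss, which with $\gamma=1/R^2$ turns the second-order SGD term into a controlled $-\gamma\beta\ell_t$ rather than residual noise, and the precise shape of $\pi^*$, whose factor $\mu\sqrt{\ell_t}/(1+\mu\sqrt{\ell_t})$ is exactly calibrated to absorb the $\sqrt{\ell_t}$ coming from the margin term once $\mu$ crosses the stated threshold. Verifying that the lower-bound substitution is legitimate, which requires the bracket to be nonnegative and is where the assumption $\rho^*>1$ enters, is the remaining point requiring care.
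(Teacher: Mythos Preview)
Your argument is correct and follows the same telescoping skeleton as the paper's proof, but with two noteworthy differences in how the pieces are assembled.

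First, for the loss bound, the paper routes through a general convergence lemma (their Lemma~A.3) by separately verifying two conditions: the noise condition $\pi(u)\ell'(u)^2R^2\le\alpha\tilde\ell(u)$, which uses the hypothesis $\pi\le\beta/2$, and the drift condition $\pi(u)(-\ell'(u))(\rho^*-u)\ge\beta\tilde\ell(u)$, which uses $\pi\ge\pi^*$. You instead combine drift and noise into a single bracket, observe it is nonnegative (this is where $\rho^*>1$ enters), and apply only the lower bound $\pi\ge\pi^*$ to the whole bracket. Consequently you never invoke the assumption $\pi\le\beta/2$, and your computation in fact goes through under the weaker threshold $\mu\ge 1/(\sqrt{2}(\rho^*-1))$, half of what the theorem states. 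The paper's decomposition is cleaner to state as a reusable lemma; your combined treatment is tighter for this particular loss.

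Second, for the sample-size bound, the paper applies Jensen's inequality directly to the concave function $\pi^*$, obtaining $\sum_t\mathbb{E}[\pi^*(\ell_t)]\le n\,\pi^*\bigl(\tfrac1n\sum_t\mathbb{E}[\ell_t]\bigr)$ and then bounding $\pi^*(v)\le(\beta\mu/2)\sqrt v$. You instead bound $\pi^*\le(\beta\mu/2)\sqrt{\cdot}$ first and then use Jensen on $\sqrt{\cdot}$ followed by Cauchy--Schwarz on $\sum_t\sqrt{\mathbb{E}[\ell_t]}$. Both routes land on the same bound.
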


Condition (\ref{equ:Bsh}) requires that the sampling probability function $\pi$ is lower bounded by an increasing, concave function $\pi^*$ of the loss value. This fact, along with the expected loss bound, implies the asserted bound for the expected number of samples. The expected number of samples is $O(\sqrt{n})$ concerning the number of iterations and is $O(1/(\rho^*-1))$ concerning the margin $\rho^*-1$.

Theorem~\ref{thm:squaredhinge}, and our other results for linear classifiers and linearly separable datasets, are established using a convergence rate lemma, which is presented in Appendix~\ref{sec:keylemma}, along with its proof. This lemma generalizes the conditions used to establish the convergence rate for an uncertainty-based sampling algorithm by \cite{raj}, with the sampling probability function $\pi(u) = 1/(1+\mu |u|)$, for some constant $\mu > 0$. It can be readily shown that Theorem 3.1 in \cite{raj} follows from our convergence rate lemma with the training loss function corresponding to the squared hinge loss function and the evaluation loss function (used for convergence rate guarantee) corresponding to the hinge loss function. Further details on the convergence rate lemma are discussed in Appendix~\ref{sec:impl}. 

The convergence rate conditions for multi-class classification with the set of classes $\mathcal{Y}$ are the same as for binary classification, with $u(x,y,\theta) := x^\top \theta_y- \max_{y'\in \mathcal{Y}\setminus \{y\}} x^\top \theta_{y'}$, except for an additional factor of $2$ in one of the conditions (see Lemma~\ref{lem:multi} in the Appendix). Hence, all the observations remain valid for the multi-class classification case.

\paragraph{More General Classifiers and Datasets}
\label{sec:second-bernoulli} We consider algorithm (\ref{equ:sgd}) where $z_t$ is product of a fixed step size $\gamma$ and a Bernoulli random variable $\zeta_t$ with mean $\pi(x,y,\theta)$. Let $g_t = \zeta_t \nabla_\theta\ell(x_t, y_t, \theta_t)$, which is random vector because $\zeta_t$ is a random variable and $(x_t,y_t)$ is a sampled point. Following \cite{liu2023understanding}, we note that the algorithm (\ref{equ:sgd}) is an SGD algorithm with respect to an objective function $\tilde{\ell}$ with gradient
\begin{equation}
\nabla_\theta \tilde{\ell}(\theta) = \mathbb{E}[\pi(x,y,\theta)\nabla_\theta \ell(x,y,\theta)]
\label{equ:ltilde}
\end{equation}
where the expectation is with respect to $x$ and $y$. This observation allows us to derive convergence rate results by deploying convergence rate results that are known to hold for SGD under various assumptions on function $\tilde{\ell}$, variance of stochastic gradient vector and step size. A function $\tilde{\ell}$ satisfying condition (\ref{equ:ltilde}) is referred to as an \emph{equivalent loss} in \cite{liu2023understanding}.

Assume that the sampling probability $\pi$ is an increasing function of the conditional expected loss $\ell(x,\theta) = \mathbb{E}_y [\ell(x,y,\theta)\mid x]$. With a slight abuse of notation, we denote this probability as $\pi(\ell(x,\theta))$ where $\pi:\mathbb{R}_+\rightarrow [0,1]$ is an increasing and continuous function. Let $\Pi$ be the primitive of $\pi$, i.e. $\Pi'=\pi$. We then have
\begin{equation}
\tilde{\ell}(\theta) = \mathbb{E}[
\Pi(\ell(x,\theta))].
\label{equ:tildeloss}
\end{equation}

If $\ell(x,y,\theta)$ is a convex function, for every $(x,y)\in \mathcal{X}\times \mathcal{Y}$, then $\tilde{\ell}$ is a convex function. 

This framework for establishing convergence rates allows us to accommodate different sampling strategies and loss functions. The next lemma allows us to derive convergence rate results for expected loss with respect to loss function $\ell$ by applying convergence rate results for expected loss with respect to loss function $\tilde{\ell}$ (which, recall, is the equivalent loss function).

\begin{lemma} \label{lem:sgdloss} Assume that for algorithm (\ref{equ:sgd}) with loss-based sampling according to $\pi$, for some functions $f_1,\ldots, f_m$, we have
\begin{equation}
\mathbf{E}\left[\frac{1}{n}\sum_{t=1}^n \tilde{\ell}(\theta_t)\right] \leq \inf_{\theta}\tilde{\ell}(\theta) + \sum_{i=1}^m f_i(n).
\label{equ:losscond}
\end{equation}

Then, it holds: 
\begin{eqnarray*}
\mathbf{E}\left[\frac{1}{n}\sum_{t=1}^n \ell(\theta_t)\right] 
&\leq & \inf_{\theta}\Pi^{-1}(\tilde{\ell}(\theta)) + \sum_{i=1}^m\Pi^{-1}(f_i(n)).
\end{eqnarray*}
\end{lemma}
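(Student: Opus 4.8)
The plan is to exploit two structural facts about the primitive $\Pi$. Since $\Pi'=\pi$ is increasing, $\Pi$ is convex; normalizing it so that $\Pi(0)=0$ (equivalently $\Pi(\ell)=\int_0^\ell \pi(s)\,ds$), its inverse $\Pi^{-1}$ is increasing, concave, and satisfies $\Pi^{-1}(0)=0$. The first step is a pointwise comparison between the expected loss $\ell(\theta)=\mathbf{E}[\ell(x,\theta)]$ and the equivalent loss. Applying Jensen's inequality to the convex map $\Pi$ gives $\Pi(\mathbf{E}[\ell(x,\theta)])\le \mathbf{E}[\Pi(\ell(x,\theta))]=\tilde\ell(\theta)$, and applying the increasing map $\Pi^{-1}$ to both sides yields
\[
\ell(\theta)\;\le\;\Pi^{-1}(\tilde\ell(\theta))\qquad\hbox{for every }\theta.
\]

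Next I instantiate this bound at each iterate $\theta_t$, average over $t=1,\ldots,n$, and take expectations, which gives $\mathbf{E}[\frac{1}{n}\sum_{t=1}^n \ell(\theta_t)]\le \mathbf{E}[\frac{1}{n}\sum_{t=1}^n \Pi^{-1}(\tilde\ell(\theta_t))]$. The uniform average over $t$ together with the expectation over the algorithm's randomness is an expectation with respect to a single probability measure, so Jensen's inequality for the concave function $\Pi^{-1}$ lets me pull it outside the averaging:
\[
\mathbf{E}\!\left[\frac{1}{n}\sum_{t=1}^n \Pi^{-1}(\tilde\ell(\theta_t))\right]\;\le\;\Pi^{-1}\!\left(\mathbf{E}\!\left[\frac{1}{n}\sum_{t=1}^n \tilde\ell(\theta_t)\right]\right).
\]
Invoking hypothesis (\ref{equ:losscond}) together with the monotonicity of $\Pi^{-1}$ then bounds the right-hand side by $\Pi^{-1}\big(\inf_\theta\tilde\ell(\theta)+\sum_{i=1}^m f_i(n)\big)$.

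It remains to split this single $\Pi^{-1}$ of a sum into the claimed sum of terms. Because $\Pi^{-1}$ is concave with $\Pi^{-1}(0)=0$, it is subadditive on $[0,\infty)$: for nonnegative $a,b_1,\ldots,b_m$ one has $\Pi^{-1}(a+\sum_i b_i)\le \Pi^{-1}(a)+\sum_i \Pi^{-1}(b_i)$, which follows by iterating the two-point estimate $\Pi^{-1}(a)\ge \frac{a}{a+b}\Pi^{-1}(a+b)+\frac{b}{a+b}\Pi^{-1}(0)$. The arguments here are nonnegative, since $\Pi(\ell)\ge 0$ for $\ell\ge 0$ forces $\inf_\theta\tilde\ell(\theta)\ge 0$ and the rate terms $f_i(n)$ are nonnegative. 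Applying subadditivity with $a=\inf_\theta\tilde\ell(\theta)$ and $b_i=f_i(n)$, and using that the continuous increasing map $\Pi^{-1}$ commutes with the infimum, i.e. $\Pi^{-1}(\inf_\theta\tilde\ell(\theta))=\inf_\theta \Pi^{-1}(\tilde\ell(\theta))$, yields exactly the asserted inequality. The main obstacle is bookkeeping the directions of the two Jensen applications---$\Pi$ enters as a convex function while $\Pi^{-1}$ enters as a concave one---and carrying the normalization $\Pi(0)=0$ throughout, since both the pointwise comparison and the subadditive splitting fail without it.
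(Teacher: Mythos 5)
Your proof is correct and follows essentially the same route as the paper: Jensen's inequality via the convexity of $\Pi$ (equivalently, concavity of $\Pi^{-1}$) to get $\mathbf{E}[\frac{1}{n}\sum_{t=1}^n \ell(\theta_t)] \leq \Pi^{-1}(\mathbf{E}[\frac{1}{n}\sum_{t=1}^n \tilde{\ell}(\theta_t)])$, followed by monotonicity and subadditivity of the concave $\Pi^{-1}$ to split the resulting bound. The only cosmetic difference is that you apply Jensen twice (once pointwise in $\theta$ over the $x$-randomness, once over the time average), where the paper applies it once over the joint randomness; your explicit attention to the normalization $\Pi(0)=0$ needed for subadditivity is a welcome extra care.
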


\begin{table*}[t]
\caption{Examples of sampling probability functions.} \label{sample-table}
{\footnotesize
\begin{center}
\begin{tabular}{c|c|c}
$\pi(x)$  & $\Pi(x)$ & $\Pi^{-1}(x)$ \\
\hline
$1-e^{-x}$    & $x+e^{-x}-1$  & $\approx \sqrt{2x}$ for small $x$\\\hline
$\min\{x,1\}$  & 
$\left\{
\begin{array}{ll}
\frac{1}{2} x^2 & x\leq 1 
\\ x-\frac{1}{2} & x\geq 1
\end{array}\right .$ & 
$\left\{
\begin{array}{ll}
\sqrt{2x} & x\leq 1/2 
\\ x+\frac{1}{2} & x\geq 1/2
\end{array}\right .$\\\hline
$\min\{(x/b)^a,1\}$, $a>0, b>0$ & 
$\left\{
\begin{array}{ll}
\frac{1}{b^a(1+a)}x^{1+a}  & x\leq a 
\\ x-\frac{a}{1+a} & x\geq a
\end{array}\right .$ & 
$\left\{
\begin{array}{ll}
b^{\frac{a}{1+a}}(1+a)^{\frac{1}{1+a}}x^{\frac{1}{1+a}} & x\leq \frac{b}{1+a} 
\\ x+\frac{a}{1+a}b & x\geq \frac{b}{1+a}
\end{array}\right .$\\\hline
$1-\frac{1}{1+\mu x}$ & $x-\frac{1}{\mu}\log(1+\mu x)$ & $\approx \sqrt{(2/\mu)x}$ for small $x$\\\hline
$1-\frac{1}{1+\mu \sqrt{x}}$ & $x - \frac{2}{\mu}\sqrt{x}+\frac{2}{\mu^2}\log(1+\mu\sqrt{x})$ & $\approx (((3/2)/\mu)x)^{2/3}$ for small $x$\\
\end{tabular}
\end{center}
}
\end{table*}

We apply Lemma~\ref{lem:sgdloss} to obtain the following result.

\begin{theorem} \label{thm:convexsmooth} Assume that $\ell$ is a convex function, $\tilde{\ell}$ is $L$-smooth, $\Theta_0$ is a convex set, $S = \sup_{\theta \in \Theta_0}||\theta-\theta_1||$, and $\mathbb{E}[\pi(\ell(x,\theta))||\nabla_\theta \ell(x,y,\theta)||^2] - ||\nabla_\theta \tilde{\ell}(\theta)||^2\leq \sigma_{\pi}^2$. Then, for algorithm (\ref{equ:sgd}) with $\gamma = 1/(L + (\sigma/R)\sqrt{n/2})$,
$$
\mathbb{E}[\ell(\bar{\theta}_n)] \leq  \mathbb{E}\left[\frac{1}{n}\sum_{t=1}^n \ell(\theta_t)\right]
\leq  \inf_{\theta}\Pi^{-1}(\tilde{\ell}(\theta)) + \Pi^{-1}\left(\frac{\sqrt{2}S\sigma_{\pi}}{\sqrt{n}}\right) + \Pi^{-1}\left(\frac{LS^2}{n}\right).
$$
\end{theorem}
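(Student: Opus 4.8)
The plan is to use the observation behind equation~(\ref{equ:ltilde}): with Bernoulli sampling, algorithm~(\ref{equ:sgd}) is exactly projected SGD on the equivalent loss $\tilde\ell$ of~(\ref{equ:tildeloss}). I would apply an off-the-shelf convergence bound for projected SGD on a smooth convex objective to control $\frac1n\sum_{t=1}^n\tilde\ell(\theta_t)-\inf_\theta\tilde\ell(\theta)$, cast it in the additive form~(\ref{equ:losscond}), and then transfer it to the original loss $\ell$ through Lemma~\ref{lem:sgdloss}. The passage from the running mean of $\ell(\theta_t)$ to $\ell(\bar\theta_n)$ is Jensen's inequality, since $\ell(\theta)=\mathbb{E}[\ell(x,y,\theta)]$ inherits convexity from each $\ell(x,y,\cdot)$.

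First I would set up the stochastic gradient $g_t=\zeta_t\nabla_\theta\ell(x_t,y_t,\theta_t)$, with $\zeta_t$ Bernoulli of mean $\pi(\ell(x_t,\theta_t))$. Conditioning on $\theta_t$ and using~(\ref{equ:ltilde}) shows $g_t$ is an unbiased estimate of $\nabla_\theta\tilde\ell(\theta_t)$. The key computation is the second moment: since $\zeta_t\in\{0,1\}$ we have $\zeta_t^2=\zeta_t$, hence $\mathbb{E}[\|g_t\|^2\mid\theta_t]=\mathbb{E}[\pi(\ell(x,\theta_t))\|\nabla_\theta\ell(x,y,\theta_t)\|^2]$, so the conditional variance equals $\mathbb{E}[\pi\|\nabla_\theta\ell\|^2]-\|\nabla_\theta\tilde\ell(\theta_t)\|^2\le\sigma_\pi^2$ by hypothesis. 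This is precisely the bounded-variance condition needed below. I would also record that $\tilde\ell$ is convex (noted after~(\ref{equ:tildeloss})) and $L$-smooth (assumed), that $\mathcal{P}_{\Theta_0}$ is non-expansive on the convex set $\Theta_0$, and that taking $\theta^\star$ to be a minimizer of $\tilde\ell$ over $\Theta_0$ gives $\|\theta_1-\theta^\star\|\le S=\sup_{\theta\in\Theta_0}\|\theta-\theta_1\|$.

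Next I would run the standard $L$-smooth convex SGD analysis. Combining the smoothness descent inequality, co-coercivity $\|\nabla_\theta\tilde\ell\|^2\le 2L(\tilde\ell-\inf\tilde\ell)$, non-expansiveness of the projection, and the variance bound gives the one-step estimate $\mathbb{E}[\|\theta_{t+1}-\theta^\star\|^2\mid\theta_t]\le\|\theta_t-\theta^\star\|^2-2\gamma(1-L\gamma)(\tilde\ell(\theta_t)-\inf\tilde\ell)+\gamma^2\sigma_\pi^2$; telescoping over $t=1,\dots,n$ (and using $\gamma\le 1/(2L)$ so that $2\gamma(1-L\gamma)\ge\gamma$) bounds $\frac1n\sum_t\mathbb{E}[\tilde\ell(\theta_t)]-\inf\tilde\ell$ by a sum of a step-size term $\propto S^2/(\gamma n)$ and a noise term $\propto\gamma\sigma_\pi^2$. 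Substituting the stated $\gamma=1/(L+(\sigma_\pi/S)\sqrt{n/2})$ separates these into a smoothness term $LS^2/n$ and a noise term of order $S\sigma_\pi/\sqrt n$, producing inequality~(\ref{equ:losscond}) with $m=2$, $f_1(n)=\sqrt2\,S\sigma_\pi/\sqrt n$ and $f_2(n)=LS^2/n$. Applying Lemma~\ref{lem:sgdloss} then turns this into $\inf_\theta\Pi^{-1}(\tilde\ell(\theta))+\Pi^{-1}(f_1(n))+\Pi^{-1}(f_2(n))$, i.e.\ the asserted right-hand side, and Jensen supplies the left inequality.

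The hard part will not be the reduction or Lemma~\ref{lem:sgdloss}, which carry the conceptual content, but rather instantiating the generic SGD bound so that its error comes out in the \emph{separated} additive form $f_1(n)+f_2(n)$ that Lemma~\ref{lem:sgdloss} consumes: because the lemma passes each additive error term individually through the concave map $\Pi^{-1}$, the smoothness and noise contributions must be kept as distinct additive terms with exactly the constants $\sqrt2\,S\sigma_\pi/\sqrt n$ and $LS^2/n$, and one must verify that the prescribed step size is admissible ($\gamma\le 1/(2L)$) while yielding precisely these constants after balancing the step-size/noise trade-off.
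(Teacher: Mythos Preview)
Your approach is essentially identical to the paper's: recognise that algorithm~(\ref{equ:sgd}) with Bernoulli sampling is projected SGD on the equivalent loss $\tilde\ell$, invoke a standard smooth-convex SGD bound (the paper cites Theorem~6.3 of Bubeck directly rather than unpacking the one-step recursion as you do) to obtain~(\ref{equ:losscond}) with $f_1(n)=\sqrt{2}S\sigma_\pi/\sqrt{n}$ and $f_2(n)=LS^2/n$, and then push through Lemma~\ref{lem:sgdloss}. The only caveat is your aside that one needs $\gamma\le 1/(2L)$: the stated step size need not satisfy this for small $n$, and the paper avoids the issue by deferring entirely to Bubeck's sharper constant-tracking, so when you write it up you should either cite that result or reproduce its argument rather than the cruder $2\gamma(1-L\gamma)\ge\gamma$ step.
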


Note that the bound on the expected loss in Theorem~\ref{thm:convexsmooth} depends on $\pi$ through $\Pi^{-1}$ and $\sigma_{\pi}^2$. Specifically, we have a bound depending on $\pi$ only through $\Pi^{-1}$ by upper bounding $\sigma_{\pi}^2$ with $\sup_{\theta\in \Theta_0}\mathbb{E}[||\nabla_{\theta} \ell(x,y,\theta)||^2]$.

For convergence rates for large values of the number of iterations $n$, the bound in Theorem~\ref{lem:sgdloss} crucially depends on how $\Pi^{-1}(x)$ behaves for small values of $x$. In Table~\ref{sample-table}, we show $\Pi$ and $\Pi^{-1}$ for several examples of sampling probability function $\pi$. For all examples in the table, $\Pi^{-1}(x)$ is sub-linear in $x$ for small $x$. For instance, for absolute error loss sampling under binary cross-entropy loss function, $\pi(x) = 1-e^{-x}$, $\Pi^{-1}(x)$ is approximately $\sqrt{2x}$ for small $x$. For this case, we have the following corollary.

\begin{corollary} \label{cor:exploss} Under assumptions of Theorem~\ref{thm:convexsmooth}, sampling probability $\pi(x) = 1-e^{-x}$, and
$
n\geq \max\left\{\left(\frac{16}{9}\right)^2 2S\sigma_{\pi}^2, \frac{16}{9}LS^2\right\}
$
it holds
$$
\mathbb{E}[\ell(\bar{\theta}_n)] \leq  \mathbb{E}\left[\frac{1}{n}\sum_{t=1}^n \ell(\theta_t)\right]
\leq  \inf_{\theta}\Pi^{-1}(\tilde{\ell}(\theta)) + 2^{5/4} \sqrt{S\sigma_{\pi}}\frac{1}{\sqrt[4]{n}} + 2 \sqrt{L}S \frac{1}{\sqrt{n}}.
$$
\end{corollary}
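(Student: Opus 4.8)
The plan is to specialize the bound of Theorem~\ref{thm:convexsmooth} to $\pi(x) = 1-e^{-x}$, for which the primitive is $\Pi(x) = x + e^{-x} - 1$, and then replace the two error terms $\Pi^{-1}(\sqrt{2}S\sigma_\pi/\sqrt n)$ and $\Pi^{-1}(LS^2/n)$ by explicit powers of $n$. Everything hinges on a single elementary inequality, namely $\Pi^{-1}(y) \le 2\sqrt y$ for all $y$ in a neighbourhood of $0$, which is the quantitative form of the entry ``$\approx \sqrt{2x}$ for small $x$'' in Table~\ref{sample-table}. Since $\Pi$ is strictly increasing on $[0,\infty)$ (as $\Pi' = \pi \ge 0$, vanishing only at $0$), this inequality is equivalent to the lower bound $\Pi(x) \ge x^2/4$ holding for $x$ up to the corresponding threshold.

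First I would prove $\Pi(x) \ge x^2/4$ on $[0, 3/2]$. Set $g(x) = x + e^{-x} - 1 - x^2/4$, so that $g(0) = 0$, $g'(x) = 1 - e^{-x} - x/2$ with $g'(0) = 0$, and $g''(x) = e^{-x} - 1/2$. The main obstacle is that $g''$ is not single-signed on $[0,3/2]$: it is positive on $[0,\ln 2)$ and negative on $(\ln 2, 3/2]$, so a naive convexity argument does not close the interval. Instead I would argue by monotonicity of $g'$: on $[0,\ln 2]$ the derivative $g'$ increases from $g'(0)=0$, hence stays nonnegative, while on $[\ln 2, 3/2]$ it decreases, so it is bounded below by its right endpoint value $g'(3/2) = 1/4 - e^{-3/2} > 0$. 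Thus $g' \ge 0$ throughout $[0,3/2]$, giving $g$ nondecreasing and $g \ge g(0) = 0$, i.e. $\Pi(x) \ge x^2/4$. Reading this in inverse form, $\Pi^{-1}(y) \le 2\sqrt y$ for all $y \le \Pi(3/2)$, and since $\Pi(3/2) \ge (3/2)^2/4 = 9/16$, the bound holds in particular for every $y \le 9/16$.

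It then remains to check that the hypothesis on $n$ forces both arguments of $\Pi^{-1}$ below $9/16$ and to substitute. The condition $n \ge (16/9)LS^2$ gives $LS^2/n \le 9/16$, and the companion condition controlling the variance term gives $\sqrt{2}\,S\sigma_\pi/\sqrt n \le 9/16$; applying $\Pi^{-1}(y)\le 2\sqrt y$ to each argument yields $\Pi^{-1}(\sqrt{2}\,S\sigma_\pi/\sqrt n) \le 2(\sqrt{2}\,S\sigma_\pi)^{1/2} n^{-1/4} = 2^{5/4}\sqrt{S\sigma_\pi}\, n^{-1/4}$ and $\Pi^{-1}(LS^2/n) \le 2\sqrt L\, S\, n^{-1/2}$. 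Feeding these two estimates into Theorem~\ref{thm:convexsmooth}, whose first term $\inf_\theta \Pi^{-1}(\tilde\ell(\theta))$ and whose left Jensen inequality carry over unchanged, produces exactly the claimed bound. I expect the only genuinely delicate point to be the monotonicity bookkeeping for $g'$ past $x = \ln 2$; the remaining steps are arithmetic substitution of the constants.
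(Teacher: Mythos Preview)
Your proposal is correct and follows the same overall route as the paper: invoke Theorem~\ref{thm:convexsmooth}, prove the key estimate $\Pi^{-1}(y)\le 2\sqrt{y}$ for $y\le 9/16=(3/4)^2$, and substitute. The only difference lies in how that elementary estimate is obtained. The paper uses a third-order Taylor remainder, writing $\Pi(x)\ge \tfrac12 x^2\bigl(1-\tfrac{x}{3}\bigr)$ and then noting that $\tfrac12(1-\tfrac{x}{3})\ge \tfrac14$ for $x\le \tfrac32$; this has the minor advantage of giving a one-parameter family $\Pi^{-1}(y)\le \sqrt{y/c}$ on $[0,\,9c(1-2c)^2]$ before specializing to $c=\tfrac14$. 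Your direct monotonicity argument for $g(x)=\Pi(x)-x^2/4$ via the sign change of $g''$ at $\ln 2$ is equally valid, arguably cleaner for the specific constant needed, and lands on exactly the same threshold $9/16$. The concluding substitutions and constants match the paper's verbatim.
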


By using a bound on the expected total loss, we can bound the expected total number of sampled points under certain conditions as follows. 

\begin{lemma} The following bounds hold:
\begin{enumerate}
\item Assume that $\pi$ is a concave function, then
$
\mathbb{E}\left[\sum_{t=1}^n \pi(\ell(x_t,\theta_t))\right] \leq \pi\left(\mathbb{E}\left[\frac{1}{n}\sum_{t=1}^n \ell(\theta_t)\right]\right)n.
$
\item Assume that $\pi$ is $K$-Lipschitz or that for some $K>0$, $\pi(\ell)\leq \min\{K\ell,1\}$ for all $\ell\geq 0$, then
$
\mathbb{E}\left[\sum_{t=1}^n \pi(\ell(x_t,\theta_t))\right] \leq \min\left\{K \mathbb{E}\left[\sum_{t=1}^n \ell(\theta_t)\right],n\right\}. 
$
\end{enumerate}
\end{lemma}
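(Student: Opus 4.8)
The plan is to prove both bounds by reducing the sampled-loss quantities to the population loss $\ell(\theta_t)=\mathbb{E}_x[\ell(x,\theta_t)]$ via the law of total expectation, and then, in the concave case, by applying Jensen's inequality. The key structural observation I would record first is the conditioning setup of algorithm (\ref{equ:sgd}): the iterate $\theta_t$ is a deterministic function of the points $(x_1,y_1),\dots,(x_{t-1},y_{t-1})$ together with the sampling randomness used before step $t$, hence $\theta_t$ is independent of the fresh i.i.d.\ draw $x_t$. Consequently, conditioning on $\theta_t$ and integrating over $x_t$ gives $\mathbb{E}[\ell(x_t,\theta_t)\mid\theta_t]=\mathbb{E}_x[\ell(x,\theta_t)\mid\theta_t]=\ell(\theta_t)$, and likewise $\mathbb{E}[\pi(\ell(x_t,\theta_t))\mid\theta_t]=\mathbb{E}_x[\pi(\ell(x,\theta_t))\mid\theta_t]$. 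This identity is what lets me pass between the sampled quantity $\ell(x_t,\theta_t)$ appearing on the left-hand sides and the population loss $\ell(\theta_t)$ appearing on the right-hand sides.

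For Part 1 (concave $\pi$), I would apply Jensen's inequality three times. First, conditionally on $\theta_t$: since $\pi$ is concave, $\mathbb{E}[\pi(\ell(x_t,\theta_t))\mid\theta_t]\leq\pi(\mathbb{E}[\ell(x_t,\theta_t)\mid\theta_t])=\pi(\ell(\theta_t))$; taking full expectation and summing over $t$ yields $\mathbb{E}[\sum_t\pi(\ell(x_t,\theta_t))]\leq\mathbb{E}[\sum_t\pi(\ell(\theta_t))]$. Second, pathwise over the finite average, concavity gives $\frac{1}{n}\sum_t\pi(\ell(\theta_t))\leq\pi(\frac{1}{n}\sum_t\ell(\theta_t))$. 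Third, applying Jensen once more to the outer expectation gives $\mathbb{E}[\pi(\frac{1}{n}\sum_t\ell(\theta_t))]\leq\pi(\mathbb{E}[\frac{1}{n}\sum_t\ell(\theta_t)])$. Chaining these and multiplying by $n$ produces the claimed bound.

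For Part 2, I would first reduce both hypotheses to the single inequality $\pi(\ell)\leq\min\{K\ell,1\}$ for all $\ell\geq0$. In the Lipschitz case this uses that a sampling probability satisfies $\pi(0)=0$ and $\pi\leq1$, so $\pi(\ell)=\pi(\ell)-\pi(0)\leq K\ell$ and $\pi(\ell)\leq1$; the second hypothesis is exactly this inequality. Pathwise, the elementary facts $\min\{a,b\}\leq a$ and $\min\{a,b\}\leq b$ give $\sum_t\pi(\ell(x_t,\theta_t))\leq\min\{K\sum_t\ell(x_t,\theta_t),\,n\}$. Taking expectations along each branch separately, using $\mathbb{E}[\ell(x_t,\theta_t)]=\mathbb{E}[\ell(\theta_t)]$ from the conditioning identity for the first branch and the trivial bound $n$ for the second, and then combining via the fact that $\mathbb{E}[A]\leq B$ and $\mathbb{E}[A]\leq C$ imply $\mathbb{E}[A]\leq\min\{B,C\}$, yields the stated bound.

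Both parts are short once the conditioning is in place; the only real care point, and what I would treat as the main (if modest) obstacle, is making the independence of $x_t$ from $\theta_t$ precise so that the tower property legitimately replaces $\ell(x_t,\theta_t)$ by $\ell(\theta_t)$, and, for Part 1, applying Jensen in the correct order (conditionally first to handle the random sample point, then to the empirical average, then to the outer expectation), since $\pi$ is only concave and the three expectations are over different sources of randomness.
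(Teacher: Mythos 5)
Your proof is correct and follows essentially the same route the paper takes for the analogous sample-count bound in its proof of Theorem~\ref{thm:squaredhinge}: Jensen's inequality for the concave case (the paper applies it in a single step over the joint measure on the time index and the algorithm's randomness, where you split it into three conditional applications, which is equivalent) together with the tower property $\mathbb{E}[\ell(x_t,\theta_t)]=\mathbb{E}[\ell(\theta_t)]$, and elementary pointwise bounds for the Lipschitz/min case. The one point worth recording is that your reduction of the $K$-Lipschitz hypothesis to $\pi(\ell)\leq\min\{K\ell,1\}$ requires $\pi(0)=0$, which the lemma does not state explicitly but which is genuinely necessary for that branch of the bound (a constant sampling probability is $0$-Lipschitz yet violates it), so making that assumption explicit is the right call rather than a gap.
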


We remark that $\pi$ is a concave function for all examples in Table~\ref{sample-table} without any additional conditions, except for $\pi(\ell) = \min\{(\ell/b)^a,1\}$ which is concave under assumption $0<a\leq 1$. We remark also that for every example in Table~\ref{sample-table} except the last one, $\pi(\ell)\leq \min\{K\ell,1\}$ for some $K>0$. Hence, for all examples in Table~\ref{sample-table}, we have a bound for the expected number of sampled points provided we have a bound for the expected loss. 

\subsection{Adaptive-Weight Sampling}
\label{sec:polyak}

In this section we propose the \emph{Adaptive-Weight Sampling (AWS)} algorithm that combines Bernoulli sampling and an adaptive SGD update, and provide a convergence rate guarantee. The algorithm is defined by (\ref{equ:sgd}) with the stochastic step size $z_t$ being a binary random variable that takes value $\gamma_t:=\zeta(x_t,y_t,\theta_t)/\pi(x_t,y_t,\theta_t)$ with probability $\pi(x_t,y_t,\theta_t)$ and takes value $0$ otherwise, where $\pi$ is some sampling probability function. Here, $\zeta(x,y,\theta)$ is the expected SGD (\ref{equ:sgd}) step size, defined as
$$
\zeta(x,y,\theta) = \beta
 \min\left\{ \frac{1}{\psi(x,y,\theta)}, \rho\right\}
$$
whenever $||\nabla_\theta \ell(x,y,\theta)|| > 0$ and $\zeta(x,y,\theta)=0$ otherwise, for constants $\beta,\rho > 0$, where
$$
\psi(x,y,\theta):= \frac{||\nabla_\theta \ell(x,y,\theta)||^2}{\ell(x,y,\theta)-\inf_{\theta'}\ell(x,y,\theta')}.
$$

The expected step size $\zeta(x,y,\theta)$ corresponds to the stochastic Polyak's step size used by a gradient descent algorithm proposed by \citep{polyak}, which is accommodated as a special case when $\pi(x,y,\theta)=1$ for all $x,y,\theta$. AWS introduces a sampling component and re-weighting of the update to ensure that the step size remains according to the stochastic Polyak's step size in expectation. For many loss functions, $\inf_{\theta'}\ell(x,y,\theta) = 0$, for every $x,y$. In these cases, $\psi(x,y,\theta) = ||\nabla_\theta \ell(x,y,\theta)||^2/\ell(x,y,\theta)$. For instance, for binary cross-entropy loss function, $\inf_{\theta'}\ell(x,y,\theta) = \inf_{\theta'}(-\log(\sigma(yx^\top \theta'))) = 0$, for all $x,y$.

We next show a convergence rate guarantee for AWS.

\begin{theorem} \label{thm:polyak} 
Assume that $\ell$ is a convex, $L$-smooth function, there exists $\Lambda^*$ such that $\mathbb{E}[\ell(x,y,\theta^*)]-\mathbb{E}[\inf_{\theta}\ell(x,y,\theta)] \leq \Lambda^*$, and the sampling probability function $\pi$ is such that, for some constant $c\in (0,1)$, for all $x,y,\theta$ such that $||\nabla_\theta \ell(x,y,\theta)||>0$,
\begin{equation}
\pi(x,y,\theta) \geq \frac{\beta}{2(1-c)}\min\left\{ \rho \psi(x,y,\theta),1\right\}.
\label{equ:picond}
\end{equation}
Then, we have
\begin{eqnarray*}
&& \mathbb{E}\left[\frac{1}{n}\sum_{t=1}^n(\ell(x_t,y_t,\theta_t)-\ell(x_t,y_t,\theta^*_t))\right] 
 \leq  
\frac{\rho\beta}{c\kappa}\Lambda^* + \frac{1}{2c\kappa} ||\theta_1-\theta^*||^2\frac{1}{n}
\end{eqnarray*}
where $\kappa= \beta\min\{1/(2L),\rho\}$ and $\theta_t^*$ is a minimizer of $\ell(x_t,y_t,\theta')$ over $\theta'$.
\end{theorem}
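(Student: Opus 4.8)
The plan is to analyze the AWS iterates as a standard projected-SGD descent and track the expected squared distance to the optimum $\theta^*$, exploiting that the step size equals the stochastic Polyak step size \emph{in expectation}. First I would write the one-step distance recursion. Since $z_t$ takes value $\gamma_t = \zeta(x_t,y_t,\theta_t)/\pi(x_t,y_t,\theta_t)$ with probability $\pi(x_t,y_t,\theta_t)$ and $0$ otherwise, conditioning on $\theta_t$ gives $\mathbb{E}[z_t\mid \theta_t,x_t,y_t] = \zeta(x_t,y_t,\theta_t)$ and $\mathbb{E}[z_t^2\mid\cdots] = \zeta(x_t,y_t,\theta_t)^2/\pi(x_t,y_t,\theta_t)$. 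Expanding $\|\theta_{t+1}-\theta^*\|^2$ (using that projection onto the convex set $\Theta_0$ is non-expansive) and taking the conditional expectation over the Bernoulli sampling yields, up to the projection inequality,
\begin{equation*}
\mathbb{E}\big[\|\theta_{t+1}-\theta^*\|^2\mid\theta_t\big] \leq \|\theta_t-\theta^*\|^2 - 2\,\mathbb{E}\big[\zeta\,\nabla_\theta\ell^\top(\theta_t-\theta^*)\big] + \mathbb{E}\Big[\tfrac{\zeta^2}{\pi}\|\nabla_\theta\ell\|^2\Big],
\end{equation*}
where $\zeta,\pi,\ell$ are all evaluated at $(x_t,y_t,\theta_t)$ and the outer expectation is over the freshly drawn data point.

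Next I would bound the two error terms pointwise. For the cross term, convexity gives $\nabla_\theta\ell^\top(\theta_t-\theta^*) \geq \ell(x_t,y_t,\theta_t)-\ell(x_t,y_t,\theta^*)$; but since we want to compare against the pointwise minimizer $\theta_t^*$, I would instead split as $\ell(\theta_t)-\ell(\theta^*) = (\ell(\theta_t)-\ell(\theta_t^*)) + (\ell(\theta_t^*)-\ell(\theta^*))$, with the second bracket bounded in expectation by $\Lambda^*$ (after noting $\inf_{\theta'}\ell = \ell(\theta_t^*)$). For the quadratic term, the key algebraic fact is that on the event $\rho\psi\leq 1$ the step is $\zeta=\beta/\psi$, so $\zeta^2\|\nabla\ell\|^2 = \beta^2(\ell-\inf\ell)^2/\|\nabla\ell\|^2 \cdot \psi = \beta^2(\ell-\inf\ell)/\psi\cdot(\ell-\inf\ell)$; more directly, $\zeta\,(\ell-\inf\ell) \geq \kappa(\ell-\inf\ell)$ with $\kappa=\beta\min\{1/(2L),\rho\}$ using $L$-smoothness in the form $\|\nabla\ell\|^2\leq 2L(\ell-\inf\ell)$, which converts $1/\psi$ into a factor at least $1/(2L)$. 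The sampling condition (\ref{equ:picond}) is exactly what is needed to control $\zeta^2/\pi$: since $\zeta=\beta\min\{1/\psi,\rho\} = \beta\min\{\rho\psi,1\}/\psi \cdot(\text{something})$, dividing by the lower bound $\pi\geq \frac{\beta}{2(1-c)}\min\{\rho\psi,1\}$ cancels one factor of $\beta\min\{\rho\psi,1\}$ and leaves $\zeta^2/\pi \leq 2(1-c)\,\zeta\cdot\frac{1}{\psi}$-type quantity, ultimately giving $\mathbb{E}[\zeta^2\|\nabla\ell\|^2/\pi] \leq 2(1-c)\,\mathbb{E}[\zeta(\ell-\inf\ell)]$.

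Combining, the cross term contributes $-2\mathbb{E}[\zeta(\ell-\inf\ell)]$ (after absorbing the $\Lambda^*$ slack) and the quadratic term contributes $+2(1-c)\mathbb{E}[\zeta(\ell-\inf\ell)]$, so the net descent is $-2c\,\mathbb{E}[\zeta(\ell-\inf\ell)] \leq -2c\kappa\,\mathbb{E}[\ell(\theta_t)-\ell(\theta_t^*)]$, the factor $c$ being precisely the slack reserved in the sampling condition and $\kappa$ coming from the smoothness lower bound on $\zeta$. This produces
\begin{equation*}
2c\kappa\,\mathbb{E}[\ell(x_t,y_t,\theta_t)-\ell(x_t,y_t,\theta_t^*)] \leq \mathbb{E}[\|\theta_t-\theta^*\|^2] - \mathbb{E}[\|\theta_{t+1}-\theta^*\|^2] + 2\rho\beta\Lambda^*,
\end{equation*}
where the additive $\rho\beta\Lambda^*$ collects the $\ell(\theta_t^*)-\ell(\theta^*)$ slack weighted by the maximal step $\zeta\leq\rho\beta$. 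Finally I would sum over $t=1,\dots,n$, telescope the distance terms (dropping the nonnegative terminal $\|\theta_{n+1}-\theta^*\|^2$), and divide by $2c\kappa n$ to obtain the stated bound with leading constant $\rho\beta\Lambda^*/(c\kappa)$ and transient term $\|\theta_1-\theta^*\|^2/(2c\kappa n)$.

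The main obstacle I anticipate is the quadratic-term manipulation: correctly tracking the $\min\{1/\psi,\rho\}$ truncation through the two regimes ($\rho\psi\leq 1$ versus $\rho\psi>1$) so that the sampling lower bound (\ref{equ:picond}) cancels the right power of $\beta\min\{\rho\psi,1\}$, while simultaneously invoking $L$-smoothness to replace $1/\psi$ by the constant $\kappa/\beta$. Getting the constants $c$, $\kappa$, and the $\Lambda^*$ coefficient to line up exactly requires care, since the descent must retain a strictly positive multiple ($2c$) of the useful term after the quadratic error is subtracted; this is where the precise form of condition (\ref{equ:picond}), with its $1/(2(1-c))$ factor, is indispensable.
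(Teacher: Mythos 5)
Your proposal is correct and follows essentially the same route as the paper's proof: the same one-step expansion of $\|\theta_{t+1}-\theta^*\|^2$ with $\mathbb{E}[z_t]=\zeta$ and $\mathbb{E}[z_t^2]=\zeta^2/\pi$, the same use of convexity plus the split through $\theta_t^*$ to isolate the $\Lambda^*$ slack weighted by $\zeta\leq\rho\beta$, the same cancellation $\zeta\psi/\pi\leq 2(1-c)$ from condition (\ref{equ:picond}) to leave a net descent $-2c\,\zeta(\ell-\inf\ell)\leq -2c\kappa(\ell(\theta_t)-\ell(\theta_t^*))$ via $\zeta\geq\kappa$ from $L$-smoothness, and the same telescoping. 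The constants and the roles of $c$, $\kappa$, and $\rho\beta$ all line up with the paper's argument.
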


The bound on the expected average loss in Theorem~\ref{thm:polyak} boils down to $\Lambda^*/c + (L/(c \beta))||\theta_1-\theta^*||^2/n$ by taking $\rho = 1 / (2L)$. Notably, under the condition on the sampling probability in Theorem~\ref{thm:polyak}, the convergence rate is of order $O(1/n)$. A similar bound is known to hold for SGD with adaptive stochastic Polyak step size for the finite-sum problem, as seen in Theorem~3.4 of \cite{polyak}. A difference is that Theorem~\ref{thm:polyak} allows for sampling of the points.

\paragraph{Loss and Uncertainty-based Sampling for Linear Binary Classifiers} We consider linear binary classifiers, focusing particularly on logistic regression and the binary cross-entropy training loss function. The following corollaries of Theorem~\ref{thm:polyak} hold for sampling proportional to absolute error loss and an uncertainty-based sampling probability function, respectively.

\begin{corollary}\label{cor:polyak-absloss} For sampling proportional to absolute error loss, $\pi(u) = \omega (1-\sigma(u))$, with $\beta /(4(1-c)L')\leq \omega \leq 1$ and $\rho = 1/(2L)$, the bound on the expected loss in Theorem~\ref{thm:polyak} holds.
\end{corollary}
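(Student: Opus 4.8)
The plan is to show that the stated sampling probability satisfies condition~(\ref{equ:picond}) of Theorem~\ref{thm:polyak}; the convergence bound then follows immediately by applying that theorem with $\rho = 1/(2L)$. Throughout I write $u = yx^\top\theta$, so that the binary cross-entropy loss is $\ell(x,y,\theta) = -\log\sigma(u)$ with $\inf_{\theta'}\ell(x,y,\theta') = 0$, and the sampling probability is $\pi(u) = \omega(1-\sigma(u))$. Since $1-\sigma(u)\in[0,1]$ and $\omega\le 1$, this is a valid probability, and I only need to verify~(\ref{equ:picond}) where $\|\nabla_\theta\ell\|>0$, i.e.\ for finite $u$ and $x\neq 0$.

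First I would compute $\psi$ explicitly for this loss. Using the logistic identity $\sigma'(u) = \sigma(u)(1-\sigma(u))$, the chain rule gives $\nabla_\theta\ell(x,y,\theta) = -(1-\sigma(u))\,yx$, hence $\|\nabla_\theta\ell(x,y,\theta)\|^2 = (1-\sigma(u))^2\|x\|^2$ since $y^2=1$. Therefore
$$
\psi(x,y,\theta) = \frac{(1-\sigma(u))^2\|x\|^2}{-\log\sigma(u)}.
$$

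The crux of the argument is to bound $\psi$ above by a multiple of $1-\sigma(u)$. For this I would invoke the elementary inequality $\log p \le p-1$ for $p>0$, applied with $p=\sigma(u)$, which yields $-\log\sigma(u)\ge 1-\sigma(u)$. Substituting into the denominator gives $\psi(x,y,\theta) \le (1-\sigma(u))\|x\|^2$. I would then relate the two smoothness constants: the scalar loss $u\mapsto -\log\sigma(u)$ has second derivative $\sigma(u)(1-\sigma(u))\le 1/4$, and its Hessian in $\theta$ equals $\sigma(u)(1-\sigma(u))\,xx^\top$, so the per-point smoothness constant satisfies $L_{x,y}=L'\|x\|^2$ and thus $L = L' R^2 \ge L'\|x\|^2$ under $\|x\|\le R$, where $L'$ is the smoothness constant of the scalar loss. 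With $\rho = 1/(2L)$ this gives $\rho\,\psi(x,y,\theta)\le \frac{\|x\|^2}{2L}(1-\sigma(u))\le \frac{1-\sigma(u)}{2L'}$. Using $\min\{\rho\psi,1\}\le \rho\psi$, I conclude
$$
\frac{\beta}{2(1-c)}\min\{\rho\,\psi(x,y,\theta),1\} \le \frac{\beta}{4(1-c)L'}(1-\sigma(u)) \le \omega(1-\sigma(u)) = \pi(u),
$$
where the last inequality is exactly the assumed lower bound $\omega \ge \beta/(4(1-c)L')$. This establishes~(\ref{equ:picond}), and the claim follows from Theorem~\ref{thm:polyak}.

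I expect the only genuine subtlety to be the bookkeeping around the two smoothness constants and confirming the relation $L = L'R^2$ (i.e.\ that the $\|x\|^2$ factor produced by bounding $\psi$ is precisely absorbed by $\rho=1/(2L)$); everything else reduces to the single logarithmic inequality $-\log\sigma(u)\ge 1-\sigma(u)$ and the observation that the condition need only hold on the set where the gradient is nonzero.
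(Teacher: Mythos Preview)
Your proof is correct and follows essentially the same route as the paper: both reduce condition~(\ref{equ:picond}) to the single logarithmic inequality $-\log\sigma(u)\ge 1-\sigma(u)$ (the paper states it as $h(u)\le 1-\sigma(u)$ for $h(u)=\ell'(u)^2/\ell(u)$ and proves it via $\log(1-z)\le -z$, which is the same inequality), and both absorb the $\|x\|^2$ factor via $L=L'R^2$ so that $\rho R^2=1/(2L')$. The only difference is packaging: the paper isolates the inequality as a separate lemma about $h$, whereas you prove it inline.
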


\begin{corollary}\label{cor:polyak-uncertainty} For the uncertainty-based sampling according to
$$
\pi(u) = \frac{\beta}{2(1-c)}\min\left\{\rho R^2 \frac{1}{H(a) + (1-a)|u|},1\right\}
$$
where $a\in (0,1/2]$ and $H(a) = a\log(1/a) + (1-a)\log(1/(1-a))$, the bound on the expected loss in Theorem~\ref{thm:polyak} holds. 
\end{corollary}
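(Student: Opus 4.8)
The plan is to verify that the stated sampling probability $\pi$ satisfies the sufficient condition~(\ref{equ:picond}) of Theorem~\ref{thm:polyak}; once this holds, the asserted loss bound is immediate, since for logistic regression with the binary cross-entropy loss the function $\ell$ is convex and $L$-smooth on the bounded domain ($\|x\|\le R$), $\Lambda^*$ exists, and the prescribed $\pi$ is a valid probability under the standing assumptions on the constants. I would begin by specializing every quantity in~(\ref{equ:picond}) to this model. Writing $u=yx^\top\theta$ and $\ell(u)=-\log\sigma(u)$, the text already records $\inf_{\theta'}\ell=0$, and a direct computation gives $\nabla_\theta\ell=-(1-\sigma(u))\,yx$, so that $\psi(x,y,\theta)=(1-\sigma(u))^2\|x\|^2/(-\log\sigma(u))$. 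Using $\|x\|\le R$ yields $\psi\le R^2(1-\sigma(u))^2/(-\log\sigma(u))$.

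Next I would reduce~(\ref{equ:picond}) to a one-dimensional inequality in $u$. Both the definition of $\pi$ and the right-hand side of~(\ref{equ:picond}) carry the same prefactor $\beta/(2(1-c))$ and the same clamp $\min\{\cdot,1\}$, so by monotonicity of $t\mapsto\min\{t,1\}$ it suffices to show $\rho R^2/(H(a)+(1-a)|u|)\ge\rho\psi$. After cancelling $\rho$ and invoking the bound on $\psi$ above, this follows from the scalar inequality
\[
\frac{-\log\sigma(u)}{(1-\sigma(u))^2}\;\ge\;H(a)+(1-a)|u|,
\]
which I would aim to establish for every $u\in\mathbb{R}$ and every $a\in(0,1/2]$.

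The crux, and the step I expect to be the main obstacle, is to decouple $a$ from $u$ on the right-hand side. I would first show that $\max_{a\in(0,1/2]}\{H(a)+(1-a)|u|\}=\log(1+e^{|u|})$: the objective is concave in $a$ with derivative $\log((1-a)/a)-|u|$, so its maximizer is $a^\star=1/(1+e^{|u|})\in(0,1/2]$, and substituting $a^\star$ together with the identities $\log(1/a^\star)=\log(1+e^{|u|})$ and $\log(1/(1-a^\star))=\log(1+e^{|u|})-|u|$ collapses the expression $H(a^\star)+(1-a^\star)|u|$ exactly to $\log(1+e^{|u|})$. This conjugate-type identity gives $H(a)+(1-a)|u|\le\log(1+e^{|u|})$ uniformly over admissible $a$, reducing the task to the $a$-free inequality $-\log\sigma(u)/(1-\sigma(u))^2\ge\log(1+e^{|u|})$.

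Finally I would verify this $a$-free inequality by splitting on the sign of $u$, using $1-\sigma(u)=1/(1+e^u)$ throughout. For $u\le 0$, writing $t=-u$ gives $-\log\sigma(u)/(1-\sigma(u))^2=\log(1+e^{t})(1+e^{-t})^2\ge\log(1+e^{t})=\log(1+e^{|u|})$, since $(1+e^{-t})^2\ge1$. For $u\ge 0$, setting $s=e^u$ and using $\log(1+1/s)\ge(1/s)/(1+1/s)=1/(1+s)$ gives $-\log\sigma(u)/(1-\sigma(u))^2=\log(1+1/s)(1+s)^2\ge 1+s\ge\log(1+s)=\log(1+e^{|u|})$. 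Both cases are elementary, so the displayed inequality holds for all $u$ and all $a\in(0,1/2]$, the reduction chain closes, condition~(\ref{equ:picond}) is satisfied, and Theorem~\ref{thm:polyak} yields the corollary. The only nonroutine ingredient is the observation that the tent function $H(a)+(1-a)|u|$ has $a$-maximum exactly $\log(1+e^{|u|})$, which is what makes the remaining inequality trivial.
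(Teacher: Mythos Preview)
Your argument is correct and follows the same high-level structure as the paper: specialize to logistic regression, reduce condition~(\ref{equ:picond}) via $\psi(x,y,\theta)=h(u)\|x\|^2\le R^2 h(u)$ with $h(u)=(1-\sigma(u))^2/(-\log\sigma(u))$, and then verify the scalar bound $h(u)\le 1/(H(a)+(1-a)|u|)$. The difference lies in how you establish this last inequality. The paper (Lemma~\ref{lem:hub2}) keeps $a$ fixed throughout: for $u\le 0$ it uses the tangent line to the convex map $u\mapsto\log(1+e^{-u})$ at the point corresponding to $a$, and for $u\ge 0$ it proves the symmetry-type estimate $h(u)\le h(-u)$ via a somewhat delicate analysis of $f(u)=(1-e^{-2u})\log(1+e^u)-u$ and its derivative. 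You instead first eliminate $a$ by the conjugate identity $\max_{a\in(0,1/2]}\{H(a)+(1-a)|u|\}=\log(1+e^{|u|})$, which is precisely the dual form of the paper's tangent bound, and then dispatch the $a$-free inequality $1/h(u)\ge\log(1+e^{|u|})$ with two one-line estimates. In particular, your $u\ge 0$ step is essentially the content of Lemma~\ref{lem:hub1} ($h(u)\le 1/(1+e^u)$) followed by $1+e^u\ge\log(1+e^u)$, which is shorter and more transparent than the paper's $h(u)\le h(-u)$ route. The paper's organization, on the other hand, makes the origin of the family $H(a)+(1-a)|u|$ (as tangent lines to the softplus) more visible.
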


\paragraph{Other Cases} For a constant sampling probability function with a value of at least $\kappa/(2(1-c))$, condition (\ref{equ:picond}) holds when $\kappa 
\leq 2 (1-c)$.  When $\pi(x,y,\theta) = \zeta(x,y,\theta)^\eta$, where $\eta \geq 0$ and $\rho\beta \in (0,1]$, condition (\ref{equ:picond}) holds under  $\beta^{1-\eta} \leq 2(1-c) (1/(2L))^\eta$, as shown in Appendix~\ref{sec:power}. Condition (\ref{equ:picond}) also holds for an uncertainty-based sampling in multi-class classification, as shown in Appendix~\ref{sec:multipolyak}.

\begin{figure}[t]
\centering
\begin{subfigure}{0.45\textwidth}
  \centering
  \includegraphics[width=\linewidth]{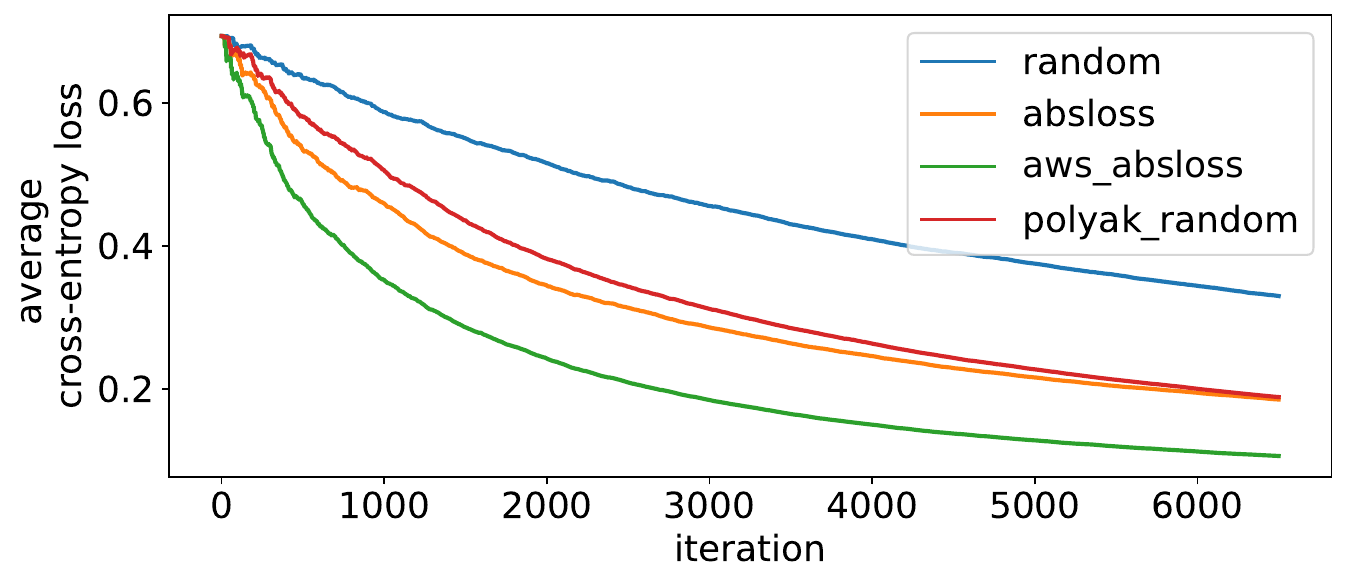}
  \caption{Mushrooms dataset}
  \label{fig:sub0}
\end{subfigure}%
\begin{subfigure}{0.45\textwidth}
  \centering
  \includegraphics[width=\linewidth]{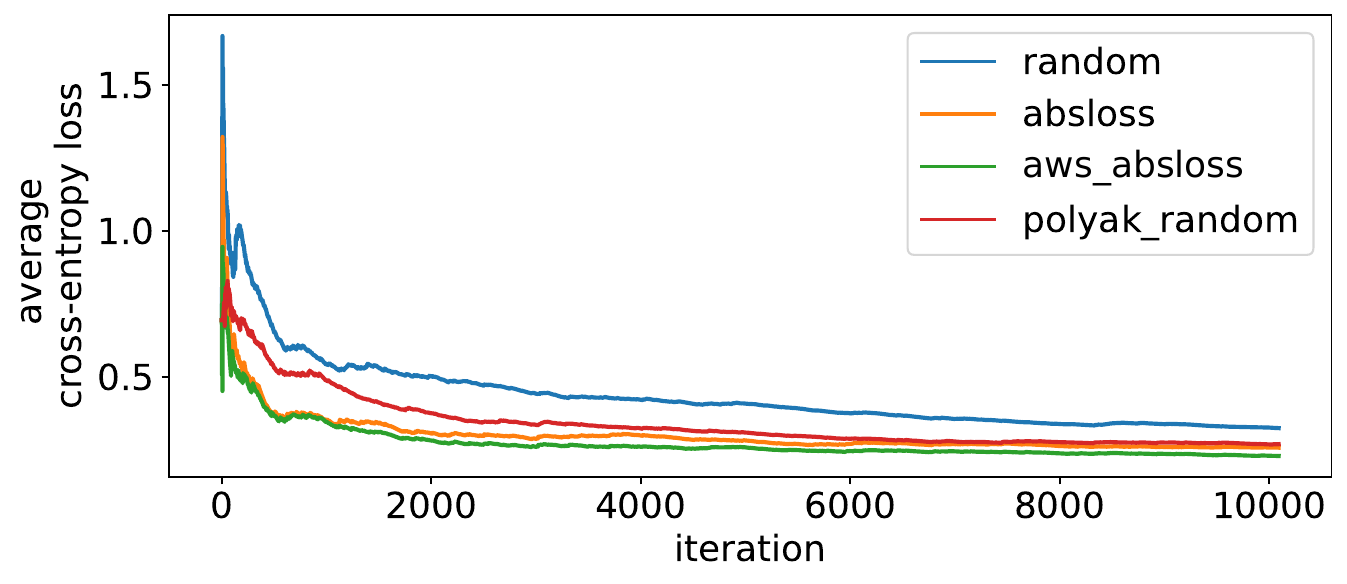}
  \caption{MNIST binary classification of 3s vs 5s}
  \label{fig:sub1}
\end{subfigure}
\begin{subfigure}{0.45\textwidth}
  \centering
  \includegraphics[width=\linewidth]{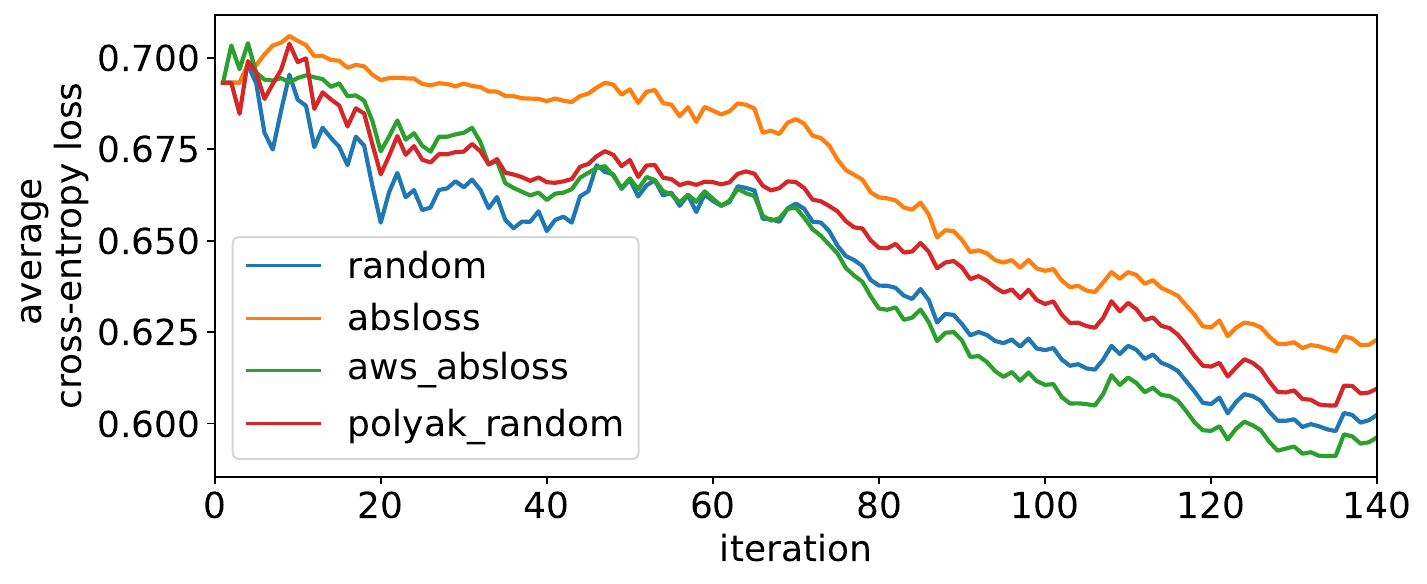}
  \caption{Parkinsons dataset}
  \label{fig:sub2}
\end{subfigure}%
\begin{subfigure}{0.45\textwidth}
  \centering
  \includegraphics[width=\linewidth]{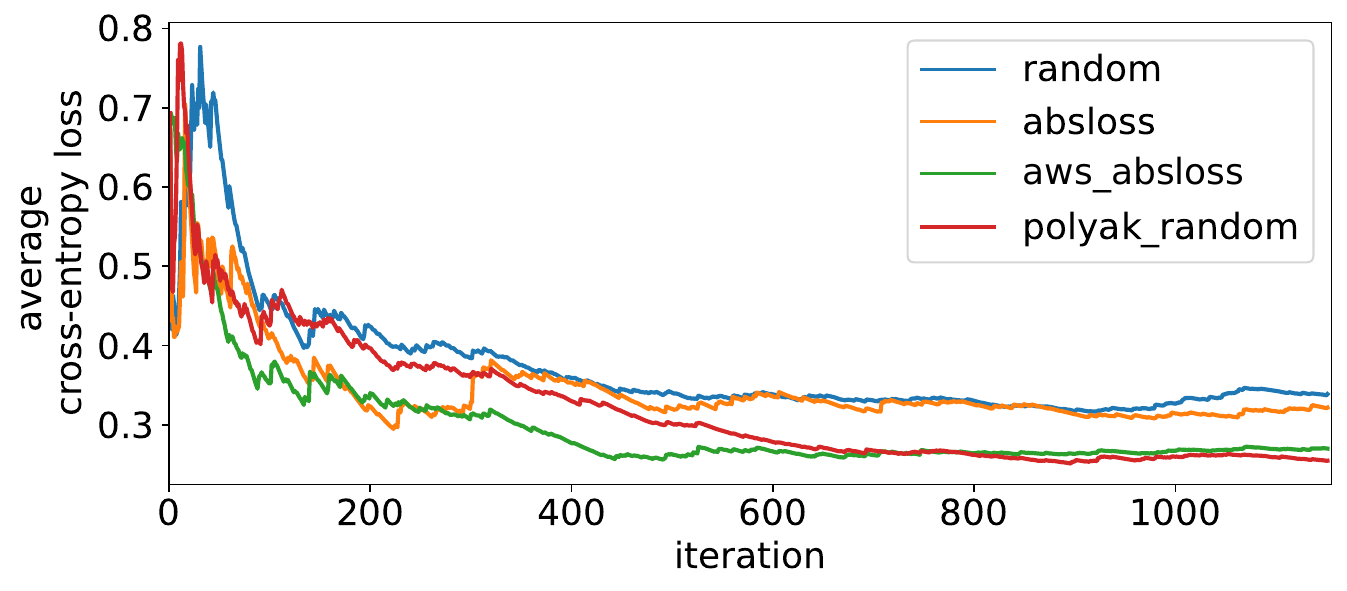}
  \caption{Splice dataset}
  \label{fig:sub3}
\end{subfigure}
\begin{subfigure}{0.45\textwidth}
  \centering
  \includegraphics[width=\linewidth]{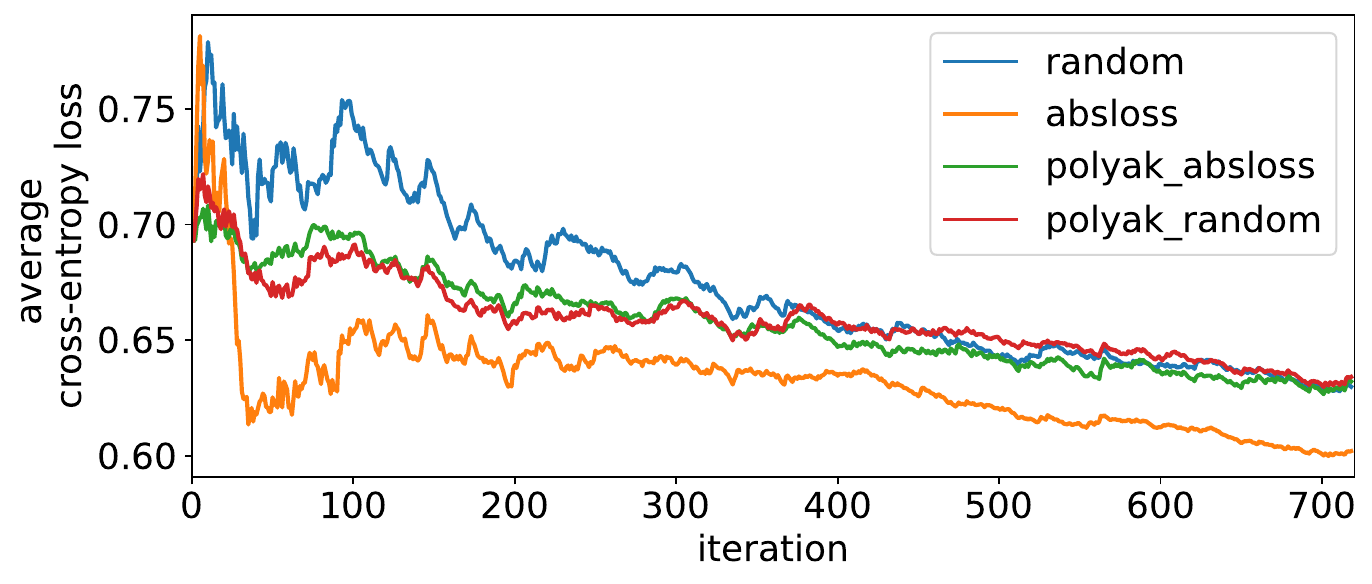}
  \caption{Tictactoe dataset}
  \label{fig:sub4}
\end{subfigure}%
\begin{subfigure}{0.45\textwidth}
  \centering
  \includegraphics[width=\linewidth]{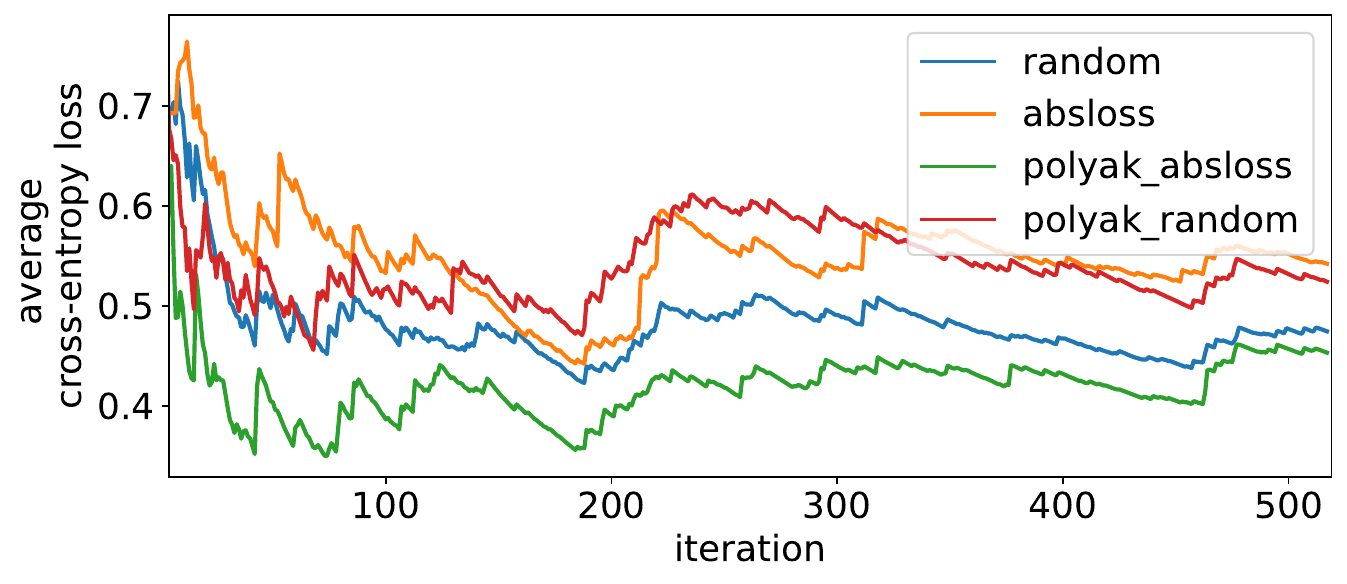}
  \caption{Credit dataset}
  \label{fig:sub5}
\end{subfigure}
\caption{Convergence in terms of average cross-entropy progressive loss of random sampling, loss-based sampling based on the absolute error loss, and our proposed algorithm (loss-based sampling with stochastic Polyak’s step size). Our proposed algorithm outperforms the baselines in most cases.}
\label{fig:polyak_absloss_random}
\end{figure}
\section{Numerical Results}
\label{sec:numerical_results}

In this section we evaluate our AWS algorithm, defined in Section~\ref{sec:polyak}. In particular, we focus on an instance of AWS with  stochastic Polyak’s expected step size for logistic regression and the loss-based sampling proportional to absolute error loss, which we refer to as \emph{Adaptive-Weight Sampling - Polyak Absloss (AWS-PA)}. By, Corollary~\ref{cor:polyak-absloss}, AWS-PA converges according to Theorem~\ref{thm:polyak}. Here we demonstrate convergence on real-world datasets and compare with other algorithms. 

The implementation of AWS-PA algorithm along with all the other code run the experimental setup that is described in this section is available at \url{https://www.github.com/facebookresearch/AdaptiveWeightSampling}.

We use a modified version of the \emph{mushroom} binary classification dataset~\citep{chang2011libsvm} that was used by \cite{polyak} for evaluation of their algorithm. This modification uses RBF kernel features, resulting in a linearly separable dataset for a linear classifier like logistic regression.
Furthermore, we include five datasets that we selected at random from the 44 real-world datasets that were used in \cite{yang2018benchmark}, a benchmark study of active learning for logistic regression: \emph{MNIST 3 vs 5}~\cite{lecun1998gradient}, \emph{parkinsons}~\cite{little2007exploiting}, \emph{splice}~\cite{noordewier1990training}, \emph{tictactoe}~\cite{misc_tic-tac-toe_endgame_101}, and \emph{credit}~\cite{quinlan1987simplifying}. 
While these datasets are not necessarily linearly separable, \cite{yang2018benchmark} has shown that logistic regression achieves a good quality-of-fit. 

In our evaluation, we deliberately confine the training to a single epoch.
Throughout this epoch, we sequentially process each data instance, compute the loss for each individual instance, and subsequently update the model's weights. 
This approach, known as \emph{progressive validation}~\citep{blum1999}, enables us to monitor the evolution of the average loss. The constraint to a single epoch ensures that we calculate losses only for instances that haven't influenced model weights. For each sampling scheme, we conduct a hyper-parameter sweep to minimize the average progressive loss and apply a procedure to ensure that all algorithms sample comparable numbers of instances. In~Appendix~\ref{sec:experimental_details} we include further details on this procedure, the hyper-parameter tuning, and other aspects of the experimental setup.

Figure~\ref{fig:polyak_absloss_random} demonstrates that AWS-PA leads to faster convergence than the traditional loss-based sampling with a constant step size (akin to \cite{Yoo_2019_CVPR}). It also shows that the traditional loss-based sampling approach converges more rapidly than random sampling on five of the six datasets. These results are obtained under a hyper-parameter tuning such that different algorithms have comparable data sampling rates. We provide additional experimental results that demonstrate the efficiency of AWS-PA in Appendix~\ref{sec:additional_experiments}.

\begin{figure}[t!]
\centering
\begin{subfigure}{0.45\textwidth}
  \centering
  \includegraphics[width=\linewidth]{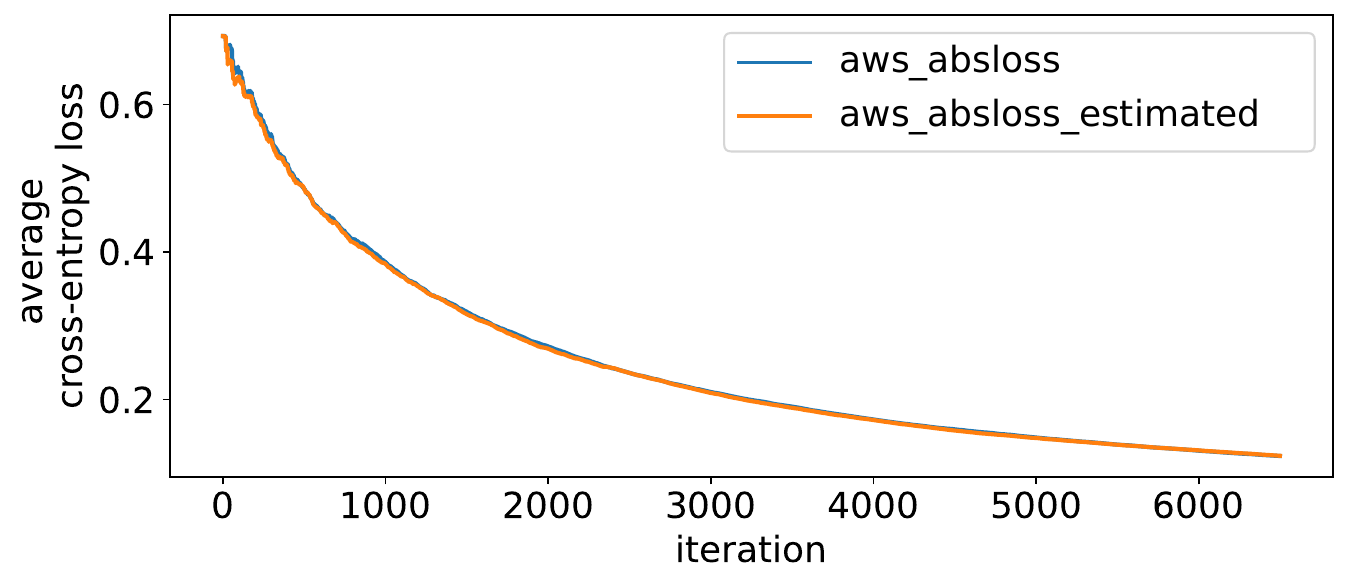}
  \caption{Mushrooms dataset}
\end{subfigure}%
\begin{subfigure}{0.45\textwidth}
  \centering
  \includegraphics[width=\linewidth]{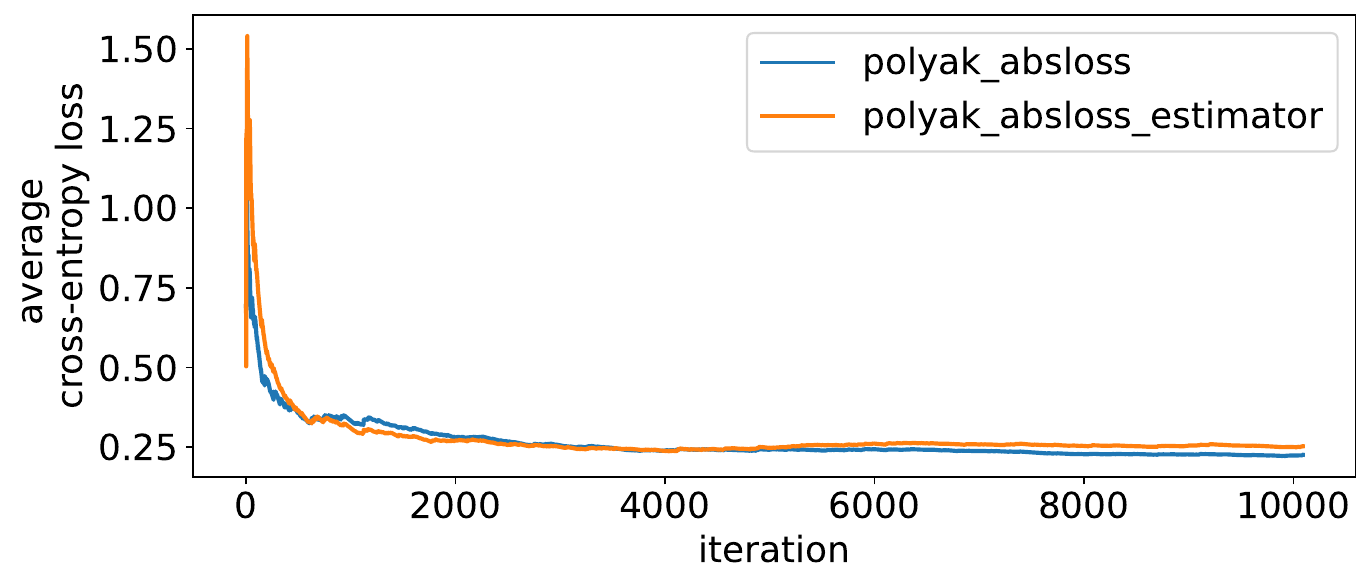}
  \caption{MNIST binary classification of 3s vs 5s}
\end{subfigure}
\begin{subfigure}{0.45\textwidth}
  \centering
  \includegraphics[width=\linewidth]{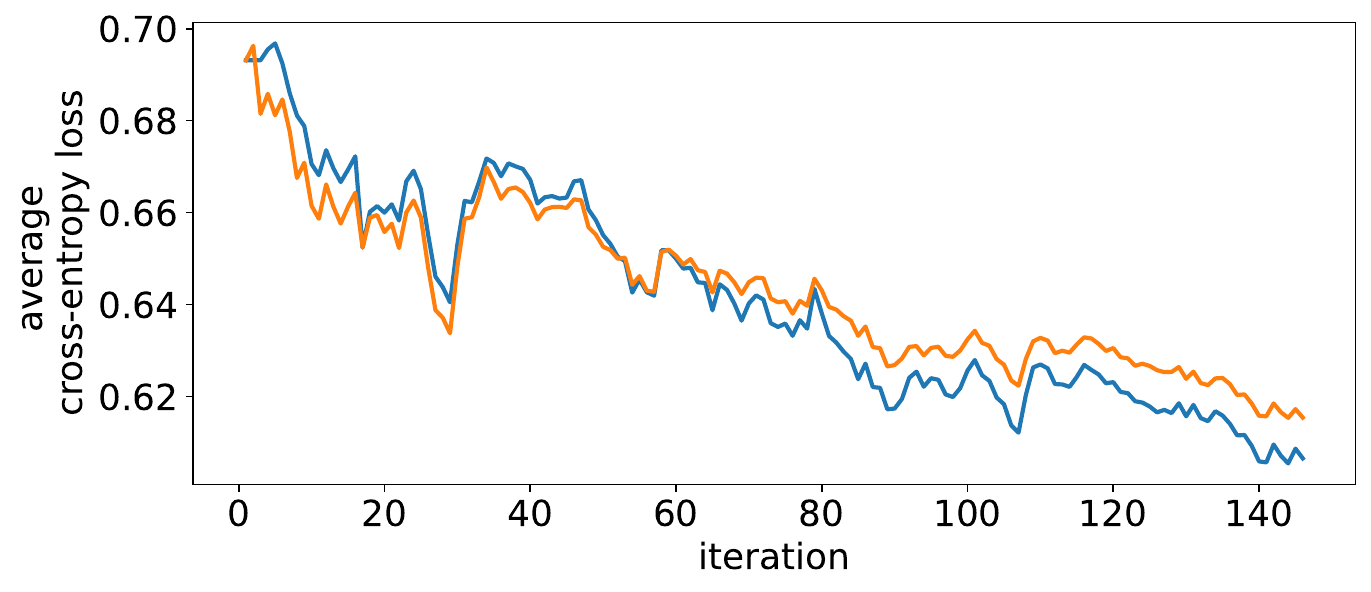}
  \caption{Parkinsons dataset}
\end{subfigure}%
\begin{subfigure}{0.45\textwidth}
  \centering
  \includegraphics[width=\linewidth]{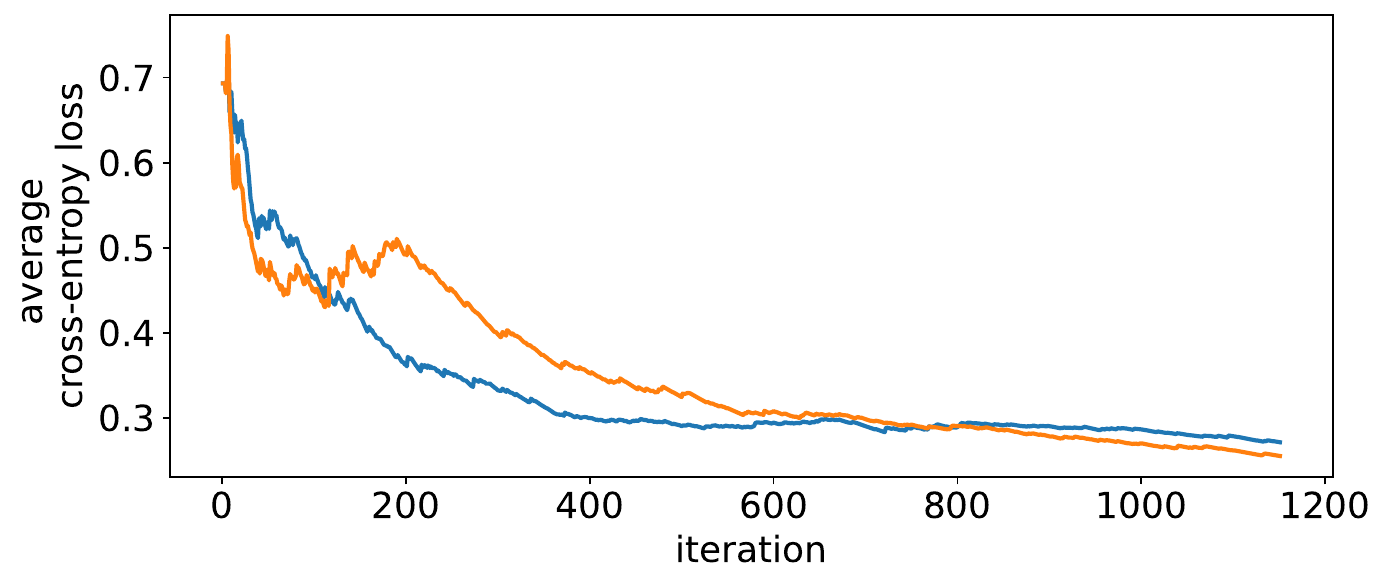}
  \caption{Splice dataset}
\end{subfigure}
\begin{subfigure}{0.45\textwidth}
  \centering
  \includegraphics[width=\linewidth]{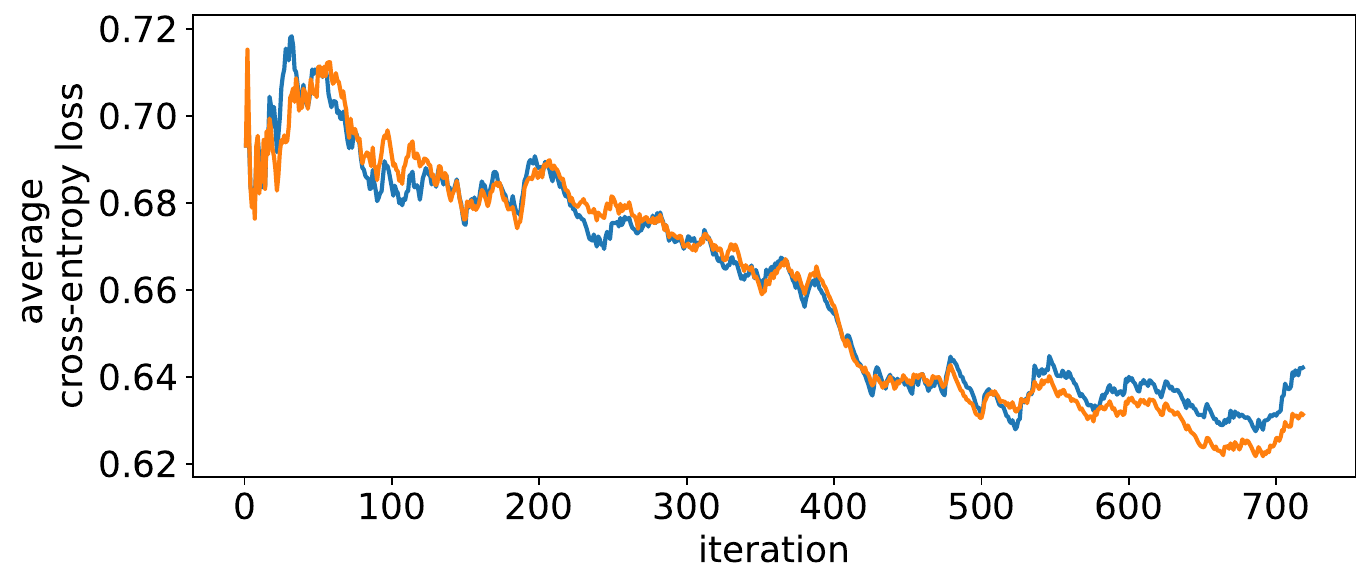}
  \caption{Tictactoe dataset}
\end{subfigure}%
\begin{subfigure}{0.45\textwidth}
  \centering
  \includegraphics[width=\linewidth]{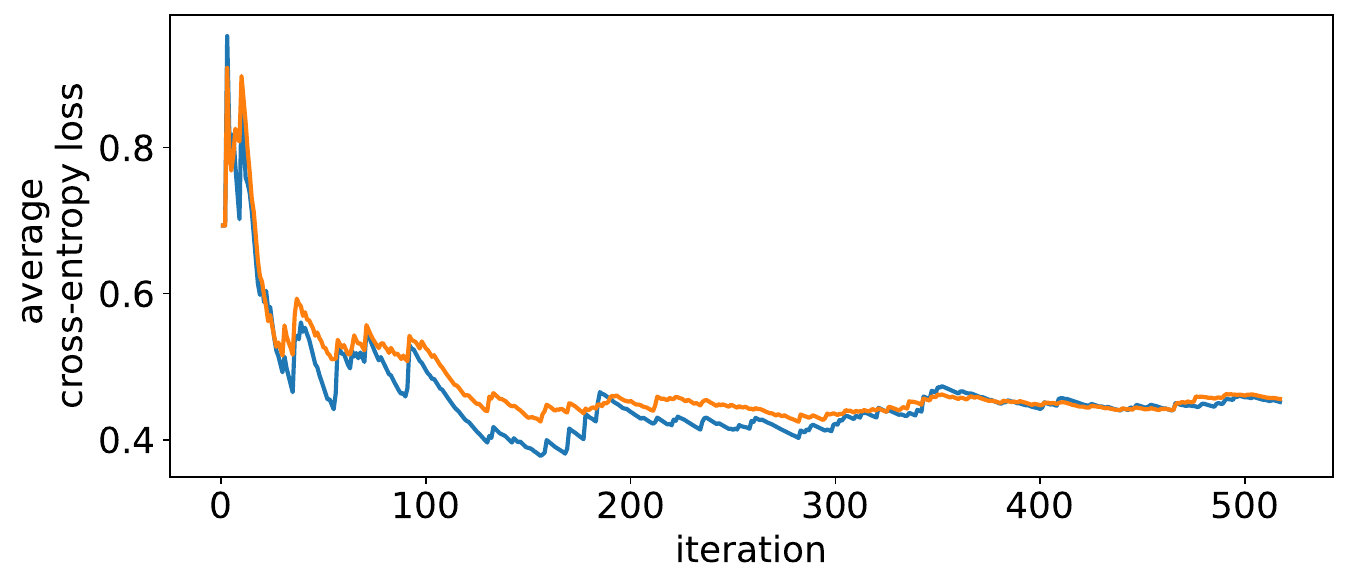}
  \caption{Credit dataset}
\end{subfigure}

\caption{Active learning sampling based on an estimator of the absolute error loss performs on par with the sampling based on the ground truth value of absolute error loss.}
\label{fig:absloss_estimator}
\end{figure}

In active learning applications, the true loss of a point cannot be computed before the corresponding label is obtained. Hence, in practice we do not know the true loss at the moment of making the sampling decision. Therefore, we assess the effect of using a \emph{loss estimator}, instead of using the true loss values. We use a Random Forest regressor to estimate absolute error loss based on the same set of features as the target model and the target's model prediction as an extra feature.
We retrain this estimator on every sampling step using the labeled points observed so far.

Figure~\ref{fig:absloss_estimator} demonstrates that AWS-PA with the estimated absolute error losses performs similarly on all datasets to AWS-PA with the true absolute error losses. Moreover, for a majority of the datasets, the two variants of AWS-PA achieve similar data sampling rates; this is shown Appendix~\ref{sec:absloss_estimator} along with further discussion. 
\section{Conclusion}

We have provided convergence rate guarantees for loss and uncertainty-based active learning algorithms under various assumptions. Furthermore, we introduced the novel \emph{Adaptive-Weight Sampling (AWS)} algorithm that combines sampling with an adaptive size, conforming to stochastic Polyak's step size in expectation, and demonstrated its convergence rate guarantee, contingent on a condition related to the sampling probability function. 

For future research, it would be interesting to establish tight convergence rates for the training loss function and the sampling cost, especially comparing policies using sampling with a constant probability with those using adaptive loss-based sampling probabilities. It would be interesting to explore adaptive-weight sampling algorithms with adaptive sizes different than those studied in this paper. Additionally, exploring a theoretical study on the impact of bias and noise in the loss estimator, used for evaluating the sampling probability function, on the convergence properties of algorithms could open up a valuable avenue for investigation.

\newpage

\bibliographystyle{plainnat}
\bibliography{ref}

\newpage

\appendix
\onecolumn

\section{Appendix: Proofs, Discussion, and Additional Results}
\label{app:app}

\subsection{Limitations \& Discussion}\label{sec:lim}

It remains an open problem to investigate the tightness of convergence rate bounds for constant-weight sampling under the assumptions outlined in Theorem~\ref{thm:convexsmooth}.

The convergence rate results presented in Theorems~\ref{thm:convexsmooth} and \ref{thm:polyak} pertain to smooth convex training loss functions. Future research may explore weaker assumptions regarding the training loss function.

Regarding loss-based sampling strategies, our theoretical analysis assumes an unbiased and noiseless loss estimator. Extending this to account for estimation bias and noise would be a valuable avenue for further investigation.

Our numerical results demonstrate the effectiveness of our proposed algorithm and the robustness of our theoretical findings to loss estimation bias and noise across different datasets, utilizing the logistic regression model as a binary classifier. Future work could explore the application of other classification models, such as multi-layer neural networks.

\subsection{A Set of Convergence Rate Conditions}
\label{sec:keylemma}

We present and prove two convergence rate lemmas for algorithm (\ref{equ:sgd}): the first providing sufficient conditions for a certain convergence rate and the second restricted to linear classifiers and linearly separable datasets. 

The first lemma relies on the following condition involving the loss function $\ell$ used by algorithm (\ref{equ:sgd}) and the loss function $\tilde{\ell}$ used for evaluating the performance of the algorithm. 

\begin{assumption} 
There exist constants $\alpha, \beta > 0$ such that for all $(x, y) \in \mathcal{X} \times \mathcal{Y}$ and $\theta \in \Theta$,
\begin{equation}
\pi(x, y, \theta)||\nabla_\theta \ell(x, y, \theta)||^2 \leq \alpha \tilde{\ell}(x, y, \theta)
\label{equ:A}
\end{equation}
and
\begin{equation}
\pi(x, y, \theta)\nabla_{\theta}\ell(x, y, \theta)^\top (\theta-\theta^*) \geq \beta \tilde{\ell}(x, y, \theta).
\label{equ:B}
\end{equation}
\label{ass:ab}
\end{assumption}

\begin{lemma} 
\label{lem:conv0} 
Under Assumption~\ref{ass:ab}, for any $\theta_1$ and $\{\theta_t\}_{t>1}$ according to algorithm (\ref{equ:sgd}) with $\gamma = \beta/\alpha$, 
$$
\mathbb{E}\left[\sum_{t=1}^n \tilde{\ell}(x_t, y_t, \theta_t)\right] \leq ||\theta_1-\theta^*||^2\frac{\alpha}{\beta^2}.
$$
Moreover, if for every $(x, y) \in \mathcal{X} \times \mathcal{Y}$, $\tilde{\ell}(x, y, \theta)$ is a convex function in $\theta$, then we have
$$
\mathbb{E}[\tilde{\ell}(x, y, \bar{\theta}_n)] \leq ||\theta_1-\theta^*||^2\frac{\alpha}{\beta^2}\frac{1}{n}
$$
where $\bar{\theta}_n = (1/n)\sum_{t=1}^n \theta_t$.
\end{lemma}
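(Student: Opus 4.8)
The plan is to run the standard one-step contraction argument using the squared distance to $\theta^*$ as a Lyapunov potential, $V_t := \|\theta_t - \theta^*\|^2$. First I would expand $V_{t+1}$. Because $\mathcal{P}_{\Theta_0}$ is the projection onto a convex set containing $\theta^*$ (and is the identity in the default no-projection case $\Theta_0 = \Theta$), it is non-expansive toward $\theta^*$, so with $\ell_t := \ell(x_t,y_t,\theta_t)$ and the constant-weight step $z_t = \gamma\zeta_t$,
$$V_{t+1} \le \|\theta_t - \gamma\zeta_t\nabla_\theta\ell_t - \theta^*\|^2 = V_t - 2\gamma\zeta_t\nabla_\theta\ell_t^\top(\theta_t-\theta^*) + \gamma^2\zeta_t^2\|\nabla_\theta\ell_t\|^2.$$
Here I use the Bernoulli identity $\zeta_t^2 = \zeta_t$. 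Taking the conditional expectation over $\zeta_t$ given the history and the current point $(x_t,y_t,\theta_t)$ replaces $\zeta_t$ by $\pi_t := \pi(x_t,y_t,\theta_t)$ in both the linear and quadratic terms.

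At this stage Assumption~\ref{ass:ab} applies directly: the cross term is bounded below via (\ref{equ:B}) as $\pi_t\nabla_\theta\ell_t^\top(\theta_t-\theta^*)\ge\beta\tilde{\ell}_t$, and the squared-gradient term via (\ref{equ:A}) as $\pi_t\|\nabla_\theta\ell_t\|^2\le\alpha\tilde{\ell}_t$, where $\tilde{\ell}_t := \tilde{\ell}(x_t,y_t,\theta_t)$. Combining yields the one-step inequality $\mathbb{E}[V_{t+1}\mid\cdot] \le V_t - (2\gamma\beta - \gamma^2\alpha)\tilde{\ell}_t$. The coefficient $2\gamma\beta - \gamma^2\alpha$ is a concave quadratic in $\gamma$ maximized precisely at the prescribed $\gamma = \beta/\alpha$, where it equals $\beta^2/\alpha$; recognizing this optimality is the one genuinely lemma-specific bookkeeping point. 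Taking full expectations, rearranging, and telescoping over $t = 1,\ldots,n$ gives $(\beta^2/\alpha)\sum_{t=1}^n\mathbb{E}[\tilde{\ell}_t] \le \mathbb{E}[V_1] - \mathbb{E}[V_{n+1}] \le \|\theta_1-\theta^*\|^2$, which is the first claim after dividing through by $\beta^2/\alpha$.

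For the averaged-iterate bound I would invoke convexity of $\tilde{\ell}(x,y,\cdot)$: for a fixed $(x,y)$, Jensen's inequality gives $\tilde{\ell}(x,y,\bar{\theta}_n)\le (1/n)\sum_{t=1}^n\tilde{\ell}(x,y,\theta_t)$. The main thing to get right here is that the first claim controls the loss at the on-trajectory points $(x_t,y_t)$, whereas the averaged statement is for a generic sample $(x,y)\sim\mathcal{D}$. These match because $\theta_t$ depends only on $(x_1,y_1,\zeta_1),\ldots,(x_{t-1},y_{t-1},\zeta_{t-1})$ and is therefore independent of both the fresh $(x,y)$ and of $(x_t,y_t)$; since $(x,y)\stackrel{d}{=}(x_t,y_t)$, conditioning on $\theta_t$ yields $\mathbb{E}[\tilde{\ell}(x,y,\theta_t)] = \mathbb{E}[\tilde{\ell}(x_t,y_t,\theta_t)]$. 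Taking expectations through Jensen and substituting the first claim then gives $\mathbb{E}[\tilde{\ell}(x,y,\bar{\theta}_n)]\le (1/n)\sum_{t=1}^n\mathbb{E}[\tilde{\ell}(x_t,y_t,\theta_t)]\le \|\theta_1-\theta^*\|^2(\alpha/\beta^2)(1/n)$. Apart from this independence argument, the derivation is the textbook projected-SGD descent lemma, so I expect no serious obstacle beyond carefully handling the conditioning on $\zeta_t$ and the filtration.
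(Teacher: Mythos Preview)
Your proposal is correct and follows essentially the same approach as the paper: expand the squared distance to $\theta^*$, take conditional expectation over the Bernoulli sampling, apply conditions (\ref{equ:A}) and (\ref{equ:B}), telescope, and then invoke Jensen's inequality for the averaged iterate. If anything, you are slightly more careful than the paper in explicitly handling the projection via non-expansiveness and in spelling out the independence argument that justifies $\mathbb{E}[\tilde{\ell}(x,y,\theta_t)] = \mathbb{E}[\tilde{\ell}(x_t,y_t,\theta_t)]$.
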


\begin{proof}
For any $\theta^*\in \Theta$ and $t\geq 1$, we have
$$
||\theta_{t+1}-\theta^*||^2 = ||\theta_t-\theta^*||^2 - 2 z_t \nabla_{\theta}\ell(x_t,y_t,\theta_t)^\top (\theta_t-\theta^*) + z_t^2 ||\nabla_{\theta} \ell(x_t,y_t,\theta_t)||^2.
$$
Taking expectation in both sides of the equation, conditional on $x_t$, $y_t$ and $\theta_t$, we have
\begin{eqnarray*}
\mathbb{E}[||\theta_{t+1}-\theta^*||^2\mid x_t,y_t,\theta_t] &=& ||\theta_t-\theta^*||^2 - 2\gamma \pi(x_t,y_t,\theta_t) \nabla_{\theta}\ell(x_t,y_t,\theta_t)^\top (\theta_t-\theta^*) \\
&& + \gamma^2 \pi(x_t,y_t,\theta_t)||\nabla_{\theta}\ell(x_t,y_t,\theta_t)||^2.
\end{eqnarray*}
Under Assumption~\ref{ass:ab}, we have
$$
\mathbb{E}[||\theta_{t+1}-\theta^*||^2\mid x_t,y_t,\theta_t] \leq ||\theta_t-\theta^*||^2 - 2\gamma \beta \tilde{\ell}(x_t,y_t,\theta_t) + \gamma^2 \alpha \tilde{\ell}(x_t,y_t,\theta_t).
$$
Hence, it holds
$$
(2\gamma\beta - \gamma^2 \alpha) \mathbb{E}[\tilde{\ell}(x_t,y_t,\theta_t)] \leq \mathbb{E}[||\theta_t-\theta^*||^2]-\mathbb{E}[||\theta_{t+1}-\theta^*||^2].
$$
Summing over $t$, we have
$$
(2\gamma\beta - \gamma^2 \alpha) \sum_{t=1}^n\mathbb{E}[\tilde{\ell}(x_t,y_t,\theta_t)] \leq ||\theta_1-\theta^*||^2.
$$
By taking $\gamma = \beta / \alpha$, we have
$$
\sum_{t=1}^n \mathbb{E}[\tilde{\ell}(x_t,y_t,\theta_t)] \leq ||\theta_1-\theta^*||^2\frac{\alpha}{\beta^2}.
$$

The second statement of the lemma follows from the last above inequality and Jensen's inequality.
\end{proof}

For the case of linear classifiers and linearly separable datasets, it can be readily checked that Lemma~\ref{lem:conv0} implies the following lemma.

\begin{lemma} \label{lem:conv} Assume that there exist constants $\alpha,\beta > 0$ such that for all $u\in \mathbb{R}$,
\begin{equation}
\pi(u) \ell'(u)^2 R^2 \leq \alpha \tilde{\ell}(u)
\label{equ:AA}
\end{equation}
and
\begin{equation}
\pi(u)(-\ell'(u))(\rho^*-u) \geq \beta \tilde{\ell}(u).
\label{equ:BB}
\end{equation}
Then, for any $\theta_1$ such that $||\theta_1-\theta^*||\leq S$ and $\{\theta_t\}_{t>1}$ according to algorithm (\ref{equ:sgd}) with  $\gamma = \beta/\alpha$,
$$
\mathbb{E}\left[\sum_{t=1}^n \tilde{\ell}(y_t x_t^\top \theta_t)\right]\leq S^2\frac{\alpha}{\beta^2}.
$$
Moreover, if $\tilde{\ell}$ is a convex function, then
$
\mathbb{E}\left[\tilde{\ell}(y x^\top \bar{\theta}_n)\right]\leq S^2 \alpha/(\beta^2n)
$
where $\bar{\theta}_n = (1/n)\sum_{t=1}^n \theta_t$.
\end{lemma}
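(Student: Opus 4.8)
The plan is to show that the two scalar conditions (\ref{equ:AA}) and (\ref{equ:BB}) are precisely the specializations of the vector conditions (\ref{equ:A}) and (\ref{equ:B}) of Assumption~\ref{ass:ab} to the linear-classifier setting, so that the conclusion follows by invoking Lemma~\ref{lem:conv0} with the same constants $\alpha,\beta$ and step size $\gamma=\beta/\alpha$. The one computation used throughout is the gradient: writing $u=yx^\top\theta$ and $\ell(x,y,\theta)\equiv\ell(u)$, the chain rule gives $\nabla_\theta\ell(x,y,\theta)=\ell'(u)\,yx$. I also record the consequence of $\rho^*$-margin linear separability, namely $yx^\top\theta^*=|x^\top\theta^*|\geq\rho^*$, since $y$ and $x^\top\theta^*$ share the same sign.

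First I would verify (\ref{equ:A}). From the gradient formula, $\|\nabla_\theta\ell(x,y,\theta)\|^2=\ell'(u)^2\,y^2\|x\|^2=\ell'(u)^2\|x\|^2$ using $y^2=1$. Since $\|x\|\leq R$ and $\pi(u)\ell'(u)^2\geq 0$, this is at most $\pi(u)\ell'(u)^2 R^2$, which by (\ref{equ:AA}) is bounded by $\alpha\tilde{\ell}(u)$; hence (\ref{equ:A}) holds.

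The more delicate step, and the one I expect to be the main (though still routine) obstacle, is (\ref{equ:B}), since it requires tracking signs. Using the gradient formula, $\nabla_\theta\ell^\top(\theta-\theta^*)=\ell'(u)\,yx^\top(\theta-\theta^*)=\ell'(u)(u-yx^\top\theta^*)$, so that $\pi(u)\nabla_\theta\ell^\top(\theta-\theta^*)=\pi(u)(-\ell'(u))(yx^\top\theta^*-u)$. Here I would invoke that the classification loss is non-increasing in the margin, so $-\ell'(u)\geq 0$, whence the prefactor $\pi(u)(-\ell'(u))$ is nonnegative; combining this with the margin bound $yx^\top\theta^*\geq\rho^*$ (so $yx^\top\theta^*-u\geq\rho^*-u$) preserves the inequality direction and yields $\pi(u)\nabla_\theta\ell^\top(\theta-\theta^*)\geq\pi(u)(-\ell'(u))(\rho^*-u)\geq\beta\tilde{\ell}(u)$ by (\ref{equ:BB}). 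This establishes (\ref{equ:B}).

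Finally, with Assumption~\ref{ass:ab} verified, Lemma~\ref{lem:conv0} gives $\mathbb{E}[\sum_{t=1}^n\tilde{\ell}(x_t,y_t,\theta_t)]\leq\|\theta_1-\theta^*\|^2\alpha/\beta^2$, and bounding $\|\theta_1-\theta^*\|\leq S$ yields the first claim. For the averaged iterate, I note that $\tilde{\ell}(x,y,\theta)=\tilde{\ell}(yx^\top\theta)$ is the composition of the convex function $\tilde{\ell}$ with the affine map $\theta\mapsto yx^\top\theta$, hence convex in $\theta$; the second statement of Lemma~\ref{lem:conv0} then applies verbatim and, after using $\|\theta_1-\theta^*\|\leq S$, gives $\mathbb{E}[\tilde{\ell}(yx^\top\bar{\theta}_n)]\leq S^2\alpha/(\beta^2 n)$.
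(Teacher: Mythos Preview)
Your proposal is correct and takes essentially the same approach as the paper, which simply states that Lemma~\ref{lem:conv} follows from Lemma~\ref{lem:conv0} by checking that conditions (\ref{equ:AA}) and (\ref{equ:BB}) imply (\ref{equ:A}) and (\ref{equ:B}) in the linear-classifier setting; you have spelled out exactly those verifications. One tiny wording slip: in your check of (\ref{equ:A}), ``this'' should refer to $\pi(u)\|\nabla_\theta\ell\|^2$ rather than $\|\nabla_\theta\ell\|^2$ alone, but the intended inequality $\pi(u)\ell'(u)^2\|x\|^2\leq \pi(u)\ell'(u)^2 R^2$ is clear and correct.
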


\subsubsection{Discussion}
\label{sec:impl}

We discuss some implications of conditions (\ref{equ:AA}) and (\ref{equ:BB}). Some of this discussion will help us identify the types of loss functions to which the conditions cannot be applied.

First, we note that $\tilde{\ell}(u)>0$ implies $\pi(u) > 0$. Hence, equivalently, $\pi(u)=0$ implies $\tilde{\ell}(u) = 0$.

Second, we note that under conditions (\ref{equ:AA}) and (\ref{equ:BB}) it is necessary that for all $u\in \mathbb{R}$, either
$$
-\ell'(u) \leq \frac{\alpha}{\beta R^2}\max\{\rho^*-u,0\} \hbox{ or } \pi(u) = 0.
$$
Hence, whenever $\tilde{\ell}(u) > 0$ (and thus $\pi(u) > 0$), then $-\ell'(u) \leq \frac{\alpha}{\beta R^2}\max\{\rho^*-u,0\}$. The latter condition means that $\ell'(u) = 0$ whenever $u\geq \rho^*$ and otherwise the derivative of $\ell$ at $u$ is bounded such that $\ell'(u) \geq - (\alpha/(\beta R^2))(\rho^*-u)$. In other words, function $\ell$ must not decrease too fast on $(-\infty,\rho^*]$. 

Third, assume $\ell$ and $\tilde{\ell}$ are such that $\ell(u)=0$ and $\tilde{\ell}(u) = 0$ for all $u\geq 1$ and $\tilde{\ell}(u) > 0$ for all $u<1$. Then, for every $u\leq 1$
$$
\int_u^1 (-\ell(v))dv \leq \frac{\alpha}{\beta R^2}\int_u^1 (\rho^*-v)dv
$$
which by integrating is equivalent to
$$
\ell(u) \leq \frac{\alpha}{\beta R^2}\left((\rho^*-1)(1-u) + \frac{1}{2}(1-u)^2\right).
$$
This shows that $\ell$ must be upper bounded by a linear combination of hinge and squared hinge loss function. 

Forth, assume that $\pi$ is decreasing in $u$, then for every fixed $u_0\in \mathbb{R}$,
$$
\tilde{\ell}(u)\geq \frac{\pi(u_0)R^2}{\alpha}(-\ell'(u))^2 \hbox{ for every } u\leq u_0.
$$
If $\tilde{\ell}=\ell$, then
$$
\ell(u)\leq \left(\sqrt{\ell(u_0)} + \frac{1}{2R}\sqrt{\frac{\alpha}{\pi(u_0)}}(u_0-u)\right)^2 \hbox{ for all } u\leq u_0.
$$

Fifth, and last, assume that $\pi$ is an even function and that there exists $c>0$ and $u_0\leq \rho^*$ such that $\tilde{\ell}(u)\geq c$ for every $u\leq u_0$. Then,
$$
\pi(u) = \Omega\left(\frac{1}{|u|^2}\right)
$$
which limits the rate at which $\pi(u)$ is allowed to decrease with $|u|$. To see, this, from conditions (\ref{equ:AA}) and (\ref{equ:BB}), for every $u\leq \rho^*$,
$$
\tilde{\ell}(u)\leq \frac{\alpha}{\beta^2 R^2}\pi(u)(\rho^*-u)^2.
$$
Hence, $\pi(u) \geq (c\beta^2 R^2/\alpha)/(\rho^*-u))$ for every $u\leq u_0$. The lower bound is tight in case when $\ell$ is squared hinge loss function and $\tilde{\ell}(u) = c$ for every $u\leq u_0 < 1$. In this case, from conditions (\ref{equ:AA}) and (\ref{equ:BB}), for every $u\leq u_0$,
$$
c\beta \frac{1}{(1-u)(\rho^*-u)}\leq \pi(u) \leq \frac{c\alpha}{R^2}\frac{1}{(1-u)^2}
$$
which implies $\pi(u) = \Theta(1/|u|^2)$.

\subsection{Proof of Theorem~\ref{thm:squaredhinge}}

We show that conditions of the theorem imply conditions (\ref{equ:AA}) and (\ref{equ:BB}) to hold, for $\tilde{\ell} = \ell$, which in turn imply conditions (\ref{equ:A}) and (\ref{equ:B}) and hence we can apply the convergence result of Lemma~\ref{lem:conv}. 

We first consider condition (\ref{equ:A}). For squared hinge loss function $\ell$ and $\tilde{\ell}=\ell$, clearly condition (\ref{equ:A}) holds for every $u\geq 1$ as in this case both side of the inequality are equal to zero. For every $u\leq 1$,
$$
\frac{\ell(u)}{\ell'(u)^2} = \frac{1}{2}.
$$
Since by assumption $\pi(u)\leq \beta/2$ for all $u$, condition $\alpha/\beta \geq R^2$ implies condition (\ref{equ:A}).

We next consider condition (\ref{equ:B}). Again, clearly, condition holds for every $u\geq 1$ as in this case both sides of the inequality are equal to zero. For $u\leq 1$, we can write (\ref{equ:B}) as follows
\begin{eqnarray*}
\pi(u)  \geq  \beta \frac{\ell(u)}{(-\ell'(u))(\rho^*-u)}
& = &  \frac{\beta}{2} \frac{1-u}{\rho^*-u}\\
&=& \frac{\beta}{2}\left(1-\frac{1}{1+(1/(\rho^*-1))(1-u)}\right)\\
&=& \frac{\beta}{2}\left(1-\frac{1}{1+(\sqrt{2}/(\rho^*-1))\sqrt{\ell(u)}}\right).
\end{eqnarray*}
This condition is implied by $\pi(u) \geq \pi^*(\ell(u))$ where
$$
\pi^*(\ell) = \frac{\beta}{2}\left(1-\frac{1}{1+\mu\sqrt{\ell}}\right)
$$
with $\mu \geq \sqrt{2}/(\rho^*-1)$.

The result of the theorem follows from Lemma~\ref{lem:conv} with $\alpha/\beta = R^2$, $0 < \beta \leq 2$ and $\pi(u)\geq \pi^*(\ell(u))$ for all $u\leq 1$.

For the expected number of samples we proceed as follows. First by concavity and monotonicity of function $\pi^*$ and the expected loss bound, we have
$$
\mathbb{E}\left[\sum_{t=1}^n \pi^*(\ell(x_t,y_t,\theta_t))\right] \leq \pi^*\left(\mathbb{E}\left[\frac{1}{n}\sum_{t=1}^n \ell(x_t,y_t,\theta_t)\right]\right)n \leq \pi^*\left(\frac{||\theta_1-\theta^*||^2 R^2}{\beta n}\right)n.
$$
Then, combined with the fact $\pi^*(v) \leq \frac{\beta\mu}{2}\sqrt{v}$ for all $v\geq 0$, it follows
$$
\mathbb{E}\left[\sum_{t=1}^n \pi^*(\ell(x_t,y_t,\theta_t))\right]\leq ||\theta_1-\theta^*||\frac{\mu\sqrt{\beta}}{2}\sqrt{n}.
$$
Since $\pi^*(v)\leq \beta / 2$ for all $v$, it obviously holds $\mathbb{E}\left[\sum_{t=1}^n \pi^*(\ell(x_t,y_t,\theta_t))\right]\leq (\beta/2)n$, which completes the proof of the theorem.

\subsection{Linear Classifiers: Zero-one Loss and Absolute Error Loss-based Sampling}
\label{sec:lin1}

Here we consider training loss functions satisfying:

\begin{assumption}\label{ass:loss} 
Function $\ell$ is continuously differentiable on $(-\infty,0]$, convex, and $\ell'(0)\leq -c_1$ and $\lim_{u\rightarrow -\infty}\ell'(u) \geq - c_2$, for some constants $c_1,c_2>0$. 
\end{assumption}

We consider sampling proportional to either zero-one loss or absolute  error loss. The zero-one loss is defined as
$\ell_{01}(u):=\mathbbm{1}_{\{u \leq 0\}}=\mathbbm{1}_{\{y\neq \mathrm{sgn}(x^\top \theta)\}}$. The absolute error loss is defined as 
$\ell_{\mathrm{abs}}(u) := 2\left(\mathbbm{1}_{\{u < 0\}} + \frac{1}{2}\mathbbm{1}_{\{u = 0\}}\right) = |y - \mathrm{sgn}(x^\top \theta)|$. Sampling proportional to zero-one loss is defined as $\pi(u) = \omega \ell_{01}(u)$ for some constant $\omega\in (0,1]$, while sampling proportional to absolute error loss is defined as $\pi(u) = \omega \ell_{\mathrm{abs}}(u)$ for some constant $\omega \in (0,1/2]$.

\begin{theorem}\label{thm:perceptron} 
Assume that the loss function $\ell$ satisfies Assumption~\ref{ass:loss} and $\rho^* > 0$. Then, under sampling proportional to zero-one loss, for any initial value $\theta_1$ such that $||\theta_1-\theta^*||\leq S$ and $\{\theta_t\}_{t>1}$ according to algorithm (\ref{equ:sgd}) with $\gamma = c_1\rho^*/(c_2^2 R^2)$,
\begin{equation}
\mathbb{E}\left[\sum_{t=1}^n \ell_{01}(y_t x_t^\top \theta_t)\right]\leq \frac{c_2^2 R^2 S^2}{c_1^2\omega}\frac{1}{{\rho^*}^2}.
\label{equ:zeroone}
\end{equation}
Furthermore, under sampling proportional to absolute error loss and $\gamma = 2c_1\rho^*/(c_2^2 R^2)$, the bound in (\ref{equ:zeroone}) holds but with an additional factor of $2$.
\end{theorem}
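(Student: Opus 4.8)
The plan is to deduce both statements as immediate consequences of the linear-classifier convergence lemma (Lemma~\ref{lem:conv}), taking the \emph{evaluation} loss to be the zero-one loss, $\tilde{\ell} = \ell_{01}$, while $\ell$ remains the \emph{training} loss satisfying Assumption~\ref{ass:loss}. Concretely, I would exhibit constants $\alpha,\beta>0$ for which conditions (\ref{equ:AA}) and (\ref{equ:BB}) hold with this choice of $\tilde{\ell}$, and then read off the step size $\gamma=\beta/\alpha$ and the bound $S^2\alpha/\beta^2$ directly from the lemma. I note that only the summed-loss bound of Lemma~\ref{lem:conv} is invoked, which does not require $\tilde{\ell}$ to be convex --- this matters, since $\ell_{01}$ is not convex.

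For the zero-one case, $\pi(u)=\omega\ell_{01}(u)$ and $\tilde{\ell}(u)=\ell_{01}(u)$ are both supported on $\{u\le 0\}$, so for $u>0$ both sides of (\ref{equ:AA}) and (\ref{equ:BB}) vanish and nothing needs checking; this also means $\ell'$ is only ever controlled on $(-\infty,0]$, exactly the domain of differentiability granted by Assumption~\ref{ass:loss}. On $\{u\le 0\}$ we have $\ell_{01}(u)=1$ and $\pi(u)=\omega$, so the two conditions reduce to $\omega\,\ell'(u)^2 R^2\le\alpha$ and $\omega(-\ell'(u))(\rho^*-u)\ge\beta$. The key step is to control $\ell'$ using convexity: since $\ell$ is convex, $\ell'$ is non-decreasing, so for every $u\le 0$ one has $\lim_{v\to-\infty}\ell'(v)\le \ell'(u)\le \ell'(0)$, which with Assumption~\ref{ass:loss} gives $c_1\le -\ell'(u)\le c_2$ and hence $\ell'(u)^2\le c_2^2$. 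Combined with the margin bound $\rho^*-u\ge\rho^*$ valid on $\{u\le 0\}$, this shows that $\alpha=\omega c_2^2 R^2$ validates (\ref{equ:AA}) and $\beta=\omega c_1\rho^*$ validates (\ref{equ:BB}). Lemma~\ref{lem:conv} then yields $\gamma=\beta/\alpha=c_1\rho^*/(c_2^2 R^2)$ and the asserted bound $S^2\alpha/\beta^2 = c_2^2 R^2 S^2/(c_1^2\omega {\rho^*}^2)$, which is exactly (\ref{equ:zeroone}).

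For sampling proportional to absolute error loss the argument is identical in structure, but now $\pi(u)=\omega\ell_{\mathrm{abs}}(u)$ equals $2\omega$ on $\{u<0\}$ while dropping to $\omega$ at the single boundary point $u=0$, whereas $\tilde{\ell}=\ell_{01}$ still equals $1$ throughout $\{u\le 0\}$. I would therefore pick $\alpha$ from the region $\{u<0\}$, where the sampling weight multiplying $\ell'(u)^2$ in (\ref{equ:AA}) is largest, and $\beta$ from the point $u=0$, where the sampling weight multiplying $(-\ell'(u))(\rho^*-u)$ in (\ref{equ:BB}) is smallest relative to $\tilde{\ell}$. This mismatch between the sampling probability appearing in the two conditions is precisely what forces a larger $\alpha$-to-$\beta$ ratio than in the zero-one case, producing the extra factor of $2$ in the bound and a correspondingly rescaled step size.

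The only genuine obstacle is verifying the two pointwise conditions over all $u\in\mathbb{R}$, and the real content lives entirely on the misclassification region: there $\tilde{\ell}$ and $\pi$ are (piecewise) constant, so everything hinges on sandwiching $-\ell'(u)$ between $c_1$ and $c_2$ via convexity and bounding $\rho^*-u$ below by the margin. I expect the delicate point to be the treatment of the boundary $u=0$ in the absolute-error case, where the non-uniformity of the sampling probability dictates that $\beta$ be taken from the smaller value of $\pi$, accounting for the factor-of-$2$ loss; everything else is a direct substitution into Lemma~\ref{lem:conv}.
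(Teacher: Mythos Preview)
Your proposal is correct and matches the paper's proof essentially line for line: both reduce to verifying conditions (\ref{equ:AA}) and (\ref{equ:BB}) of Lemma~\ref{lem:conv} with $\tilde{\ell}=\ell_{01}$, use convexity of $\ell$ to sandwich $-\ell'(u)$ between $c_1$ and $c_2$ on $(-\infty,0]$, and arrive at $\alpha=\omega c_2^2 R^2$, $\beta=\omega c_1\rho^*$ (respectively $\alpha=2\omega c_2^2 R^2$, $\beta=\omega c_1\rho^*$ in the absolute-error case). Your explicit remark that only the summed-loss part of Lemma~\ref{lem:conv} is invoked, so non-convexity of $\ell_{01}$ is harmless, and your handling of the region $u>0$ via vanishing of both sides, are points the paper leaves implicit but treats identically.
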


Proof is provided in Appendix~\ref{sec:perceptron}.

The bound in (\ref{equ:zeroone}) is a well-known bound on the number of mistakes made by the perceptron algorithm, of the order $O(1/{\rho^*}^2)$. It can be readily observed that under sampling proportional to zero-one loss, the expected number of sampled points is bounded by $(c_2/c_1)^2 R^2 S^2 / {\rho^*}^2$, which also holds for sampling proportional to absolute error loss but with an additional factor of $4$.

\subsubsection{Proof of Theorem~\ref{thm:perceptron}}
\label{sec:perceptron}

We first consider the case when sampling is proportional to zero-one loss. We show that conditions of the theorem imply conditions (\ref{equ:AA}) and (\ref{equ:BB}) to hold, for $\tilde{\ell} = \ell_{01}$ and $\pi = \omega \ell_{01}$, which in turn imply conditions (\ref{equ:A}) and (\ref{equ:B}) and hence we can apply the convergence result of Lemma~\ref{lem:conv}. 

Conditions (\ref{equ:AA}) and (\ref{equ:BB}) are equivalent to: 
$$
\omega \ell'(u)^2 R^2\leq \alpha, \hbox{ for every } u\leq 0
$$
and
$$
\omega (-\ell'(u))(\rho^*-u) \geq \beta \hbox{ for every } u\leq 0.
$$
Since $\ell$ is a convex function $\ell'(u)^2$ is decreasing in $u$ and $(-\ell'(u)) (\rho^*-u)$ is decreasing in $u$. 
 Therefore, conditions are equivalent to
$$
\omega (\lim_{u\rightarrow -\infty} (-\ell'(u)))^2 R^2 \leq \alpha
$$
and
$$
\omega (-\ell'(0)) \rho^* \geq \beta.
$$
These conditions hold true by taking $\alpha = \omega c_2^2 R^2$ and $\beta = \omega c_1\rho^*$.

We next consider the case when sampling is proportional to absolute error loss.  We show that conditions of the theorem imply conditions (\ref{equ:AA}) and (\ref{equ:BB}) to hold, for $\tilde{\ell} = \ell_{01}$ and $\pi = \omega \ell_{\mathrm{abs}}$, which in turn imply conditions (\ref{equ:A}) and (\ref{equ:B}) and hence we can apply the convergence result of Lemma~\ref{lem:conv}. 

Conditions (\ref{equ:AA}) and (\ref{equ:BB}) correspond to
$$
2\omega (-\ell'(u))^2 R^2 \leq \alpha \hbox{ for every } u < 0 \hbox{ and } \omega(-\ell'(0))^2R^2 \leq \alpha
$$
and
$$
2\omega (-\ell'(u))(\rho^*-u)\geq \beta \hbox{ for every } u < 0 \hbox{ and } \omega (-\ell'(0))\rho^* \geq \beta.
$$
Again, since $(-\ell'(u))^2$ is decreasing in $u$ and $(-\ell'(u))(\rho^*-u)$ is decreasing in $u$, it follows that the conditions are equivalent to
$$
2\omega (\lim_{u\rightarrow -\infty}(-\ell'(u)))^2 R^2 \leq \alpha 
$$
and
$$
\omega (-\ell'(0))\rho^* \geq \beta.
$$
Hence, conditions (\ref{equ:AA}) and (\ref{equ:BB}) hold by taking $\alpha = 2\omega c_2^2 R^2$ and $\beta = \omega c_1 \rho^*$.

\subsection{Linear Classifiers: Generalized Smooth Hinge Loss Function}
\label{sec:lin2}

Here we consider the training loss function corresponding to the generalized smooth hinge loss function \citep{rennie}, defined for $a\geq 1$ as follows:
\begin{equation}
\ell(u) = \left\{
\begin{array}{ll}
\frac{a}{a+1} - u & \hbox{ if } u\leq 0\\
\frac{a}{a+1} - u + \frac{1}{a+1} u^{a+1} & \hbox{ if } 0\leq u \leq 1 \\
0 & \hbox{ otherwise}.
\end{array}
\right.
\label{equ:gshloss}
\end{equation}

This is a continuously differentiable function that converges to the value of the hinge loss function, $\max\{1-u,0\}$, as $a$ goes to infinity. The family of loss functions parameterized by $a$ accommodates the smooth hinge loss function with $a=1$, introduced by \cite{smoothhinge}.

\begin{theorem} \label{thm:gsh} 
Assume that $\rho^* > 1$, the loss function is the generalized smooth hinge loss function, and the sampling probability is according to the function $\pi^*$, which, for $\beta \in (0,1]$ and $\rho \in (1,\rho^*]$, is defined as
$$
\pi^*(u) = \left\{
\begin{array}{ll}
\beta \frac{\frac{a}{a+1}-u}{\rho - u} & \hbox{ if } u\leq 0\\
\beta \frac{\frac{a}{a+1}\frac{1}{1-u^a}(1-u)-\frac{1}{a+1}u}{\rho - u} & \hbox{ if } 0\leq u \leq 1\\
0 & \hbox{ if } u\geq 1.
\end{array}
\right .
$$
Then, for any initial value $\theta_1$ such that $||\theta_1-\theta^*||\leq S$ and $\{\theta_t\}_{t>1}$ according to algorithm (\ref{equ:sgd}) with $\gamma = 1/(c_{a,\rho} R^2)$, where $c_{a,\rho} = a /(a(\rho-1)+1)$, we have
$$
\mathbb{E}\left[\ell(y x^\top \bar{\theta}_n)\right] \leq \mathbb{E}\left[\frac{1}{n}\sum_{t=1}^n \ell(y_t x_t^\top \theta_t)\right]
\leq c_{a,\rho}\frac{R^2 S^2}{\beta}\frac{1}{n}.
$$

Furthermore, the same bound holds for every $\pi$ such that $\pi^*(u)\leq \pi(u)\leq \beta$ for all $u\in \mathbb{R}$, with $c_{a,\rho} = 2a$.
\end{theorem}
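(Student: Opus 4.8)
The plan is to reduce the statement to the convergence rate Lemma~\ref{lem:conv} by verifying its two structural conditions (\ref{equ:AA}) and (\ref{equ:BB}) with the evaluation loss equal to the training loss, $\tilde\ell = \ell$, exactly as in the proof of Theorem~\ref{thm:squaredhinge}. First I would record the derivative of the generalized smooth hinge loss (\ref{equ:gshloss}): $-\ell'(u) = 1$ for $u \le 0$, $-\ell'(u) = 1-u^a$ for $0 \le u \le 1$, and $-\ell'(u) = 0$ for $u \ge 1$. The function $\pi^*$ in the statement is engineered so that condition (\ref{equ:BB}) holds with equality once $\rho^*$ is replaced by $\rho$; concretely I would verify the algebraic identity $\pi^*(u)(-\ell'(u))(\rho - u) = \beta\,\ell(u)$ on each of the three regions, the only nontrivial case being $0 \le u \le 1$, where substituting $-\ell'(u) = 1-u^a$ cancels the $1/(1-u^a)$ factor in $\pi^*$ and collapses to $\beta[\frac{a}{a+1} - u + \frac{1}{a+1}u^{a+1}]$. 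Since $\rho \le \rho^*$ and $-\ell'(u) \ge 0$, replacing $\rho$ by $\rho^*$ only increases the left-hand side, so (\ref{equ:BB}) holds with constant $\beta$.

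Next I would turn to (\ref{equ:AA}). Using the tight identity just established to eliminate $\ell(u)$, condition (\ref{equ:AA}) with $\alpha = \beta c_{a,\rho} R^2$ reduces, after dividing by $\pi^*(u)(-\ell'(u)) > 0$, to the pointwise inequality $\frac{-\ell'(u)}{\rho - u} \le c_{a,\rho}$ for all $u < 1$. For $u \le 0$ this is immediate, since the ratio is at most $1/\rho \le c_{a,\rho}$. The substantive case is $0 \le u \le 1$, where the claim is equivalent to $\phi(u) := u^a(a(\rho-1)+1) - 1 + a(1-u) \ge 0$. I would establish this by minimizing $\phi$: its unique interior critical point is $u_0 = (a(\rho-1)+1)^{-1/(a-1)}$, and substituting it back (using $u_0^{a-1}(a(\rho-1)+1) = 1$) gives the clean value $\phi(u_0) = (a-1)(1 - u_0) \ge 0$, with the endpoints $\phi(0) = a-1 \ge 0$ and $\phi(1) = a(\rho-1) \ge 0$ confirming nonnegativity throughout. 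With both conditions in hand, Lemma~\ref{lem:conv} applies with $\gamma = \beta/\alpha = 1/(c_{a,\rho}R^2)$ and gives $\mathbb{E}[\sum_t \ell] \le S^2\alpha/\beta^2 = c_{a,\rho}R^2 S^2/\beta$; convexity of $\ell$ and Jensen then yield the averaged bound.

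For the final ``furthermore'' claim I would keep $\tilde\ell = \ell$ but re-derive the constants for an arbitrary $\pi$ with $\pi^*(u) \le \pi(u) \le \beta$. Condition (\ref{equ:BB}) is inherited for free, since $\pi \ge \pi^*$ and $\pi^*$ already satisfies it. For (\ref{equ:AA}) I would instead combine $\pi(u) \le \beta$ with the gradient-domination inequality $\ell'(u)^2 \le 2a\,\ell(u)$, which follows because $\ell$ is convex, $a$-smooth (as $\ell''(u) = a u^{a-1} \le a$ on $(0,1)$ and vanishes elsewhere), and has minimum value $0$. This gives $\pi(u)\ell'(u)^2 R^2 \le 2a\beta R^2 \ell(u)$, i.e. (\ref{equ:AA}) with $\alpha = 2a\beta R^2$ and hence $c_{a,\rho} = 2a$; the local expansion near $u=1$, where $1-u^a \sim a(1-u)$ and $\ell(u) \sim \frac{a}{2}(1-u)^2$, shows this constant is tight. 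Applying Lemma~\ref{lem:conv} as before finishes the argument. I expect the main obstacle to be the calculus step verifying $\frac{-\ell'(u)}{\rho-u} \le c_{a,\rho}$ on $[0,1]$ — showing that the engineered constant dominates the true supremum of the ratio — since the critical-point computation is precisely where the specific form of $c_{a,\rho}$ is forced.
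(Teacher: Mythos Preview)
Your proposal is correct and follows the same high-level scheme as the paper: verify conditions (\ref{equ:AA}) and (\ref{equ:BB}) of Lemma~\ref{lem:conv} with $\tilde\ell=\ell$, using that $\pi^*$ is constructed to make (\ref{equ:BB}) hold with equality at margin $\rho$, and then reduce (\ref{equ:AA}) to bounding $\sup_{u\in[0,1]}(1-u^a)/(\rho-u)$ by $c_{a,\rho}$. The two sub-arguments are where you diverge from the paper, and in both cases your route is cleaner. For the supremum bound, the paper locates the critical point $u_*$ of the \emph{ratio} $f_a(u)=(1-u^a)/(\rho-u)$, rewrites the stationarity condition as $au_*^{a-1}=1/(\rho-\tfrac{a-1}{a}u_*)$, and then reads off $f_a(u_*)=au_*^{a-1}\le a/(a(\rho-1)+1)$; you instead clear denominators to the polynomial $\phi(u)=u^a(a(\rho-1)+1)-1+a(1-u)$ and minimize it directly, which avoids the implicit equation and makes the tightness at the critical point transparent. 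For the ``furthermore'' clause, the paper computes $f(u)=\ell(u)/\ell'(u)^2$ explicitly, proves by a somewhat laborious sign analysis that $f$ is decreasing on $(-\infty,1]$, and evaluates $f(1)=1/(2a)$ via two applications of L'H\^opital; your observation that $\ell$ is $a$-smooth with minimum $0$, so that $\ell'(u)^2\le 2a\,\ell(u)$ follows immediately from the standard descent inequality, replaces all of that with one line and simultaneously explains why $2a$ is the right constant.
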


Proof is provided in Appendix~\ref{app:gsh}.

Note that $\pi^*$ is increasing in $a$ and is upper-bounded by $\pi^{**}(u)= \beta (1-u)/(\rho-1)$, which may be regarded as the limit for the hinge loss function. See Figure~\ref{fig:gsh} for an illustration.

We remark that the expected loss bound in Theorem~\ref{thm:gsh} is the same as for the squared hinge loss function in Theorem~\ref{thm:squaredhinge}, except for an additional factor $c_{a,\rho} = a/(a(\rho-1)+1)$. This factor is increasing in $a$ but always lies in the interval $[1/\rho, 1/(\rho-1)]$, where the boundary values are achieved for $a=1$ and $a\rightarrow \infty$, respectively.

\begin{figure}[t]
\centering
\includegraphics[width=0.5\linewidth]{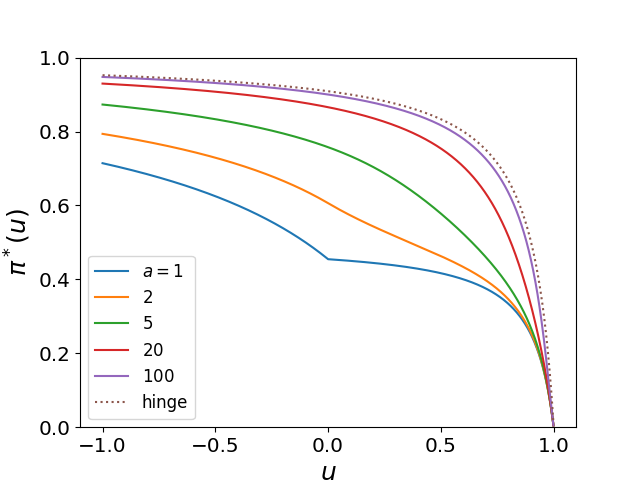}
\caption{Sampling probability function for the family of generalized smooth hinge loss functions.}
\label{fig:gsh}
\end{figure}

\begin{theorem}\label{thm:gshsampling} 
Assuming $\rho^* > 1$, the loss function is the generalized smooth hinge loss, and the sampling probability function $\pi^*$ satisfies the assumptions in Theorem~\ref{thm:gsh}, with $n > ((a+1)/a)c_{a,\rho}R^2 S^2/\beta$, the expected number of sampled points is bounded as
$$
\mathbb{E}\left[\sum_{t=1}^n \pi^*(y_tx_t^\top \theta_t)\right]
\leq  \kappa \max\left\{\frac{\sqrt{\beta c_{a,\rho}}RS}{(\rho-1)\sqrt{a}} \sqrt{n}, \frac{c_{a,\rho}R^2 S^2}{\rho-1}\right\},
$$
where $\kappa$ is some positive constant.
\end{theorem}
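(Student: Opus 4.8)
The plan is to derive the sampling bound from the expected-loss bound of Theorem~\ref{thm:gsh} by the same route used for the squared hinge loss in the proof of Theorem~\ref{thm:squaredhinge}: rewrite $\pi^*$ as an increasing concave function of the loss value, apply Jensen's inequality, and substitute the loss bound. Concretely, from the defining identity $\pi^*(u)=\beta\,\ell(u)/((-\ell'(u))(\rho-u))$ with $-\ell'(u)=1-u^a$ on $[0,1]$, I would first show that $\pi^*$, viewed as a function of $\ell$, is increasing and concave on the relevant range $\ell\in(0,a/(a+1)]$; this uses convexity of $\ell$, monotonicity of $1-u^a$, and the fact that $u\mapsto\ell(u)$ is a $C^1$ decreasing bijection of $(-\infty,1]$ onto $[0,\infty)$, making the reparametrization well defined. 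Given concavity, Jensen applied first to the empirical average $\frac1n\sum_t\ell(u_t)$ and then to the expectation yields
\[
\mathbb{E}\left[\sum_{t=1}^n\pi^*(y_tx_t^\top\theta_t)\right]\le n\,\pi^*\left(\mathbb{E}\left[\frac1n\sum_{t=1}^n\ell(u_t)\right]\right)\le n\,\pi^*\left(\frac{c_{a,\rho}R^2S^2}{\beta\,n}\right),
\]
where the last step uses monotonicity of $\pi^*$ and the bound of Theorem~\ref{thm:gsh}. The hypothesis $n>((a+1)/a)c_{a,\rho}R^2S^2/\beta$ guarantees that the argument $\ell^\star:=c_{a,\rho}R^2S^2/(\beta n)$ satisfies $\ell^\star<a/(a+1)=\ell(0)$, keeping us in the branch $0\le u\le1$.

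It then remains to bound $n\,\pi^*(\ell^\star)$, and the two terms in the asserted maximum arise from two sub-regimes of $\ell^\star$. For small loss, $\ell^\star\le 1/a$ (equivalently $n\ge a\,c_{a,\rho}R^2S^2/\beta$, where the $\sqrt n$ term dominates), the corresponding $u$ lies within $O(1/a)$ of $1$; there I would use the expansions $\ell(u)=\tfrac12 a(1-u)^2(1+o(1))$ and $1-u^a=a(1-u)(1+o(1))$ as $u\to1$ to establish $\pi^*(\ell)\le \kappa'\,\beta\sqrt{\ell}/(\sqrt a(\rho-1))$ for a universal $\kappa'$, giving $n\,\pi^*(\ell^\star)\le\kappa'\sqrt{\beta c_{a,\rho}}\,RS/((\rho-1)\sqrt a)\,\sqrt n$. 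For moderate loss, $1/a<\ell^\star\le a/(a+1)$ (where the constant term dominates), $u$ is bounded away from $1$, so $1-u^a$ is bounded below by a universal constant; the resulting linear bound $\pi^*(\ell)\le\kappa''\,\beta\,\ell/(\rho-1)$ gives $n\,\pi^*(\ell^\star)=(c_{a,\rho}R^2S^2/\beta)\,\pi^*(\ell^\star)/\ell^\star\le\kappa''\,c_{a,\rho}R^2S^2/(\rho-1)$. Since the two sub-regimes are separated precisely at the value $\ell^\star=1/a$ where the two terms in the maximum coincide, in each regime $n\,\pi^*(\ell^\star)$ is controlled by the larger term, which yields the bound $\kappa\max\{\cdot,\cdot\}$ with $\kappa=\max\{\kappa',\kappa''\}$.

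The main obstacle I expect is controlling the sharp $1/\sqrt a$ factor in the small-loss regime. Globally on $[0,1]$ the ratio $\pi^*(\ell)/\sqrt\ell$ is \emph{not} bounded by a constant times $\beta/(\sqrt a(\rho-1))$ — it grows by a factor up to $\sqrt a$ as $u$ decreases toward $0$ — so a single global square-root majorant would lose the factor $\sqrt a$. The crucial role of the condition on $n$ is to force the Jensen argument to evaluate $\pi^*$ only at the small value $\ell^\star\le1/a$, where $u$ is close to $1$ and the lower-order terms of the two expansions can be controlled. Making this uniform control rigorous — showing that the $o(1)$ corrections contribute only a universal multiplicative constant on the whole range $\ell\le1/a$, rather than merely in the limit $u\to1$ — together with a clean verification that $\pi^*$ is concave in $\ell$ (needed to license the Jensen step), is the technical heart of the argument.
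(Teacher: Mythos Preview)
Your overall strategy matches the paper's: Jensen's inequality applied to a concave transform of the loss, the expected-loss bound from Theorem~\ref{thm:gsh}, and a two-regime estimate for the resulting quantity evaluated at $\ell^\star=c_{a,\rho}R^2S^2/(\beta n)$.

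The paper's execution differs in one simplifying step that removes precisely the difficulty you flag. Rather than showing that $\pi^*$ itself is concave as a function of $\ell$, the paper first proves the elementary pointwise bound
\[
\pi^*(u)\;\leq\;\frac{\beta}{\rho-1}\,(1-u)\qquad\text{for all }u\leq 1,
\]
and then writes $1-u=1-\ell^{-1}(\ell(u))$, applying Jensen to the map $\ell\mapsto 1-\ell^{-1}(\ell)$, whose concavity is immediate from convexity of $\ell$. This sidesteps the concavity verification for $\pi^*(\ell)$ that you identify as part of the technical heart. The two-regime analysis is then packaged in a separate lemma that bounds $\ell(u)$ from below by $(1-1/c)(1-u)$ when $(a+1)(1-u)\geq c$ and by a constant times $\tfrac{a}{2}(1-u)^2$ otherwise (the latter via a third-order Taylor expansion of $\ell$ at $u=1$ with explicit remainder control). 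Inverting these gives upper bounds on $1-\ell^{-1}(\ell^\star)$ in each regime, producing exactly your two terms. The threshold $(a+1)(1-u)=c$ with $c\in(1,6/5)$ corresponds to $\ell^\star$ of order $1/a$, consistent with your split.

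So your plan is correct; the paper's route is the same in spirit but gains cleanliness by passing through the linear majorant $\beta(1-u)/(\rho-1)$, trading your direct concavity check for a one-line convexity observation, and handling the uniform control of the expansions on the side of $\ell$ versus $1-u$ rather than on $\pi^*$ versus $\ell$.
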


Proof is provided in Appendix~\ref{app:gshsampling}.

For any fixed number of iterations $n$ satisfying the condition of the theorem, the expected number of sampled points is bounded by a constant for sufficiently large value of parameter $a$. The bound in Theorem~\ref{thm:gshsampling} depends on how the loss function $\ell(u)$ varies with $u$. When $a$ is large, $\ell(u)$ is approximately $1-u$ (hinge loss), otherwise, it is approximately $\frac{a}{2}(1-u)^2$ for $0\leq u \leq 1$ (squared hinge loss).

\subsubsection{Proof of Theorem~\ref{thm:gsh}}
\label{app:gsh}

Assume that $\pi^*$ is such that for given $\rho \in (1,\rho^*]$, $\pi^*(u)(-\ell'(u))(\rho-u) = \beta \ell(u)$ for all $u\in \mathbb{R}$. Then, $\pi^*$ satisfies equation (\ref{equ:BB}) for all $u\in \mathbb{R}$. Note that 
$$
\pi^*(u) = \beta \frac{\ell(u)}{(-\ell'(u))(\rho-u)} 
= \left\{
\begin{array}{ll}
\beta \frac{\frac{a}{a+1}-u}{\rho - u} & \hbox{ if } u\leq 0\\
\beta \left(\frac{a}{a+1}\frac{1}{1-u^a}(1-u)-\frac{1}{a+1}u\right)\frac{1}{\rho - u} & \hbox{ if } 0\leq u \leq 1\\
0 & \hbox{ if } u\geq 1.
\end{array}
\right .
$$

By condition (\ref{equ:AA}), we must have
$$
\frac{-\ell'(u)}{\rho-u}\leq \frac{\alpha}{\beta R^2}. 
$$
Note that 
$$
\ell'(u) = \left\{
\begin{array}{ll}
-1 & \hbox{ if } u\leq 0\\
-(1-u^a) & \hbox{ if } 0\leq u \leq 1\\
0 & \hbox{ otherwise}.
\end{array}
\right .
$$
Hence, for every $u\leq 0$, it must hold 
$$
\frac{1}{\rho-u}\leq \frac{\alpha}{\beta R^2}
$$
which is equivalent to $\alpha/\beta \leq R^2/\rho$. For every $0\leq u \leq 1$, it must hold
$$
f_a(u) :=\frac{1-u^a}{\rho-u}\leq \frac{\alpha}{\beta R^2}.
$$
Thus, it must hold $\alpha/\beta \geq c_{a,\rho} R^2$ where $c_{a,\rho} = \sup_{u\in [0,1]}f_a(u)$.

Function $f_a$ has boundary values $f_a(0) = 1/\rho$ and $f_a(1) = 0$. Furthermore, note
$$
f_a'(u) = \frac{1+(a-1)u^a - \rho a u^{a-1}}{(\rho-u)^2}.
$$
Note $f_a(0) = 1/{\rho}^2 > 0$ and $f_a'(1) = -(\rho-1)a < 0$. Let $u_*$ be such that $f_a'(u_*) = 0$, which holds if, and only if, $g_a(u_*):=(a-1){u_*}^a - \rho a {u_*}^{a-1} + 1 = 0$. 

Note that
$$
c_{a,\rho} = \sup_{u\in [0,1]}f_a(u) = f_a(u_*) = a u_*^{a-1}.
$$

For $a=1$, $f_1$ is decreasing on $[0,1]$ hence $c_{1,\rho} = f_1(0) = 1/\rho$. For $a=2$, $u_*$ is a solution of a quadratic equation, and it can be readily shown that $c_{2,\rho} = 2\rho(1-\sqrt{{\rho}^2-1})$. For every $a\geq 1$, we have $c_{a,\rho} \leq a / (1 + a(\rho-1))$. This obviously holds with equality for $a=1$, hence it suffices to show that the inequality holds for $a>1$.

Consider the case $a > 1$. Note that $g_a(u_*) = 0$ is equivalent to
\begin{equation}
a u_*^{a-1} = \frac{1}{\rho - \frac{a-1}{a}u_*}.
\label{equ:nullpoint}
\end{equation}
Combined with the fact $u_*\in [0,1]$, it immediately follows
$$
a u_*^{a-1} \leq \frac{a}{a (\rho-1)+1}.
$$
Furthermore, note that $\lim_{a\rightarrow \infty}c_{a,\rho} = 1/(\rho-1)$. To see this, consider (\ref{equ:nullpoint}). Note that $u_*$ goes to $1$ as $a$ goes to infinity. This can be shown by contradiction as follows. Assume that there exists a constant $c\in [0,1)$ and $a_0$ such that $u_* \leq c$ for all $a\geq a_0$. Then, from (\ref{equ:nullpoint}),
$ac^{a-1}\geq 1/\rho^*$. The left-hand side in the last inequality goes to $0$ as $a$ goes to infinity while the right-hand side is a constant greater than zero, which yields a contradiction. From (\ref{equ:nullpoint}), it follows that $a u_*^{a-1}$ goes to $1/(\rho-1)$ as $a$ goes to infinity.

We prove the second statement of the theorem as follows. It suffices to show that condition (\ref{equ:AA}) holds true as condition (\ref{equ:BB}) clearly holds for every $\pi$ such that $\pi(u)\geq \pi^*(u)$ for every $u\in \mathbb{R}$. For condition (\ref{equ:AA}) to hold, it is sufficient that
$$
f(u):= \frac{\ell(u)}{\ell'(u)^2}\geq \frac{\beta R^2}{\alpha}, \hbox{ for all } u\leq 1. 
$$

Note that 
$$
f(u) = 
\left\{
\begin{array}{ll}
\frac{a}{a+1} - u & \hbox{ if } u\leq 0\\
\frac{1}{a+1} \frac{u^{a+1}-(a+1)u + a}{(1-u^a)^2} & \hbox{ if } 0\leq u \leq 1.
\end{array}
\right .
$$

Function $f$ is a decreasing function. This is obviously true for $u\leq 0$. For $0\leq u\leq 1$, we show this as follows. Note that
$$
f'(u) = \frac{2au^{2a}-(2a-1)(a+1)u^{a}+2a^2 u^{a-1}-(a+1)}{(a+1)(1-u^a)^3}.
$$
Hence, $f'(u)\leq 0$ is equivalent to
$$
2au^{a-1}(u^{a+1}+a)\leq (a+1)(1-(2a-1)u^{a}).
$$
In the last inequality, the left-hand side is increasing in $u$ and the right-hand side is decreasing in $u$. Hence the inequality holds for every $u\in [0,1]$ is equivalent to the inequality holding for $u=1$. For $u=1$, the inequality holds with equality.

Note that
\begin{eqnarray*}
f(1) &=& \frac{0}{0}\\
&=& \frac{1}{a+1}\frac{(a+1)u^a - (a+1)}{-2(1-u^a) a u^{a-1}} |_{u=1} = \frac{0}{0}\\
&=& \frac{1}{a+1}\frac{(a+1)a u^{a-1}}{2 a^2 u^{2(a-1)} - 2(1-u^a)a(a-1)u^{a-2}}|_{u=1} 
= \frac{1}{a+1}\frac{(a+1)a}{2a^2}\\
&=& \frac{1}{2a}.
\end{eqnarray*}

It follows that $\inf_{u\leq 1} f(u) = f(1) = \frac{1}{2a}\leq \alpha/(\beta R^2)$, hence it is suffices that $\alpha/\beta \geq R^2 /(2a)$.

\subsubsection{Proof of Theorem~\ref{thm:gshsampling}}
\label{app:gshsampling}

\begin{lemma} For every $a\geq 1$, 
$$
\pi^*(u) \leq \pi^{**}(u) = \beta \frac{1-u}{\rho-1} \hbox{ for every } u\leq 1. 
$$
\end{lemma}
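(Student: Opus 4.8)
The plan is to strip away the common prefactor $\beta$ and then decouple the dependence on $\rho$, reducing the claim to a single $\rho$-free inequality. Recalling from Theorem~\ref{thm:gsh} that $\pi^*(u)=\beta\,\ell(u)/\big((-\ell'(u))(\rho-u)\big)$, I would introduce the shorthand $\phi(u):=\ell(u)/(-\ell'(u))$, so that $\pi^*(u)=\beta\,\phi(u)/(\rho-u)$ while $\pi^{**}(u)=\beta(1-u)/(\rho-1)$. For $u\le 1$ one has $0<\rho-1\le\rho-u$ and $1-u\ge 0$, hence $(1-u)/(\rho-u)\le(1-u)/(\rho-1)$. Therefore it will suffice to prove the sharper, $\rho$-independent bound $\phi(u)\le 1-u$ for all $u\le 1$: chaining the two inequalities gives $\pi^*(u)=\beta\phi(u)/(\rho-u)\le\beta(1-u)/(\rho-u)\le\beta(1-u)/(\rho-1)=\pi^{**}(u)$, which is exactly the lemma. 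At $u=1$ both $\pi^*$ and $\pi^{**}$ vanish, so that endpoint is handled separately and trivially.

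Next I would establish $\phi(u)\le 1-u$. Since $-\ell'(u)\ge 0$ on $(-\infty,1]$, for $u<1$ this is equivalent to $\ell(u)\le(1-u)(-\ell'(u))$, and I would verify it on the two branches of the loss. On $u\le 0$ one has $\ell(u)=\tfrac{a}{a+1}-u$ and $-\ell'(u)=1$, so the inequality reads $\tfrac{a}{a+1}-u\le 1-u$, i.e.\ $\tfrac{a}{a+1}\le 1$, which is immediate. On $0\le u\le 1$ one has $\ell(u)=\tfrac{a}{a+1}-u+\tfrac{1}{a+1}u^{a+1}$ and $-\ell'(u)=1-u^a$; substituting, expanding $(1-u)(1-u^a)$, and multiplying through by $a+1>0$, the inequality rearranges to the compact form
\[
h(u):=(a+1)u^a-a\,u^{a+1}\le 1.
\]

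The crux of the argument, and the only step requiring real work, is this last polynomial inequality on $[0,1]$; the hard part is simply to observe that it is governed by monotonicity rather than by any delicate estimate. I would compute $h(0)=0$, $h(1)=(a+1)-a=1$, and $h'(u)=a(a+1)u^{a-1}(1-u)$, which is nonnegative on $[0,1]$. Hence $h$ is nondecreasing and attains its maximum at the right endpoint, giving $h(u)\le h(1)=1$ throughout $[0,1]$. This closes the $0\le u\le 1$ case; together with the trivial $u\le 0$ case it yields $\phi(u)\le 1-u$ and hence, by the reduction in the first paragraph, the stated bound $\pi^*(u)\le\pi^{**}(u)$ for all $u\le 1$. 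As a consistency check, near $u=1$ a second-order expansion of $\ell$ (using $\ell(1)=\ell'(1)=0$ and $\ell''(1)=a$) gives $\phi(u)\approx(1-u)/2$, safely below $1-u$, so the bound $\phi\le 1-u$ is slack in the interior and tight only in the degenerate limit $u\to 1$.
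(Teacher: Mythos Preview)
Your proof is correct and follows the same overall structure as the paper's: both first bound $\pi^*(u)\le \beta(1-u)/(\rho-u)$ on each branch of $\ell$, then pass to $\beta(1-u)/(\rho-1)$ via $\rho-u\ge\rho-1$. The only substantive difference is in the $0\le u\le 1$ branch: you prove $\phi(u)=\ell(u)/(-\ell'(u))\le 1-u$ by checking that $h(u)=(a+1)u^a-au^{a+1}$ is nondecreasing on $[0,1]$ with $h(1)=1$, whereas the paper proves the slightly sharper $\phi(u)\le\tfrac{a}{a+1}(1-u)$ by rewriting the inequality as $u^{1-a}\ge 1-(1-a)(1-u)$ and invoking Bernoulli. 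Your derivative argument is more self-contained; the paper's route yields a tighter constant but neither refinement is needed for the lemma as stated.
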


\begin{proof} For $u\leq 1$, $\pi^*(u) = \beta (a/(a+1)-u) / (\rho-u)$, so obviously, $\pi^*(u)\leq \beta (1-u)/(\rho-u)$. For $0\leq u\leq 1$, we show that next that $a(1-u)/(1-u^a) - u \leq a(1-u)$, which implies that $\pi^*(u) \leq  (a/(a+1))\beta (1-u)/(\rho-u)$. To show the asserted inequality, by straightforward calculus it can be shown that the inequality is equivalent to $(1-(1-u))^{1-a} \geq 1 - (1-a)(1-u)$ which clearly holds true. 
\end{proof}

\begin{lemma} \label{lem:llb} For every $c\in (1,6/5)$, for every $0\leq u\leq 1$, if $(a+1)(1-u)\geq c$, then 
$$
\ell(u)\geq \left(1-\frac{1}{c}\right)(1-u) 
$$
and, otherwise, 
$$
\ell(u) \geq \left(1-\frac{1}{3}\frac{c}{1-c/2}\right)\frac{a}{2}(1-u)^2.
$$
\end{lemma}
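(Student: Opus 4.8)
The plan is to reduce everything to the middle branch $0 \le u \le 1$, where $\ell(u) = (1-u) - \frac{1}{a+1}\bigl(1 - u^{a+1}\bigr)$, and then substitute $v := 1-u \in [0,1]$ so that $\ell = v - \frac{1}{a+1}\bigl(1-(1-v)^{a+1}\bigr)$. The hypothesis $(a+1)(1-u) \ge c$ becomes $(a+1)v \ge c$, and the two asserted bounds are exactly the ``large $v$'' and ``small $v$'' regimes of this single expression.

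For the large regime $(a+1)v \ge c$, I would use the crude bound $1 - (1-v)^{a+1} \le 1$ (valid since $(1-v)^{a+1}\ge 0$), giving $\ell \ge v - \frac{1}{a+1}$. Since $(a+1)v \ge c$ rearranges to $\frac{1}{a+1} \le v/c$, this yields $\ell \ge v(1-1/c) = (1-1/c)(1-u)$, the first claim; this is where $c>1$ is needed for the bound to be meaningful.

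For the small regime $(a+1)v < c$, the key step is a third-order Taylor expansion of $g(v) := 1-(1-v)^{a+1}$ about $v=0$ with Lagrange remainder. Using $g'(0)=a+1$, $g''(0) = -(a+1)a$, and $g'''(v) = (a+1)a(a-1)(1-v)^{a-2}$, substituting into $\ell = v - \frac{1}{a+1}g(v)$ cancels the linear terms and leaves, for some $\xi \in (0,v)$, the identity $\ell = \frac{a}{2}v^2 - \frac{a(a-1)}{6}(1-\xi)^{a-2}v^3$. The leading term $\frac{a}{2}v^2 = \frac{a}{2}(1-u)^2$ is precisely the target order, so it remains only to control the cubic remainder uniformly in $\xi$ and in $a \ge 1$.

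The main obstacle is that the factor $(1-\xi)^{a-2}$ behaves differently for $a \ge 2$ (where $a-2 \ge 0$ and $(1-\xi)^{a-2}\le 1$) versus $1 \le a < 2$ (where $a - 2 \in [-1,0)$ and $(1-\xi)^{a-2}$ can exceed $1$). I would unify both cases through the single upper bound $(1-\xi)^{a-2} \le \frac{1}{1-v}$, valid for all $a \ge 1$: for $a\ge 2$ it follows from $(1-\xi)^{a-2}\le 1 \le \frac{1}{1-v}$, and for $a<2$ from $(1-\xi)^{a-2}\le (1-v)^{a-2}\le (1-v)^{-1}$ (using $\xi<v$ and $a-2\ge -1$). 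This gives $\ell \ge \frac{a}{2}v^2\bigl(1 - \frac{(a-1)v}{3(1-v)}\bigr)$. Finally I would invoke the regime bounds $(a-1)v \le (a+1)v < c$ and $v < \frac{c}{a+1} \le \frac{c}{2}$ (so $1-v > 1-\frac{c}{2}$) to get $\frac{(a-1)v}{3(1-v)} < \frac{c}{3(1-c/2)} = \frac{1}{3}\frac{c}{1-c/2}$, which is exactly the claimed coefficient; the restriction $c < 6/5$ is precisely what keeps this coefficient strictly below $1$.
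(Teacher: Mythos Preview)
Your proposal is correct and follows essentially the same route as the paper: both arguments use the bound $1-u^{a+1}\le 1$ together with the regime condition for the first inequality, and a third-order Taylor expansion of $\ell$ about $u=1$ (equivalently $v=0$) with Lagrange remainder for the second. The only difference is cosmetic: the paper treats $a\ge 2$ and $1\le a<2$ separately when bounding the remainder factor $u_0^{a-2}$ (obtaining the slightly sharper constant $1-c/3$ in the first sub-case), whereas you unify them via the single estimate $(1-\xi)^{a-2}\le 1/(1-v)$, which is a bit cleaner and lands directly on the stated bound.
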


\begin{proof} We first show the first inequality. For $u\leq 1$, it clearly holds $\ell(u) \leq 1 - u$, and
$$
\frac{\frac{a}{a+1}-u}{1-u} = \frac{1-u -\frac{1}{a+1}}{1-u} = 1 - \frac{1}{(a+1)(1-u)}
$$
thus, for every $c>1$, $\ell(u) \geq (1-1/c) (1-u)$ whenever $(a+1)(1-u)\geq c$.

For $0\leq u < 1$, it holds $\ell(u) = 1-u - (1/(a+1))(1-u^{a+1})$, hence it clearly holds $\ell(u) \leq 1-u$. Next, note
$$
\frac{\ell(u)}{1-u} = 1 \frac{1-u^{a+1}}{(a+1)(1-u)}\geq 1 - \frac{1}{(a+1)(1-u)}. 
$$
Hence, again, for every $c>1$, $\ell(u) \geq (1-1/c)(1-u)$ whenever $(a+1)(1-u) \geq c$. 

We next show the second inequality. For $0< u\leq 1$, note that $\ell'(u) = -(1-u^a)$, $\ell''(u) = au^{a-1}$ and $\ell'''(u) = a(a-1)u^{a-2}$. In particular, $\ell'(1) = 0$, $\ell''(1) = a$ and $\ell'''(1) = a(a-1)$. By limited Taylor development, for some $u_0\in [u,1]$, 
$$
\ell(u) = \frac{a}{2}(1-u)^2 - \frac{a(a-1)}{6}u_0^{a-2} (1-u)^3.
$$
From this, it immediately follows that $\ell(u)\leq \frac{a}{2}(1-u)^2$ for every $u\leq 1$. For the case $a\geq 2$, we have
$$
\ell(u)\geq \frac{a}{2}(1-u)^2\left(1-\frac{1}{3}(a-1)(1-u)\right).
$$
Hence, for every $c\geq 0$, $\ell(u) \geq \frac{a}{2}(1-u)^2(1-c/3)$ whenever $(a+1)(1-u)\leq c$. For $1\leq a \leq 2$, we have 
$$
\ell(u)\geq \frac{a}{2}(1-u)^2 \left(1-\frac{1}{3}\frac{(a-1)(1-u)}{u^{2-a}}\right) \geq \frac{a}{2}(1-u)^2 \left(1-\frac{1}{3}\frac{(a-1)(1-u)}{u}\right).
$$
Under $(a+1)(1-u)\leq c$, with $0\leq c < 2$, we have
$$
\frac{(a-1)(1-u)}{u} \leq \frac{c}{1-\frac{c}{a+1}} \leq \frac{c}{1-\frac{c}{2}}.
$$
Hence, it follows
$$
\ell(u) \geq \frac{a}{2}(1-u)^2\left(1-\frac{1}{3}\frac{c}{1-\frac{c}{2}}\right).
$$
\end{proof}

Next, note that for every $x,y,\theta$,
$$
\pi^*(y x^\top\theta) \leq \pi^{**}(yx^\top \theta) \leq \frac{\beta}{\rho^*-1}(1-y x^\top \theta) = \frac{\beta}{\rho^*-1}(1-\ell^{-1}(\ell(x,y,\theta)))
$$
where $\ell^{-1}$ is the inverse function of $\ell(u)$ for $u<1$ and $\ell^{-1}(0) = 0$. 

Hence, we have
\begin{eqnarray*}
\mathbb{E}\left[\sum_{t=1}^n \pi^*(y_tx_t^\top \theta_t)\right] 
& \leq & \frac{\beta}{\rho-1}\left(1-\ell^{-1}\left(\mathbb{E}\left[\frac{1}{n}\sum_{t=1}^n \ell(x_t,y_t,\theta_t)\right]\right)\right)n\\
&\leq & \frac{\beta}{\rho-1}\left(1-\ell^{-1}\left(c_{a,\rho}\frac{||\theta_1-\theta^*||^2 R^2}{\beta}\frac{1}{n}\right)\right)n
\end{eqnarray*}
where the first inequality is by concavity of the function $1-\ell^{-1}(\ell)$ and the second inequality is by Theorem~\ref{thm:gsh}.

To apply Lemma~\ref{lem:llb}, we need that 
$$
\ell^{-1}\left(c_{a,\rho}\frac{||\theta_1-\theta^*||^2 R^2}{\beta}\frac{1}{n}\right) > 0
$$
which is equivalent to
$$
c_{a,\rho}\frac{||\theta_1-\theta^*||^2 R^2}{\beta}\frac{1}{n} < \ell(0) = \frac{a}{a+1}
$$
i.e.
$$
n > \frac{a+1}{a}c_{a,\rho}\frac{||\theta_1-\theta^*||^2 R^2}{\beta}.
$$

By Lemma~\ref{lem:llb}, we can distinguish two cases when 
$$
1-\ell^{-1}\left(c_{a,\rho}\frac{||\theta_1-\theta^*||^2 R^2}{\beta}\frac{1}{n}\right) \geq \frac{c}{a+1}
$$
or, otherwise, where $c$ is an arbitrary constant in $(1,6/5)$.

In the first case, 
$$
1-\ell^{-1}\left(c_{a,\rho}\frac{||\theta_1-\theta^*||^2 R^2}{\beta}\frac{1}{n}\right) \leq c_1 c_{a,\rho}\frac{||\theta_1-\theta^*||^2 R^2}{\beta}\frac{1}{n}
$$
where $c_1 = 1/(1-1/c)$ while in the second case
$$
1-\ell^{-1}\left(c_{a,\rho}\frac{||\theta_1-\theta^*||^2 R^2}{\beta}\frac{1}{n}\right) \leq c_2 \sqrt{\frac{2}{a}}\sqrt{c_{a,\rho}\frac{||\theta_1-\theta^*||^2 R^2}{\beta}\frac{1}{n}}
$$
where $c_2 = 1/(1-(1/3)c/(1-c/2)$.

It follows that for some constant $\kappa > 0$,
$$
\mathbb{E}\left[\sum_{t=1}^n \pi^*(y_tx_t^\top \theta_t)\right]  \leq \kappa \max\left\{\frac{\sqrt{\beta c_{a,\rho}}}{(\rho-1)\sqrt{a}}||\theta_1-\theta^*||R \sqrt{n}, \frac{c_{a,\rho}}{\rho-1}||\theta_1-\theta^*||^2 R^2\right\}.
$$

\subsection{Multi-class Classification for Linearly Separable Data}

We consider multi-class classification with $k\geq 2$ classes. Let $\mathcal{Y} = \{1,\ldots, k\}$ denote the set of classes. For every $y\in \mathcal{Y}$, let $\theta_y\in \mathbb{R}^d$ and let $\theta = (\theta_1^\top,\ldots, \theta_k^\top)^\top \in \mathbb{R}^{kd}$ be the parameter. For given $x$ and $\theta$, predicted class is an element of $\arg\max_{y\in \mathcal{Y}}x^\top \theta_y$. 

The linear separability condition is defined as follows: there exists $\theta^*\in \mathbb{R}^{kd}$ such that for some $\rho^*>1$, for every $x\in \mathcal{X}$ and $y\in \mathcal{Y}$,
$$
x^\top \theta^*_y - \max_{y'\in \mathcal{Y}\setminus \{y\}} x^\top {\theta^*_{y'}}\geq \rho^*.
$$
Let 
$$
u(x,y,\theta) = x^\top \theta_y- \max_{y'\in \mathcal{Y}\setminus \{y\}} x^\top \theta_{y'}. 
$$
We consider margin loss functions which are according to a decreasing function of $u(x,y,\theta)$, i.e. $\ell(x,y,\theta) \equiv \ell(u(x,y,\theta))$ and $\tilde{\ell}(x,y,\theta) \equiv \tilde{\ell}(u(x,y,\theta))$. For example, this accomodates hinge loss function for multi-class classification \cite{crammer}.

\begin{lemma} Conditions in Assumption~\ref{ass:ab} hold provided that for every $x$, $y$ and $\theta$,
$$
\pi(x,y,\theta) (-\ell'(u(x,y,\theta))^2 2R^2 \leq \alpha \tilde{\ell}(u(x,y,\theta))
$$
and 
$$
\pi(x,y,\theta)(-\ell'(u(x,y,\theta)))(\rho^*-u(x,y,\theta))\geq \beta \tilde{\ell}(u(x,y,\theta)).
$$
\label{lem:multi}
\end{lemma}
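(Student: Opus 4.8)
The plan is to verify that the two displayed hypotheses imply conditions (\ref{equ:A}) and (\ref{equ:B}) of Assumption~\ref{ass:ab}; since the lemma asserts exactly that these conditions hold, this completes the proof and, in turn, licenses applying Lemma~\ref{lem:conv0}. The whole argument reduces to understanding the (sub)gradient of the margin $u(x,y,\theta)$ with respect to the stacked parameter $\theta = (\theta_1^\top,\ldots,\theta_k^\top)^\top$, because by the chain rule $\nabla_\theta \ell(x,y,\theta) = \ell'(u)\,\nabla_\theta u$.

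First I would fix $(x,y,\theta)$ and let $y^\star \in \arg\max_{y'\neq y} x^\top\theta_{y'}$ be a maximizing competitor class, so that $u = x^\top\theta_y - x^\top\theta_{y^\star}$. The vector $\nabla_\theta u \in \mathbb{R}^{kd}$ is then equal to $x$ in its $y$-th block, $-x$ in its $y^\star$-th block, and zero in all other blocks, whence $\|\nabla_\theta u\|^2 = 2\|x\|^2 \leq 2R^2$. This immediately gives condition (\ref{equ:A}): $\pi\,\|\nabla_\theta\ell\|^2 = \pi\,\ell'(u)^2\|\nabla_\theta u\|^2 \leq \pi\,(-\ell'(u))^2\,2R^2 \leq \alpha\,\tilde{\ell}$ by the first displayed hypothesis. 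This is precisely where the factor $2R^2$ enters, rather than the $R^2$ appearing in the binary case of Lemma~\ref{lem:conv}, accounting for the extra factor of $2$ noted in the statement.

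For condition (\ref{equ:B}), I would compute $\nabla_\theta u^\top(\theta-\theta^*) = (x^\top\theta_y - x^\top\theta_{y^\star}) - (x^\top\theta^*_y - x^\top\theta^*_{y^\star}) = u - (x^\top\theta^*_y - x^\top\theta^*_{y^\star})$. Since $y^\star \neq y$, linear separability gives $x^\top\theta^*_y - x^\top\theta^*_{y^\star} \geq x^\top\theta^*_y - \max_{y'\neq y} x^\top\theta^*_{y'} \geq \rho^*$, so $\nabla_\theta u^\top(\theta-\theta^*) \leq u - \rho^* = -(\rho^*-u)$. Because a margin loss is decreasing we have $\ell'(u)\leq 0$, and multiplying this inequality by $\ell'(u)$ reverses its direction, yielding $\nabla_\theta\ell^\top(\theta-\theta^*) = \ell'(u)\,\nabla_\theta u^\top(\theta-\theta^*) \geq (-\ell'(u))(\rho^*-u)$. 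Multiplying by $\pi\geq 0$ and invoking the second displayed hypothesis gives $\pi\,\nabla_\theta\ell^\top(\theta-\theta^*) \geq \beta\,\tilde{\ell}$, which is condition (\ref{equ:B}).

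The main obstacle I anticipate is the non-smoothness of $u(x,y,\theta)$: the map $\theta \mapsto \max_{y'\neq y} x^\top\theta_{y'}$ fails to be differentiable exactly when the maximizing competitor is not unique, so the update genuinely uses a subgradient. I would address this by noting that, for any fixed choice of maximizer $y^\star$, the block vector described above is a valid subgradient of $u$, and both inequalities derived above hold verbatim for that subgradient. Crucially, the norm bound and the separability bound depend only on $y^\star \neq y$, so the particular selection of $y^\star$ is immaterial; this lets the chosen subgradient play the role of $\nabla_\theta\ell$ throughout, and the verification of (\ref{equ:A}) and (\ref{equ:B}) goes through unchanged.
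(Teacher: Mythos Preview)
Your proposal is correct and follows essentially the same route as the paper's proof: compute the (sub)gradient of $u$ with respect to the stacked parameter, bound its squared norm by $2\|x\|^2$ to obtain condition~(\ref{equ:A}), and use the margin separability $x^\top\theta^*_y - x^\top\theta^*_{y^\star}\geq\rho^*$ to obtain condition~(\ref{equ:B}). The only cosmetic difference is that the paper handles ties by averaging over the full set $\mathcal{Y}^*(x,y,\theta)$ of maximizing competitors, which yields $\|\nabla_\theta u\|^2=(1+1/|\mathcal{Y}^*|)\|x\|^2\leq 2\|x\|^2$, whereas you pick a single maximizer $y^\star$ and get $\|\nabla_\theta u\|^2=2\|x\|^2$ directly; both choices are valid subgradients and lead to the same final bounds.
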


\begin{proof} For $x\in \mathcal{X}$ and $y\in \mathcal{Y}$, let $\phi(x,y) = (\phi_1(x,y)^\top,\ldots, \phi_k(x,y)^\top)^\top$ where $\phi_i(x,y) = x$ if $i=y$ and $\phi_i(x,y)$ is the $d$-dimensional null-vector, otherwise. Note that
\begin{equation}
u(x,y,\theta) = \theta^\top \phi(x,y) - \frac{1}{|\mathcal{Y}^*(x,y,\theta)|}\sum_{y'\in \mathcal{Y}^*(x,y,\theta)}\theta^\top \phi(x,y')
\label{equ:ufunc}
\end{equation}
where
$$
\mathcal{Y}^*(x,y,\theta) = \arg\max_{y'\in \mathcal{Y}^*(x,y,\theta)} \theta_{y'}^\top x.
$$

Note that
$$
||\nabla_\theta \ell(x,y,\theta)||^2 = \ell'(u(x,y,\theta))^2 ||\nabla_\theta u(x,y,\theta)||^2.
$$
From (\ref{equ:ufunc}),
$$
\nabla_\theta u(x,y,\theta) = \phi(x,y) - \frac{1}{|\mathcal{Y}^*(x,y,\theta)|}\sum_{y'\in \mathcal{Y}^*(x,y,\theta)}\phi(x,y').
$$
It can be readily shown that
$$
||\nabla_\theta u(x,y,\theta)||^2 = \left(1+\frac{1}{|\mathcal{Y}^*(x,y,\theta)|}\right)||x||^2 \leq 2 ||x||^2.
$$
Hence, we have
\begin{equation}
||\nabla_\theta \ell(x,y,\theta)||^2 \leq 2 (-\ell'(u(x,y,\theta))^2 ||x||^2.
\label{equ:l1}
\end{equation}

Next, note that $\nabla_\theta \ell(x,y,\theta) = \ell'(u(x,y,\theta))\nabla_\theta u(x,y,\theta)$, 
$$
\nabla_\theta \ell(x,y,\theta)^\top (\theta-\theta^*) = (-\ell'(u(x,y,\theta)))\nabla_\theta u(x,y,\theta)^\top (\theta^*-\theta).
$$
and
$$
\nabla_\theta u(x,y,\theta)^\top \theta^* = x^\top \theta^*_y  - \max_{y'\in \mathcal{Y}\setminus \{y\}} x^\top {\theta^*_{y'}} \geq \rho^*.
$$
It follows
\begin{equation}
\nabla_\theta \ell(x,y,\theta)^\top (\theta-\theta^*) \geq (-\ell'(u(x,y,\theta))(\rho^* - u(x,y,\theta)).
\label{equ:l2}
\end{equation}

Using (\ref{equ:l1}) and (\ref{equ:l2}), for conditions (\ref{equ:A}) and (\ref{equ:B}) to hold, it suffices that
$$
\pi(x,y,\theta) (-\ell'(u(x,y,\theta)))^2 2 R^2 \leq \alpha \tilde{\ell}(u(x,y,\theta))
$$
and
$$
\pi(x,y,\theta)(-\ell'(u(x,y,\theta))(\rho^*-u(x,y,\theta)) \geq \beta \tilde{\ell}(u(x,y,\theta)).
\label{equ:BBm}
$$
Note that these conditions are equivalent to those for the binary case in (\ref{equ:AA}) and (\ref{equ:BB}) except for an additional factor $2$ in the first of the last above inequalities.
\end{proof}

\subsection{Proof of Lemma~\ref{lem:sgdloss}}

Function $\Pi$ is a convex function because, by assumption, $\pi$ is an increasing function. By (\ref{equ:tildeloss}  and Jensen's inequality, we have
\begin{eqnarray*}
\mathbb{E}\left[\frac{1}{n}\sum_{t=1}^n \tilde{\ell}(\theta_t)\right] & = & 
\mathbb{E}\left[\frac{1}{n}\sum_{t=1}^n \Pi(\ell(x_t,\theta_t))\right]\\
&\geq & \Pi\left(\mathbb{E}\left[\frac{1}{n}\sum_{t=1}^n \ell(x_t,\theta_t)\right]\right)\\
&=& \Pi\left(\mathbb{E}\left[\frac{1}{n}\sum_{t=1}^n \ell(x_t,y_t,\theta_t)\right]\right)\\
&=& \Pi\left(\mathbb{E}\left[\frac{1}{n}\sum_{t=1}^n \ell(\theta_t)\right]\right).
\end{eqnarray*}
Therefore, we have
$$
\mathbb{E}\left[\frac{1}{n}\sum_{t=1}^n \ell(\theta_t)\right]\leq \Pi^{-1}\left(\mathbb{E}\left[\frac{1}{n}\sum_{t=1}^n \tilde{\ell}(\theta_t)\right]\right).
$$

Combined with condition (\ref{equ:losscond}), we have
\begin{eqnarray*}
\mathbf{E}\left[\frac{1}{n}\sum_{t=1}^n \ell(\theta_t)\right] 
& \leq & \Pi^{-1}\left(\inf_{\theta}\tilde{\ell}(\theta) + \sum_{i=1}^m f_i(n)\right)\\
&\leq & \inf_{\theta}\Pi^{-1}(\tilde{\ell}(\theta)) + \sum_{i=1}^m \Pi^{-1}(f_i(n))
\end{eqnarray*}
where the last inequality holds because $\Pi^{-1}$ is a concave function, and hence, it is a subadditive function. 

\subsection{Proof of Theorem~\ref{thm:convexsmooth}}

Under assumptions of the theorem, by Theorem~6.3 \cite{bubeck}, 
$$
\mathbb{E}\left[\tilde{\ell}(\bar{\theta}_n)\right]\leq \mathbb{E}\left[\frac{1}{n}
\sum_{t=1}^n \tilde{\ell}(\theta_t)\right] \leq \inf_{\theta}\tilde{\ell}(\theta) + \sqrt{2}S\sigma_{\pi} \frac{1}{\sqrt{n}} + LS^2\frac{1}{n}.
$$
Combining with Lemma~\ref{lem:sgdloss}, we obtain the assertion of the theorem.

\subsection{Proof of Corollary~\ref{cor:exploss}}

\begin{lemma} For $\Pi(x) = x -1 + e^{-x}$,
$$
\Pi^{-1}(y) \leq 2\sqrt{y} \hbox{ for } y\in [0, (3/4)^2].
$$
\end{lemma}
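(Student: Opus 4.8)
The plan is to invert the monotone relation rather than manipulate $\Pi^{-1}$ directly. Since $\Pi'(x) = \pi(x) = 1 - e^{-x} \geq 0$ for $x \geq 0$ and $\Pi(0) = 0$, the function $\Pi$ is increasing on $[0,\infty)$, so the claimed inequality $\Pi^{-1}(y) \leq 2\sqrt{y}$ is equivalent to $\Pi(2\sqrt{y}) \geq y$ for all $y \in [0,(3/4)^2]$. Substituting $x = 2\sqrt{y}$ (so that $y = x^2/4$, and the range $y \in [0,(3/4)^2]$ corresponds exactly to $x \in [0,3/2]$), it therefore suffices to prove the scalar inequality
$$
\Pi(x) = x - 1 + e^{-x} \geq \frac{x^2}{4} \qquad \hbox{for } x \in [0, 3/2].
$$

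Next I would set $h(x) := x - 1 + e^{-x} - x^2/4$ and show $h(x) \geq 0$ on $[0,3/2]$ by a derivative argument. One checks immediately that $h(0) = 0$ and $h'(x) = 1 - e^{-x} - x/2$ with $h'(0) = 0$, so both the value and the first derivative vanish at the left endpoint. The key observation is that $h''(x) = e^{-x} - 1/2$ changes sign exactly once on $[0,3/2]$, being positive for $x < \ln 2$ and negative for $x > \ln 2$; hence $h'$ is unimodal, increasing on $[0,\ln 2]$ and decreasing on $[\ln 2, 3/2]$. Since $h'(0) = 0$, the increasing portion keeps $h' \geq 0$ on $[0,\ln 2]$, and because $h'$ only decreases afterwards, nonnegativity of $h'$ on all of $[0,3/2]$ reduces to checking the single right-endpoint value $h'(3/2)$.

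The final and only quantitative step is the endpoint check $h'(3/2) = \frac{1}{4} - e^{-3/2} \geq 0$, which follows from the numerical bound $e^{-3/2} < 1/4$. Granting this, $h' \geq 0$ throughout $[0,3/2]$, so $h$ is nondecreasing there; combined with $h(0) = 0$ this yields $h \geq 0$ and hence the desired inequality. I expect this endpoint estimate to be the main (and essentially the only) obstacle: the admissible range $y \le (3/4)^2$ appears to be tuned precisely so that $x = 2\sqrt{y}$ stays within the interval $[0,3/2]$ on which $h'$ remains nonnegative, since the second zero of $h'$ lies slightly beyond $3/2$. Any larger range would push $x$ past the maximum of $h$ and break the clean monotonicity argument, forcing one to control the sign of $h$ directly on the decreasing branch instead.
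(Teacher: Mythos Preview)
Your proof is correct. Both you and the paper reduce the claim to the scalar inequality $\Pi(x)\geq x^2/4$ on $[0,3/2]$ (after the substitution $x=2\sqrt{y}$), but the executions differ. The paper proceeds via the third-order Taylor bound $1-e^{-x}\leq x-\tfrac{1}{2}x^2+\tfrac{1}{6}x^3$, which gives $\Pi(x)\geq \tfrac{1}{2}x^2(1-x/3)$; it then introduces a free parameter $c\in(0,1/2)$ so that $\Pi(x)\geq c x^2$ on $[0,3(1-2c)]$, and only at the end specializes $c=1/4$ to obtain the constant $2$ and the range $(3/4)^2$ simultaneously. Your argument bypasses the Taylor step and the auxiliary parameter: you analyze $h(x)=\Pi(x)-x^2/4$ through the sign change of $h''$, reducing everything to the single numerical check $e^{-3/2}<1/4$. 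Both routes are short; yours is more elementary and self-contained, while the paper's parametric formulation makes the trade-off between the constant and the admissible range of $y$ explicit and would adapt immediately if a different constant were desired.
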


\begin{proof} We consider
$$
\Pi(x) = x - (1-e^{-x}).
$$
By limited Taylor development, 
$$
1-e^{-x}\leq x = \frac{1}{2}x^2 + \frac{1}{6}x^3.
$$
Hence, 
$$
\Pi(x) \geq \frac{1}{2}x^2\left(1-\frac{1}{3}x\right).
$$

Note that $\Pi(x) \geq c x^2$ for some constant $c>0$ provided that 
$$
\frac{1}{2}x^2\left(1-\frac{1}{3}x\right)\geq c x^2
$$
which is equivalent to $x\leq 3(1-2c)$. Hence, for any fixed $c \in [0,1/2)$, we have 
$$
\Pi(x)\geq c x^2, \hbox{ for every } x\in [0, 3(1-2c)].
$$

Now, condition $x\leq 3 (1-2c)$ is implied by $\sqrt{\Pi(x)/c} \leq 3 (1-2c)$, i.e. $\Pi(x) \leq 9c(1-2c)^2$. Hence,
$$
\Pi^{-1}(y)\leq \sqrt{\frac{1}{c}y} \hbox{ for } y\in [0,9c(1-2c)^2].
$$
In particular, by taking $c=1/4$, we have
$$
\Pi^{-1}(y) \leq 2\sqrt{y} \hbox{ for } y\in [0, (3/4)^2].
$$
\end{proof}

We have the bound in Theorem~\ref{thm:convexsmooth}. Under $\sqrt{2}S\sigma_{\pi}/\sqrt{n}\leq (3/4)^2$, we have 
$$
\Pi^{-1}\left(\sqrt{2}S\sigma_{\pi} \frac{1}{\sqrt{n}}\right)\leq 2^{5/4}\sqrt{S\sigma_{\pi}} \frac{1}{\sqrt[4]{n}}.
$$
Under $LS^2 / n \leq (3/4)^2$, we have
$$
\Pi^{-1}\left(LS^2 \frac{1}{n}\right)\leq 2 \sqrt{L}S\frac{1}{\sqrt{n}}.
$$
This completes the proof of the corollary.

\subsection{Proof of Theorem~\ref{thm:polyak}}

To simplify notation, we write $\ell_t(\theta) \equiv \ell(x_t,y_t,\theta)$, $\gamma_t = \gamma(x_t,y_t,\theta_t)$ and $\pi_t = \pi(x_t, y_t, \theta_t)$.

Since $\ell$ is an $L$-smooth function, we have $\ell(x,y,\theta) - \inf_{\theta'}\ell(x,y,\theta') \geq 1/(2L^2)||\nabla_\theta \ell(x,y,\theta)||^2$. Hence, for any $x,y,\theta$ such that $||\nabla_\theta \ell(x,y,\theta)||>0$, we have
\begin{equation}
\frac{\ell(x,y,\theta)-\min_{\theta'}\ell(x,y,\theta')}{||\nabla_\theta \ell(x,y,\theta)||^2} 
\geq \frac{1}{2 L}.
\label{equ:lb}
\end{equation}
Combined with the definition of $\zeta$ and the fact $\zeta(x_t,y_t,\theta_t) = \gamma_t \pi_t$, we have
\begin{equation}
\kappa:= \beta\min\left\{\frac{1}{2L},\rho\right\}\leq \gamma_t \pi_t \hbox{ whenever } ||\nabla_\theta \ell(x,y,\theta)||>0.
\label{equ:sandwich1}
\end{equation}
From the definition of $\zeta$ and the fact $\zeta(x_t, y_t, \theta_t) = \gamma_t \pi_t$, we have 
\begin{equation}
\gamma_t \pi_t \leq \rho \beta.
\label{equ:sandwich2}
\end{equation}

Next, note
\begin{eqnarray*}
&& \mathbb{E}[||\theta_{t+1}-\theta^*||^2\mid x_t, y_t, \theta_t]\\ 
&=& ||\theta_t-\theta^*||^2 - 2\mathbb{E}[z_t\mid x_t,y_t,\theta_t] \nabla_\theta \ell_t(\theta_t)^\top (\theta_t-\theta^*) + \mathbb{E}[z_t^2 \mid x_t, y_t, \theta_t] ||\nabla_\theta \ell_t(\theta_t)||^2\\
&=& ||\theta_t-\theta^*||^2 - 2\gamma_t \pi_t \nabla_\theta \ell_t(\theta_t)^\top (\theta_t-\theta^*) + \gamma_t^2 \pi_t ||\nabla_\theta \ell_t(\theta_t)||^2\\
&\leq & ||\theta_t-\theta^*||^2 - 2\gamma_t \pi_t (\ell_t(\theta_t)-\ell_t(\theta^*)) + \gamma_t^2 \pi_t \frac{||\nabla_\theta \ell_t(\theta_t)||^2}{\ell_t(\theta_t)-\ell_t(\theta_t^*)} (\ell_t(\theta_t)-\ell_t(\theta^*_t))\\
& = & ||\theta_t-\theta^*||^2 - \gamma_t \pi_t \left(2-\gamma_t\frac{||\nabla_\theta \ell_t(\theta_t)||^2}{\ell_t(\theta_t)-\ell_t(\theta_t^*)}\right) (\ell_t(\theta_t)-\ell_t(\theta^*_t)) + 2 \gamma_t \pi_t (\ell_t(\theta^*)-\ell_t(\theta^*_t))\\
&\leq & ||\theta_t-\theta^*||^2 - 2c \kappa (\ell_t(\theta_t)-\ell_t(\theta^*_t)) + 2 \rho\beta (\ell_t(\theta^*)-\ell_t(\theta^*_t))
\end{eqnarray*}

where the first inequality is by convexity of $\ell$, the second inequality is by condition (\ref{equ:picond}), and (\ref{equ:sandwich1}) and (\ref{equ:sandwich2}). Hence, we have
$$
\mathbb{E}[||\theta_{t+1}-\theta^*||^2] \leq \mathbb{E}[||\theta_t-\theta^*||^2]- 2c \kappa_1 \mathbb{E}[\ell_t(\theta_t)-\ell_t(\theta^*_t)] + 2 \kappa_0 \mathbb{E}[\ell_t(\theta^*)-\ell_t(\theta^*_t)].
$$
By summing over $t$ from $1$ to $n$, we have
\begin{eqnarray*}
\mathbb{E}\left[\frac{1}{n}\sum_{t=1}^n (\ell_t(\theta_t)-\ell_t(\theta^*_t))\right] & \leq &  \frac{\rho\beta}{c \kappa}\mathbb{E}\left[\frac{1}{n}\sum_{t=1}^n (\ell_t(\theta^*) - \ell_t(\theta^*_t))\right] + \frac{1}{ 2c\kappa} ||\theta_1-\theta^*||^2\frac{1}{n}\\
&\leq & \frac{\rho\beta}{c\kappa}(\mathbb{E}[\ell(x,y,\theta^*)]-\mathbb{E}[\inf_{\theta}\ell(x,y,\theta)]) + \frac{1}{2c\kappa} ||\theta_1-\theta^*||^2\frac{1}{n}.
\end{eqnarray*}

\subsection{Proofs of Corollaries \ref{cor:polyak-absloss} and \ref{cor:polyak-uncertainty}}
\label{sec:logisticpolyak}

For linear classifiers, 
$$\frac{||\nabla_\theta \ell(x,y,\theta)||^2} 
{\ell(x,y,\theta)} = h(yx^\top \theta) ||x||^2
$$
where $h(u) = \ell'(u)^2/\ell(u)$ which plays a pivotal role in condition (\ref{equ:picond}). 

For the condition (\ref{equ:picond}) to hold it suffices that
$$
\pi(x,y,\theta) \geq \frac{\beta}{2(1-c)}\min\left\{\rho R^2 h(yx^\top \theta),1\right\}.
$$

Note that under assumption that $\ell(u)$ is an $L'$-smooth function in $u$, $\ell(yx^\top \theta)$ is an $L'||x||^2$-smooth function in $\theta$. Taking $\rho = 1/(2L)$ with $L=L'R^2$, we have $\rho R^2 = 1/(2L')$.  

For the binary cross-entropy loss function, we have 
$h(u) = \sigma'(u)^2/(\sigma(u)^2 (-\log(\sigma(u))))$. Specifically, for the logistic regression case
\begin{equation}
h(u) = \frac{1}{(1+e^{u})^2\log(1+e^{-u})} 
\label{equ:hfunc}
\end{equation}
which is increasing in $u$ for $u\leq 0$ and is decreasing in $u$ otherwise. Note that $h(u) = (1-e^{-\ell(u)})^2/\ell(u)$.

\subsubsection{Proof of Corollary~\ref{cor:polyak-absloss}}

We first note the following lemma, whose proof is provided in Appendix~\ref{app:hub1}.

\begin{lemma} \label{lem:hub1} Function $h$, defined in (\ref{equ:hfunc}), satisfies
\begin{equation}
h(u)\leq \frac{1}{1+e^u} = 1-\sigma(u) \hbox{ for all } u \in \mathbb{R}.
\label{equ:hub1}
\end{equation}
Furthermore $h(u)\sim 1-\sigma(u)$ for large $u$. 
\end{lemma}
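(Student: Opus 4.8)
The plan is to collapse the claimed bound onto the elementary logarithm inequality $\log p \leq p-1$. Writing out both sides from the definition (\ref{equ:hfunc}), the target inequality $h(u)\leq 1-\sigma(u)$ reads
$$
\frac{1}{(1+e^u)^2\log(1+e^{-u})} \leq \frac{1}{1+e^u}.
$$
Since $1+e^{-u}>1$, we have $\log(1+e^{-u})>0$ for every finite $u$, so I can clear denominators without reversing the inequality. Multiplying both sides by the positive quantity $(1+e^u)^2\log(1+e^{-u})$ reduces the claim to the equivalent inequality $(1+e^u)\log(1+e^{-u}) \geq 1$.

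Next I would express this in terms of $\sigma$. Using $1+e^{-u}=1/\sigma(u)$ and $1+e^u=1/(1-\sigma(u))$ (both immediate from $\sigma(u)=1/(1+e^{-u})$), the left-hand side equals $-\log\sigma(u)/(1-\sigma(u))$. Because $1-\sigma(u)>0$, the inequality is equivalent to $-\log\sigma(u)\geq 1-\sigma(u)$, that is $\log\sigma(u)\leq \sigma(u)-1$. Setting $p=\sigma(u)\in(0,1)$, this is exactly $\log p\leq p-1$, the standard logarithm bound (equivalently $\log(1+x)\leq x$ with $x=p-1$), which holds for all $p>0$. This establishes (\ref{equ:hub1}).

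For the asymptotic claim I would track leading orders in the ratio
$$
\frac{h(u)}{1-\sigma(u)} = \frac{1}{(1+e^u)\log(1+e^{-u})}.
$$
As $u\to\infty$ we have $1+e^u\sim e^u$ and $\log(1+e^{-u})\sim e^{-u}$, so the denominator tends to $1$ and hence the ratio tends to $1$, which is precisely the meaning of $h(u)\sim 1-\sigma(u)$ for large $u$.

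There is no genuine obstacle here: once the denominators are cleared the entire lemma rests on $\log p\leq p-1$. The only points demanding a little care are verifying the sign conditions $\log(1+e^{-u})>0$ and $1-\sigma(u)>0$, which justify clearing denominators and dividing while preserving the inequality direction, and handling the $u\to\infty$ limit cleanly; both are routine.
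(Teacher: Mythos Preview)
Your proof is correct and follows essentially the same approach as the paper: both clear denominators to reduce the bound to the elementary inequality $\log p \leq p-1$ (the paper writes it in the equivalent form $\log(1-z)\leq -z$ with $z=1-\sigma(u)$), and both handle the asymptotic claim by computing the ratio $h(u)/(1-\sigma(u)) = 1/\big((1+e^u)\log(1+e^{-u})\big)$ and showing it tends to $1$.
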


See See Figure~\ref{fig:hfunc}, left, for a graphical illustration.

By Lemma~\ref{lem:hub1}, condition (\ref{equ:picond}) in Theorem~\ref{thm:polyak} is satisfied with $\rho = 1/(2L)$, by sampling proportional to absolute error loss 
$
\pi^*(u) = \omega (1-\sigma(u))
$
with $\beta /(4(1-c)L')\leq \omega \leq 1$.

\begin{figure}[t]
\centering
\includegraphics[width=0.4\linewidth]{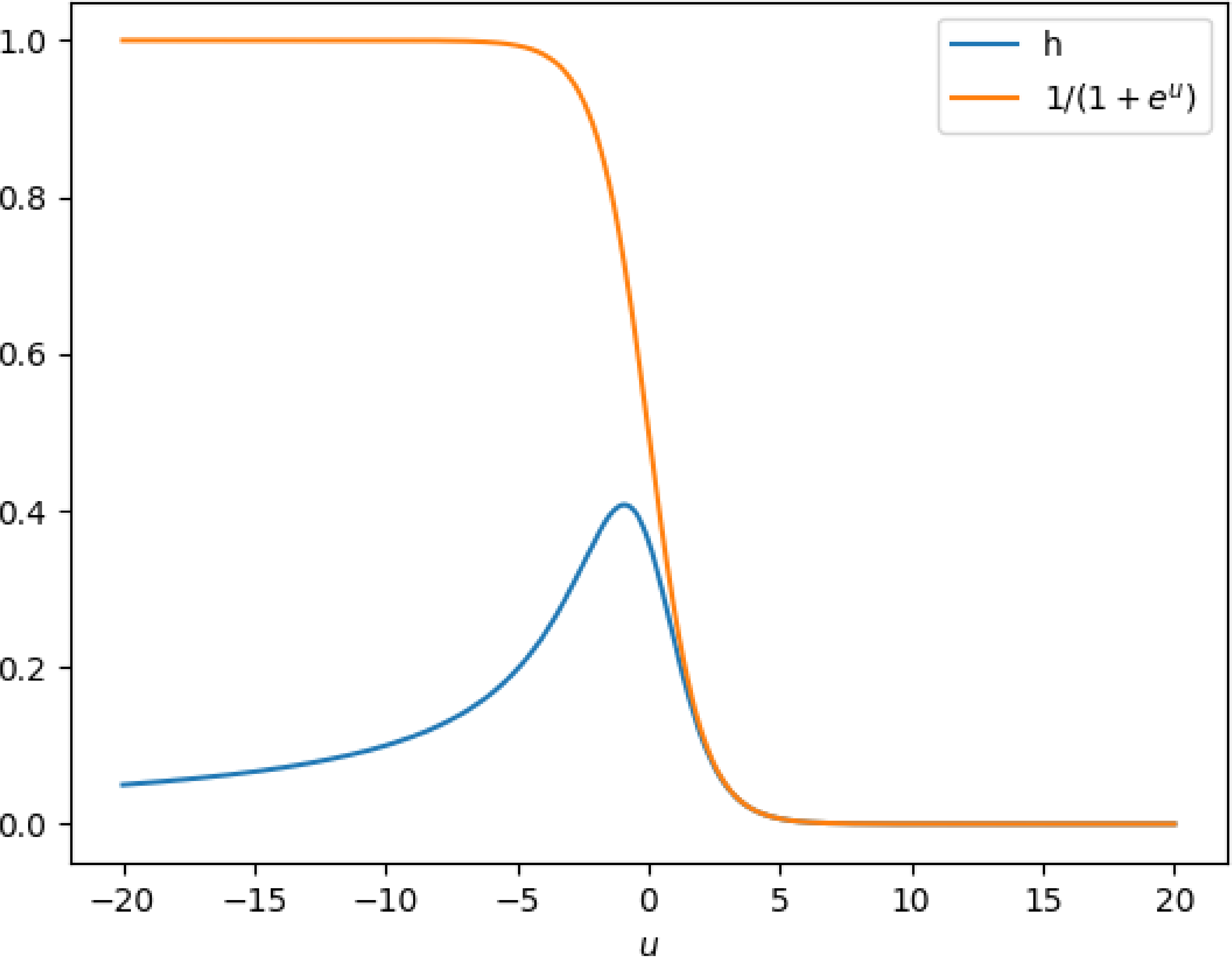}\hspace*{10mm}
\includegraphics[width=0.4\linewidth]{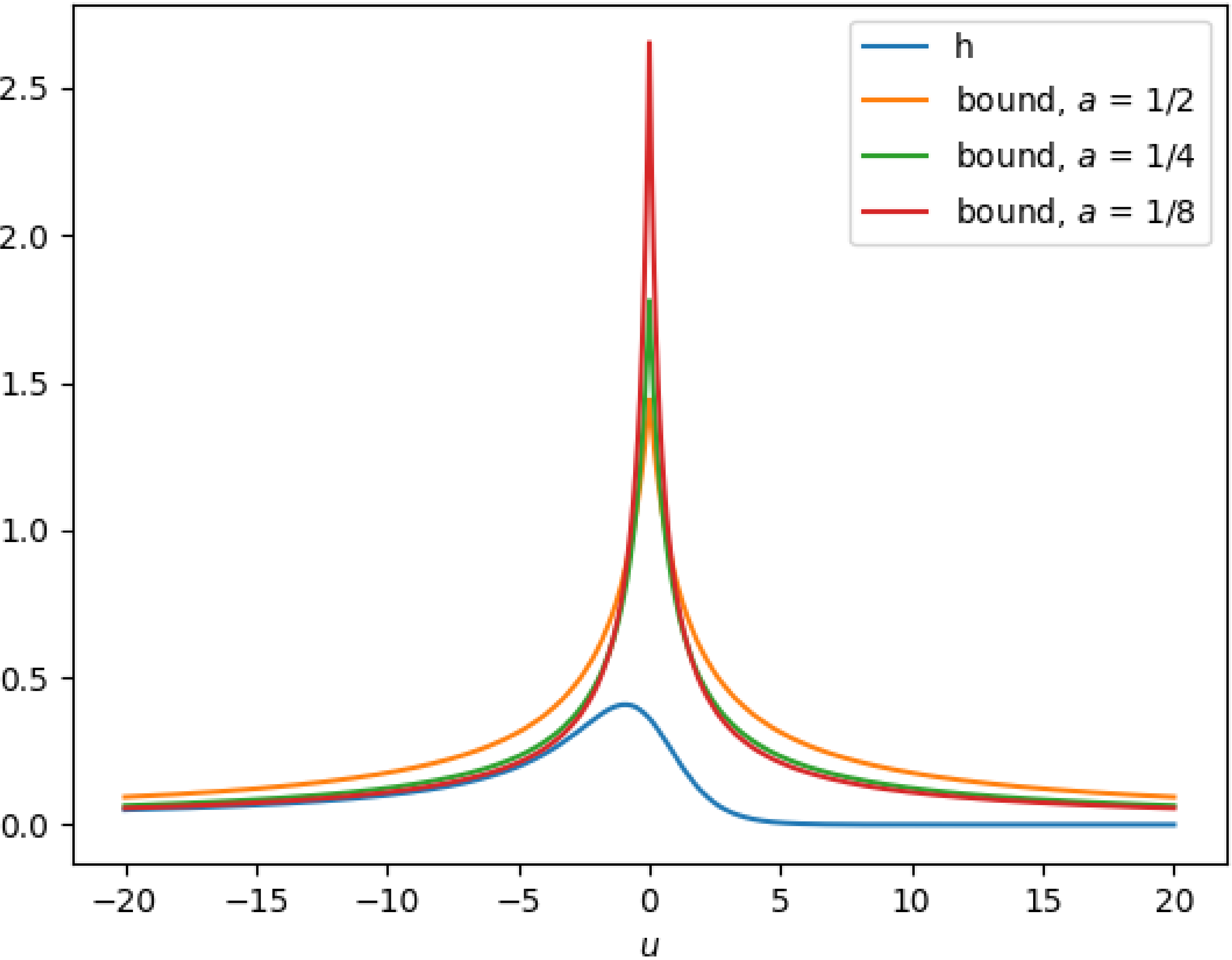}
\caption{Upper bounds for function $h$ defined in (\ref{equ:hfunc}): (left) bound of Lemma~\ref{lem:hub1}, (right) bounds of Lemma~\ref{lem:hub2}.}
\label{fig:hfunc}
\end{figure}

\subsubsection{Proof of Corollary~\ref{cor:polyak-uncertainty}}

We have the following lemma, whose proof is provided in \ref{app:hub2}.

\begin{lemma} \label{lem:hub2} Function $h$, defined in (\ref{equ:hfunc}), satisfies, for every fixed $a\in (0,1/2]$, 
\begin{equation}
h(u) \leq \frac{1}{H(a) + (1-a)|u|} \hbox{ for all } u\in \mathbb{R} 
\label{equ:hub2}
\end{equation}
where $H(a) = a\log(1/a)+(1-a)\log(1/(1-a))$. 
Furthermore, $h(u)\sim 1/|u|$ as $u$ tends to $-\infty$. 
\end{lemma}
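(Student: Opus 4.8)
The plan is to prove the equivalent statement that $1/h(u) = (1+e^u)^2\log(1+e^{-u}) \ge H(a) + (1-a)|u|$ for all $u$, and then read off the asymptotics. I would split the analysis at $u = 0$, since the factor $(1+e^u)^2$ behaves very differently on the two sides: it is harmlessly bounded below by $1$ for $u\le 0$, but must supply the growth for $u\ge 0$.

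For $u \le 0$ I would simply discard $(1+e^u)^2 \ge 1$, leaving $1/h(u) \ge \log(1+e^{-u})$. Writing $t = |u| = -u \ge 0$ and using $\log(1+e^{t}) = t + \log(1+e^{-t})$, the target reduces, after subtracting $(1-a)t$, to the one-dimensional inequality $\psi(t) := at + \log(1+e^{-t}) \ge H(a)$. The function $\psi$ is convex, since $\psi''(t) = \sigma(t)(1-\sigma(t)) > 0$, and its unique stationary point solves $1 - \sigma(t) = a$, i.e. $t^\star = \log((1-a)/a)$, which lies in $[0,\infty)$ \emph{precisely because} $a \le 1/2$. A direct evaluation using $1 + e^{-t^\star} = 1/(1-a)$ gives $\psi(t^\star) = a\log(1/a) + (1-a)\log(1/(1-a)) = H(a)$, so $\psi \ge H(a)$ everywhere. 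This is the heart of the lemma: the minimum of $\psi$ is exactly the binary entropy $H(a)$, which both pins down the intercept and explains why $H$ appears.

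For $u \ge 0$ the crude estimate $(1+e^u)^2 \ge 1$ is useless because $\log(1+e^{-u})$ is bounded, so instead I would reduce to the already-proven branch via the reflection inequality $g(u) \ge g(-u)$, where $g = 1/h$. Writing $(1+e^u)^2 = e^{2u}(1+e^{-u})^2$ turns this into $(e^{2u}-1)\log(1+e^{-u}) \ge u$, which holds with equality at $u = 0$; its derivative simplifies to $e^u\bigl(2e^u\log(1+e^{-u}) - 1\bigr)$, and the Padé-type lower bound $\log(1+x) \ge 2x/(2+x)$ for $x\ge 0$ shows the bracket is at least $4/3 - 1 = 1/3 > 0$ on $[0,\infty)$. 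Hence $g(u) \ge g(-u) \ge H(a) + (1-a)u$ by applying the left-branch result at $-u$. Finally, for the asymptotics, as $u \to -\infty$ we have $(1+e^u)^2 \to 1$ and $\log(1+e^{-u}) = |u| + \log(1+e^{-|u|}) \to |u|$, so $1/h(u) \sim |u|$, i.e. $h(u) \sim 1/|u|$.

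The main obstacle I anticipate is identifying the correct tight linear bound to begin with and verifying that $\min_t \psi(t) = H(a)$ exactly; once the reduction to $\psi$ is in place, the rest is routine calculus, the only genuine subtlety being that the $u \ge 0$ branch resists the same crude estimate and must be folded back onto $u \le 0$ through the reflection step.
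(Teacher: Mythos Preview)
Your argument is correct and follows the same overall structure as the paper: split at $u=0$; for $u\le 0$ discard the factor $(1+e^u)^2\ge 1$ and use convexity to bound $\log(1+e^{-u})$ from below by an affine function (the paper phrases this as the tangent-line bound at $v=\log(a/(1-a))$, you as minimization of the convex $\psi$; these are the same statement); and for $u\ge 0$ reduce via the reflection inequality $h(u)\le h(-u)$.

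The one genuine difference is in how the reflection is verified. The paper rewrites it as $f(u)=(1-e^{-2u})\log(1+e^u)-u\ge 0$ and argues that $f$ is unimodal with $f(0)=\lim_{u\to\infty}f(u)=0$, which requires showing that $f'(u)=e^{-2u}(2\log(1+e^u)-e^u)$ has a unique zero on $(0,\infty)$. You instead work with $F(u)=(e^{2u}-1)\log(1+e^{-u})-u$, which equals $e^{2u}f(u)$ and hence has the same sign, and show directly that $F'(u)=e^u\bigl(2e^u\log(1+e^{-u})-1\bigr)>0$ on $[0,\infty)$ via the Pad\'e-type bound $\log(1+x)\ge 2x/(2+x)$. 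Your route here is shorter and sidesteps the paper's somewhat delicate uniqueness argument for the zero of $f'$; the cost is needing the auxiliary inequality $\log(1+x)\ge 2x/(2+x)$, but that is a one-line derivative computation.
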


See See Figure~\ref{fig:hfunc}, right, for a graphical illustration.

By Lemma~\ref{lem:hub2}, it follows that condition (\ref{equ:picond}) in Theorem~\ref{thm:polyak} is satisfied by uncertainty sampling according to
$$
\pi^*(u) = \frac{\beta}{2(1-c)}\min\left\{\rho R^2 \frac{1}{H(a) + (1-a)|u|},1\right\}.
$$

\subsection{Proof of Lemma~\ref{lem:hub1}}
\label{app:hub1}

We need to prove that for every $u\in \mathbb{R}$,
$$
(1+e^u)^2\log(1+e^{-u})\geq 1+e^u.
$$
By dividing both sides in the last inequality with $(1+e^{u})^2$ and the fact $1/(1+e^u) = 1 - 1/(1+e^{-u})$, we note that the last above inequality is equivalent to
$$
\log(1+e^{-u}) \geq 1 - \frac{1}{1+e^{-u}}.
$$
By straightforward calculus, this can be rewritten as
$$
\log\left(1-\left(1-\frac{1}{1+e^{-u}}\right)\right)\leq - \left(1-\frac{1}{1+e^{-u}}\right).
$$
This clearly holds true because $1-1/(1+e^{-u}) \in (0,1)$ and $\log(1-z)\leq -z$ for every $z\in (0,1)$. 

It remains only to show that $\lim_{u\rightarrow \infty}h(u)/(1-\sigma(u))=1$. This is clearly true as
$$
\frac{h(u)}{1-\sigma(u)} = \frac{1}{(1+e^u)\log(1+e^{-u})}
$$
which goes to $1$ as $u$ goes to infinity.

\subsection{Proof of Lemma~\ref{lem:hub2}}
\label{app:hub2}

We first consider the case $u\leq 0$. Fix an arbitrary $v\leq 0$. Since $u\mapsto \log(1+e^u)$ is a convex function it is lower bounded by the tangent passing through $v$, i.e. 
$$
\log(1+e^{-u}) \geq \log(1+e^{-v}) - \frac{1}{1+e^v}(u-v).
$$
Now, let $a$ be such that $1-a = 1/(1+e^v)$. Since $v\leq 0$, we have $a\in (0,1/2]$. It follows that for any fixed $a\in (0,1/2]$,
$$
\log(1+e^{-u}) \geq H(a) - (1-a)u.
$$
Using this along with the obvious fact $(1+e^u)^2 \geq 1$, we have that for every $u\leq 0$,
$$
h(u) \leq \frac{1}{\log(1+e^{-u})}\leq \frac{1}{H(a) + (1-a)|u|}. 
$$

We next consider the case $u\geq 0$. It suffices to show that for every $u\geq 0$, $h(u)\leq h(-u)$, and hence the upper bound established for the previous case applies. The condition $h(u)\leq h(-u)$ is equivalent to
$$
\frac{1}{(1+e^u)^2\log(1+e^{-u})}\leq \frac{1}{(1+e^{-u})^2\log(1+e^{u})}.
$$
By straightforward calculus, this is equivalent to
$$
f(u) := (1-e^{-2u})\log(1+e^u)-u \geq 0.
$$
This holds because function (i) $f$ is increasing on $[0,u_0]$ and decreasing on $[u_0,\infty)$, for some $u_0\geq 0$, (ii) $f(0) = 0$ and (iii) $\lim_{u\rightarrow \infty}f(u) = 0$. Properties (ii) and (iii) are easy to check. We only show that property (i) holds true. By straightforward calculus,
$$
f'(u) = e^{-2u}(2\log(1+e^u)-e^u).
$$
It suffices to show that there is a unique $u^*\in \mathbb{R}$ such that $f'(u^*) = 0$. For any such $u^*$ it must hold $2\log(1+e^{u^*}) - e^{u^*}$. Let $v = e^{v^*}$. Then, $2\log(1+v) = v$, which is equivalent to 
$$
1+v = e^{\frac{v}{2}}.
$$
Both sides of the last equation are increasing in $v$, and the left-hand side is larger than the right-hand side for $v = 1$. Since the right-hand side is larger than the left-hand side for any large enough $v$, it follows that there is a unique point $v$ at which the sides of the equation are equal. This shows that there is a unique $u^* \geq 0$ such that $f'(u^*) = 0$.

It remains to show that $\lim_{u\rightarrow-\infty} h(u)/(1/|u|) = 1$, i.e.
$$
\lim_{u\rightarrow -\infty}\frac{-u}{(1+e^u)^2\log(1+e^{-u})}= 1
$$
which clearly holds true as both $1/(1+e^u)^2$ and $-u/\log(1+e^{-u})$ go to $1$ as $u$ goes to $-\infty$.

\subsection{Convergence Conditions for  $\pi(x,y,\theta)=\zeta(x,y,\theta)^\eta$}
\label{sec:power}

It suffices to show that under given conditions, the sampling probability function satisfies condition (\ref{equ:picond}). Using the definition of the sampling probability function, condition (\ref{equ:picond}) can be written as follows
\begin{equation}
\left(\frac{\ell(x,y,\theta)-\inf_{\theta'}\ell(x,y,\theta')}{||\nabla_\theta \ell(x,y,\theta)||^2}\right)^\eta \geq \frac{1}{2(1-c)}\min\left\{\beta, \rho\beta\frac{||\nabla_\theta \ell(x,y,\theta)||^2}{\ell(x,y,\theta) - \inf_{\theta'}\ell(x,y,\theta)}\right\}^{1-\eta}.
\label{equ:etacond}
\end{equation}
In the inequality (\ref{equ:etacond}), by (\ref{equ:lb}), the left-hand side is at least $(1/(2L))^\eta$ and clearly the right-hand side is at most $\beta^{1-\eta}/(2(1-c))$. Hence, it follows that it suffices that
$$
\left(\frac{1}{2L}\right)^{\eta} \geq \frac{1}{2(1-c)}\beta^{1-\eta}.
$$

\subsection{Uncertainty-based Sampling for Multi-class Classification}
\label{sec:multipolyak}

We consider multi-class classification according to prediction function 
$$
p(y\mid x,\theta) = \frac{e^{x^\top \theta_y}}{\sum_{y'\in \mathcal{Y}}e^{x^\top \theta_{y'}}}, \hbox{ for } y\in \mathcal{Y}.
$$
Assume that $\ell$ is the cross-entropy function. Let
$$
u(x,y,\theta) = -\log\left(\sum_{y'\in \mathcal{Y}\setminus\{y\}}e^{-(x^\top \theta_y - x^\top \theta_{y'})}\right).
$$

It can be shown that
$$
\frac{||\nabla_\theta \ell(x,y,\theta)||^2}{\ell(x,y,\theta)}\leq 2||x||^2 h(u(x,y,\theta))
$$
where function $h$ is defined in (\ref{equ:hfunc}). Hence, condition of Theorem~\ref{thm:polyak} holds under
$$
\pi(u) \geq \frac{\beta}{2(1-c)}\min\left\{2\rho R^2 h(u), 1\right\}.
$$

For given $\theta$ and $x$, let $\theta_{(1)},\ldots, \theta_{(k)}$ be an ordering of $\theta_1,\ldots, \theta_k$ such that $x^\top \theta_{(1)}\geq \cdots \geq x^\top \theta_{(k)}$. Sampling according to function $\pi^*$ of the gap $g = |x^\top \theta_{(1)}-x^\top \theta_{(k)}|$, 
$$
\pi^*(g) = \frac{\beta}{2(1-c)}\min\left\{2\rho R^2 h^*(g),1\right\},
$$
where
$$
h^*(g) = \frac{1}{H(a)+(1-a)\max\{g-\log(k-1),0\}},
$$
satisfies condition of Theorem~\ref{thm:polyak}.

We next provide proofs for assertions made above. The loss function is assumed to be the cross-entropy loss function, i.e.
$$
\ell(x,y,\theta) = -\log\left(\frac{e^{x^\top \theta_y}}{\sum_{y'\in \mathcal{Y}}e^{x^\top \theta_{y'}}}\right).
$$
Note that we can write
$$
\ell(x,y,\theta) = -\left(\phi(x,y)^\top \theta - \log\left(\sum_{y'\in \mathcal{Y}\setminus \{y\}}e^{\phi(x,y')^\top \theta}\right)\right).
$$

We consider 
$$
\frac{||\nabla_\theta \ell(x,y,\theta)||^2}{\ell(x,y,\theta)}
$$
which is plays a key role in the condition of Theorem~\ref{thm:polyak}.

It holds
$$
\nabla_\theta \ell(x,y,\theta) = -\left(\phi(x,y) - 
\frac{\sum_{y'\in \mathcal{Y}\setminus \{y\}}e^{\phi(x,y')^\top \theta}\phi(x,y')}
{\sum_{y'\in \mathcal{Y}\setminus \{y\}}e^{\phi(x,y')^\top \theta}}\right)
$$
and
\begin{eqnarray*}
||\nabla_\theta\ell(x,y,\theta)||^2 &=& \left(1-\frac{e^{\phi(x,y)^\top \theta}}{\sum_{z\in \mathcal{Y}\setminus \{y\}}e^{\phi(x,z)^\top \theta}}\right)^2 ||x||^2 + \sum_{y'\in \mathcal{Y}\setminus \{y\}}\left(\frac{e^{\phi(x,y')^\top \theta}}{\sum_{z\in \mathcal{Y}\setminus \{y\}}e^{\phi(x,z)^\top \theta}}\right)^2||x||^2\\
&=& \left(\left(1-e^{-\ell(x,y,\theta)}\right)^2 +\sum_{y'\in \mathcal{Y}\setminus\{y\}} \left(e^{-\ell(x,y',\theta)}\right)^2 \right) ||x||^2.
\end{eqnarray*}

From the last equation, it follows
$$
||x||^2 \left(1-e^{-\ell(x,y,\theta)}\right)^2\leq ||\nabla_\theta\ell(x,y,\theta)||^2 \leq 2||x||^2 \left(1-e^{-\ell(x,y,\theta)}\right)^2.
$$

Note that $\ell(x,y,\theta) = \log(1+e^{-u(x,y,\theta)})$ where
$$
u(x,y,\theta) = -\log\left(\sum_{y'\in \mathcal{Y}\setminus \{y\}}e^{-(x^\top \theta_y - x^\top \theta_{y'})}\right).
$$

It follows
$$
||x||^2 h(u(x,y,\theta)) \leq \frac{||\nabla_\theta \ell(x,y,\theta)||^2}{\ell(x,y,\theta)}\leq 2 ||x||^2 h(u(x,y,\theta))
$$
where $h$ is function defined in (\ref{equ:hfunc}). 

The following equation holds
$$
u(x,y,\theta)=\theta_y^\top x - \max_{z\in \mathcal{Y}\setminus \{y\}}x^\top \theta_z - \log\left(\sum_{y'\in\mathcal{Y}\setminus\{y\}}e^{-(\max_{z\in \mathcal{Y}\setminus \{y\}}x^\top \theta_z-x^\top \theta_{y'})}\right).
$$

Note that
\begin{eqnarray*}
|u(x,y,\theta)| &\geq & |x^\top \theta_y - \max_{z\in \mathcal{Y}\setminus \{y\}}x^\top \theta_z| - \log\left(\sum_{y'\in\mathcal{Y}\setminus\{y\}}e^{-(\max_{z\in \mathcal{Y}\setminus \{y\}}x^\top \theta_z-x^\top \theta_{y'})}\right)\\
&\geq & |x^\top \theta_{(1)} - x^\top \theta_{(2)}| - \log(k-1).
\end{eqnarray*}

Combining with Lemma~\ref{lem:hub2}, for every $a\in (0,1/2]$,
$$
h(u(x,y,\theta))\leq \frac{1}{H(a) + (1-a)|u|}\leq h^*(|x^\top \theta_{(1)} - x^\top \theta_{(2)}|)
$$
where
$$
h^*(g) = \left\{
\begin{array}{ll}
\frac{1}{H(a)} & \hbox{ if } g \leq \log(k-1)\\
\frac{1}{H(a) -(1-a)\log(k-1)+ (1-a)g} & \hbox{ if } g > \log(k-1).
\end{array}
\right .
$$

\section{Appendix: Additional Material for Numerical Experiments}
\label{app:num}

\subsection{Further Details on Experimental Setup}
\label{sec:experimental_details}
\paragraph{Hyperparameter Tuning}
\label{sec:hyperparameter_tuning}
We used the Tree-structured Parzen Estimator (TPE)~\citep{bergstra2011algorithms} algorithm in the \texttt{hyperopt} package~\citep{bergstra2013making} to tune the relevant hyperparameters for each method and minimize the average progressive cross entropy loss. For Polyak absloss and Polyak exponent we set the search space of $\eta$ to $[0.01, 1]$ and the search space of $\rho$ to $[0,1]$. Note that the values of $\eta$ and $\rho$ influence the rate of sampling. 

In line with the typical goal of active learning, we aim to learn efficiently and minimize loss under some desired rate of sampling. Therefore, for every configuration of $\eta$ and $\rho$ we use binary search to find the value of $\beta$ that achieves some target empirical sampling rate. 

Observe that if we would not control for $\beta$, then our hyperparameter tuning setup would simply find values of $\eta$ and $\rho$ that lead to very high sampling rates, which is not in line with the goal of active learning. In the hyperparameter tuning we set the target empirical sampling rate to 50\%.

\paragraph{Compute Resources}
All experiments were performed on a single machine with 72 CPU cores and 228 GB RAM. It took us around 2,000 seconds to complete a training run for an AWS-PA with an absloss estimator on Mushrooms dataset, our slowest experiment. The training runs for other datasets and algorithms were considerably faster. 

\subsection{Further Details on Numerical Experiments with Different Algorithms}
\label{sec:additional_experiments}

In Section~\ref{sec:numerical_results}, we presented numerical results for comparing AWS-PA with other algorithms. These results are shown in Figure~\ref{fig:polyak_absloss_random}. Below are some additional details for these experiments.

\paragraph{Tuning Sampling Rate} In Figure~\ref{fig:polyak_absloss_random} we compare Polyak absolute loss sampling to absolute loss sampling and random sampling. In this setting we have no control over the sampling rate of absolute loss sampling. Hence, we first run absolute loss sampling to find an empirical sampling rate of 14.9\%. We then again use binary search to find the value of $\beta$ to match this sampling rate with Polyak absolute loss sampling. Again, this setup is conservative with respect to the gains of Polyak absolute loss sampling as $\eta$ and $\rho$ were optimized for a sampling rate of 50\%.

\paragraph{Sampling Efficiency of AWS-PA} In Figure~\ref{fig:polyak_absloss_random} we had demonstrated on various datasets that AWS-PA leads to faster convergence than the traditional loss-based sampling~\cite{Yoo_2019_CVPR}. Figure~\ref{fig:labeling_cost} presents results as a function of the number of sampled instances, i.e., the number of labeled instances that were selected for training (i.e., cost). This contrasts Figure~\ref{fig:polyak_absloss_random}, which showed on the X-axis the total number of iterations. The results confirm that sampling with AWS-PA not only leads to faster convergence than traditional loss-based sampling when expressed in terms of number of iterations, but also when expressed in the number of sampled instances.

\begin{figure}
\centering
\includegraphics[width=0.5\linewidth]{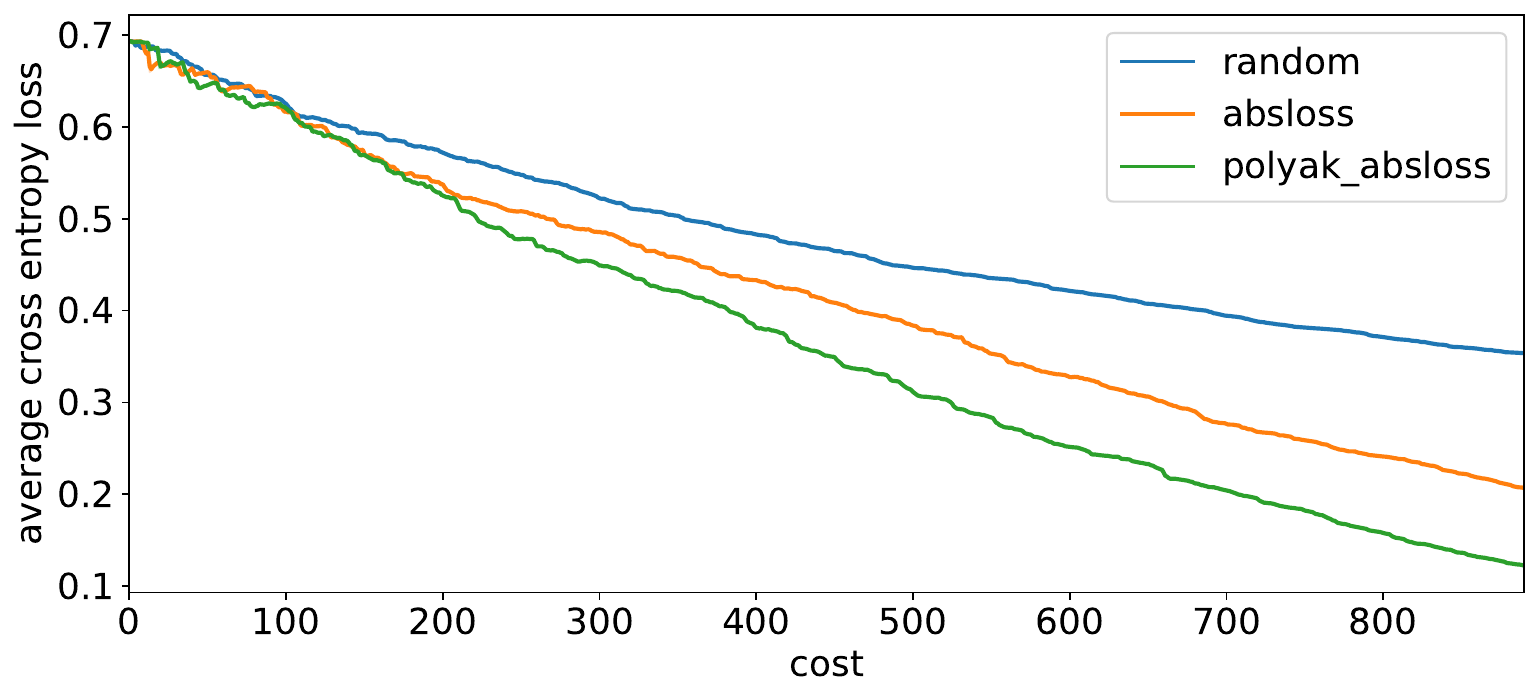}
\caption{Average cross entropy loss as a function of labeling cost for different sampling methods.}
\label{fig:labeling_cost}
\end{figure}

\paragraph{AWS-PA Results on a Holdout Test Set} The results in Figure~\ref{fig:polyak_absloss_random} were obtained using a \emph{progressive validation}~\cite{blum1999} procedure where the average loss is measured during an online learning procedure where for each instance the loss is calculated prior to the weight update. Figures~\ref{fig:test_loss} and~\ref{fig:test_accuracy} show that our finding that  
AWS-PA leads to faster convergence than traditional loss-based sampling and than random sampling also holds true on a separate hold out test set. 

\begin{figure}
\centering
\includegraphics[width=0.49\linewidth]{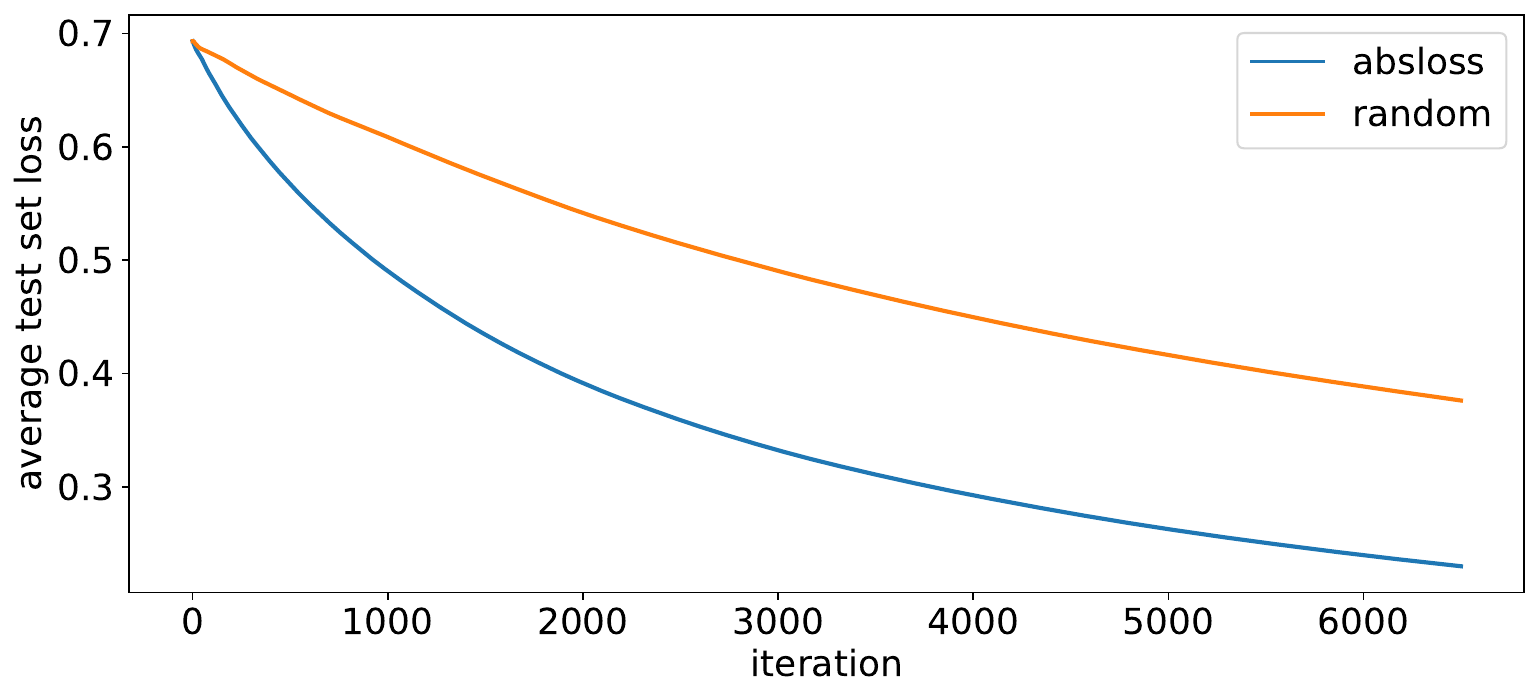}
\includegraphics[width=0.49\linewidth]{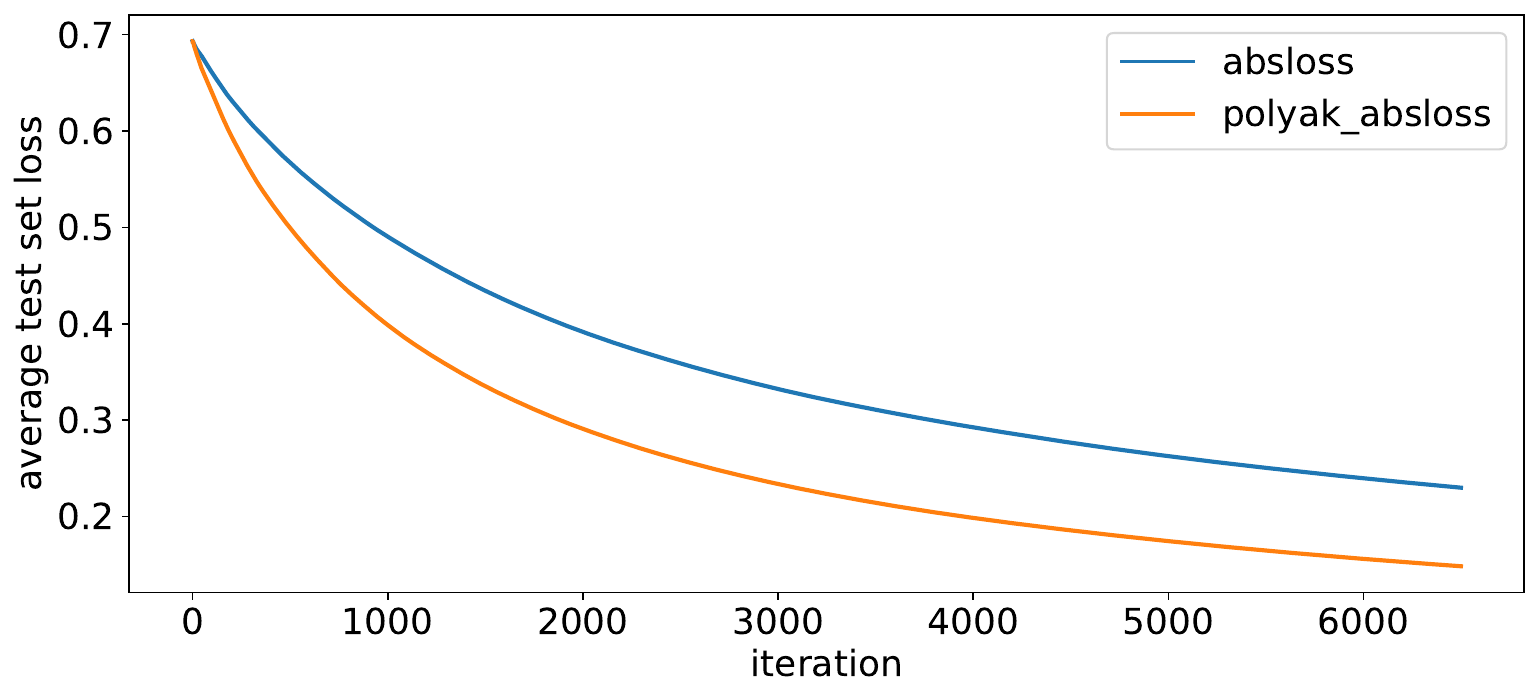}
\caption{Average cross entropy loss on a hold-out testing set for different sampling methods.}
\label{fig:test_loss}
\end{figure}

\begin{figure}
\centering
\includegraphics[width=0.495\linewidth]{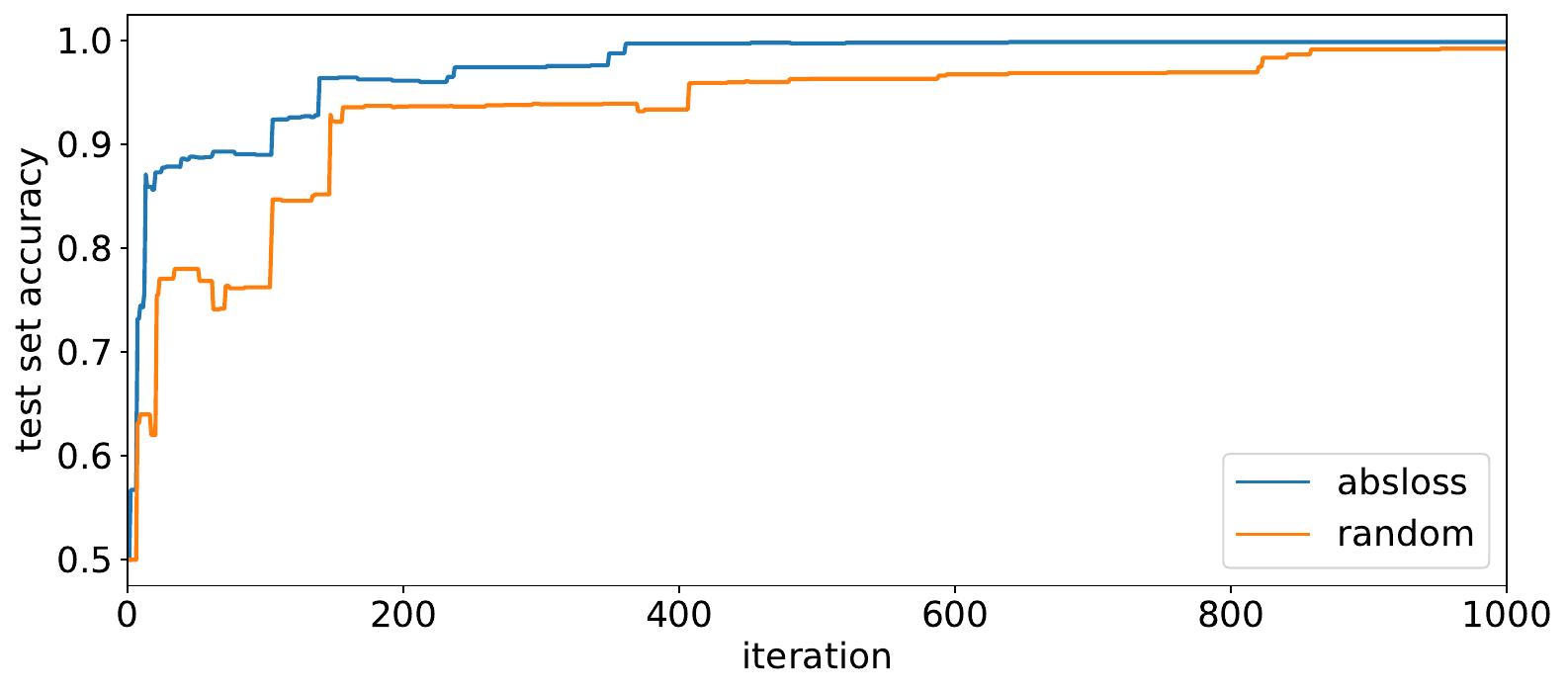}
\includegraphics[width=0.485\linewidth]{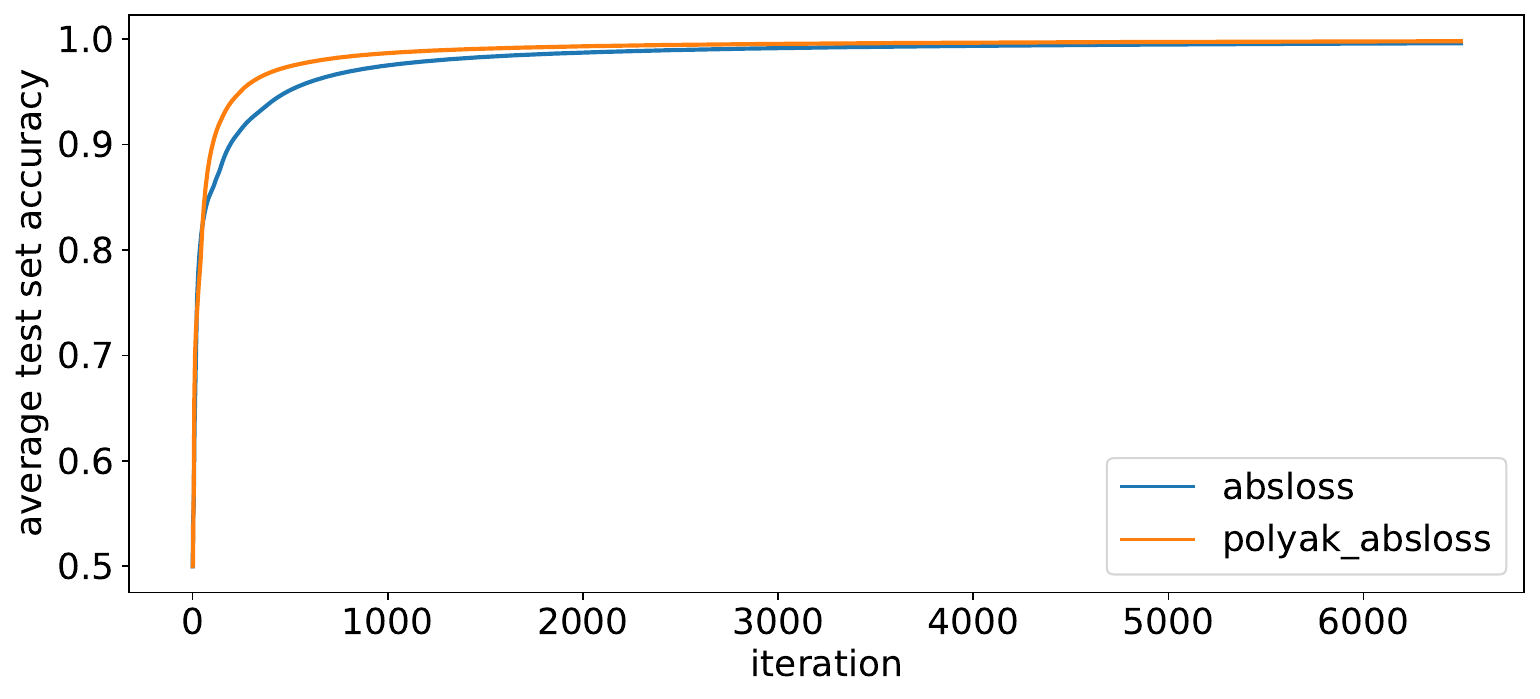}
\caption{Test accuracy for different sampling methods.}
\label{fig:test_accuracy}
\end{figure}

\subsection{Further Details on the Robustness of AWS-PA to Loss Estimation}
\label{sec:absloss_estimator}

In Section~\ref{sec:numerical_results}, we presented numerical results for comparison of the training loss achieved by our AWS-PA algorithm using the ground truth absolute error loss and estimated absolute error loss. These results are shown in Figure~\ref{fig:absloss_estimator}. Here, we provide more details for the underlying setup of experiments, and the number of sampled points.

\paragraph{Details on the Absloss Estimator} For the experiments in Figure~\ref{fig:absloss_estimator} we use a separate Random Forest (RF) regressor which estimates absolute error loss based on the same set of features as the target model with an addition of the target's model prediction as an extra feature. The estimator is retrained on every sampling step using the labeled points observed so far. We used the scikit-learn implementation of the RF regressor and manually tuned two hyperparameters for different datasets: (a) number of tree estimators (b) number of "warm-up" steps during which we sample content with a constant probability until we collect enough samples to train an RF estimator. We parallelized training of the RF estimator across all available CPU cores and used default values for all other hyperparameters. 

The statistics of the absloss as well as the parameters of the RF estimators for different datasets are summarized in Table~\ref{tab:absloss_estimator}. From the table, we note that the mean ground truth values of the absloss are largely in line with the mean estimated absloss. This suggests that it is possible to train absloss estimator with low bias or is even unbiased. 

\paragraph{Sampling efficiency of the absloss estimator} In Figure~\ref{fig:sampling_cost} we compare the cost of sampling based on the ground truth absolute loss versus sampling based on the estimated absloss. We note that in 4 out 6 datasets, the sampling cost closely matches that of sampling based on the ground truth absloss. However, in one of the cases (Splice) the sampling cost is lower and in one of the cases (Credit) it is higher than the baseline. 

\begin{table}
\centering
\caption{Hyperparameters of the absloss estimator and the comparison of the mean of ground truth and the mean of estimated absolute loss values.}
\label{tab:absloss_estimator}
\begin{tabular}{l|rrrr}
\toprule
Dataset & Number of trees & Warm-up steps & Mean absloss & Mean estimated absloss\\
\midrule
Mushrooms & 25  & 1  & 0.100 & 0.104\\
MNIST 3s vs 5s & 25  & 50 & 0.089 & 0.087\\
Parkinsons & 100 & 5  & 0.448 & 0.443\\
Splice & 100 & 25 & 0.163 & 0.160\\
Tictactoe & 100 & 1  & 0.435 & 0.416\\
Credit & 100 & 50 & 0.277 & 0.294\\ 
\bottomrule
\end{tabular}
\end{table}

\vfill

\begin{figure}[t!]
\centering
\begin{subfigure}{0.45\textwidth}
  \centering
  \includegraphics[width=\linewidth]{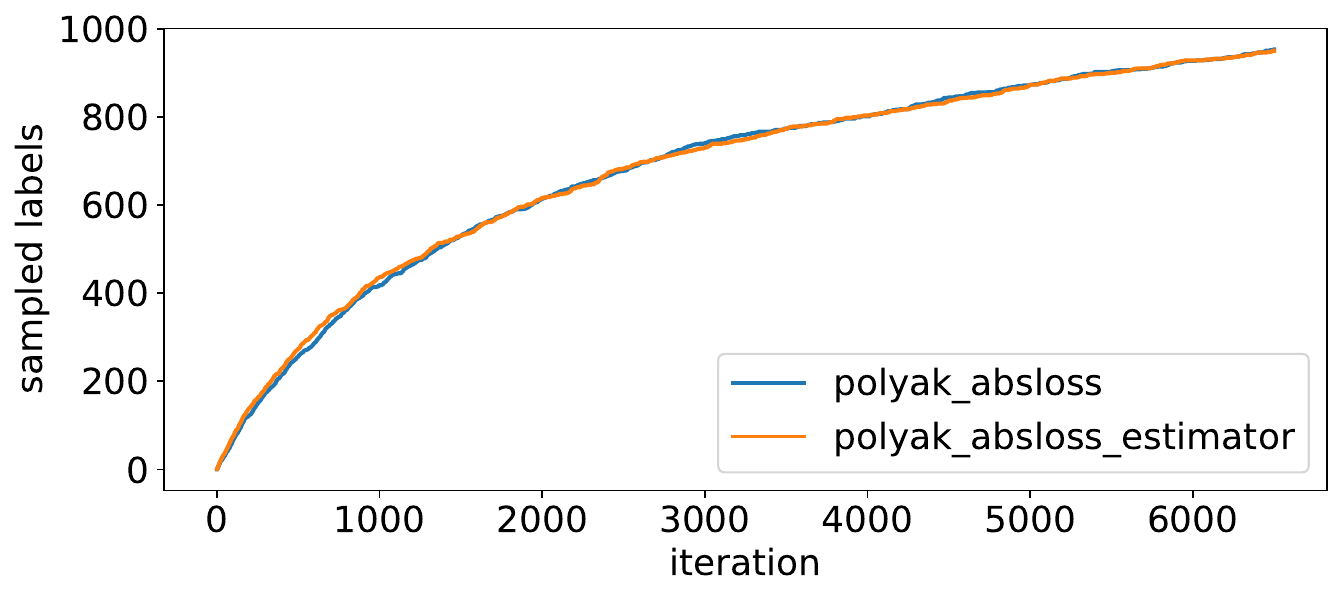}
  \caption{Mushrooms dataset}
\end{subfigure}
\begin{subfigure}{0.45\textwidth}
  \centering
  \includegraphics[width=\linewidth]{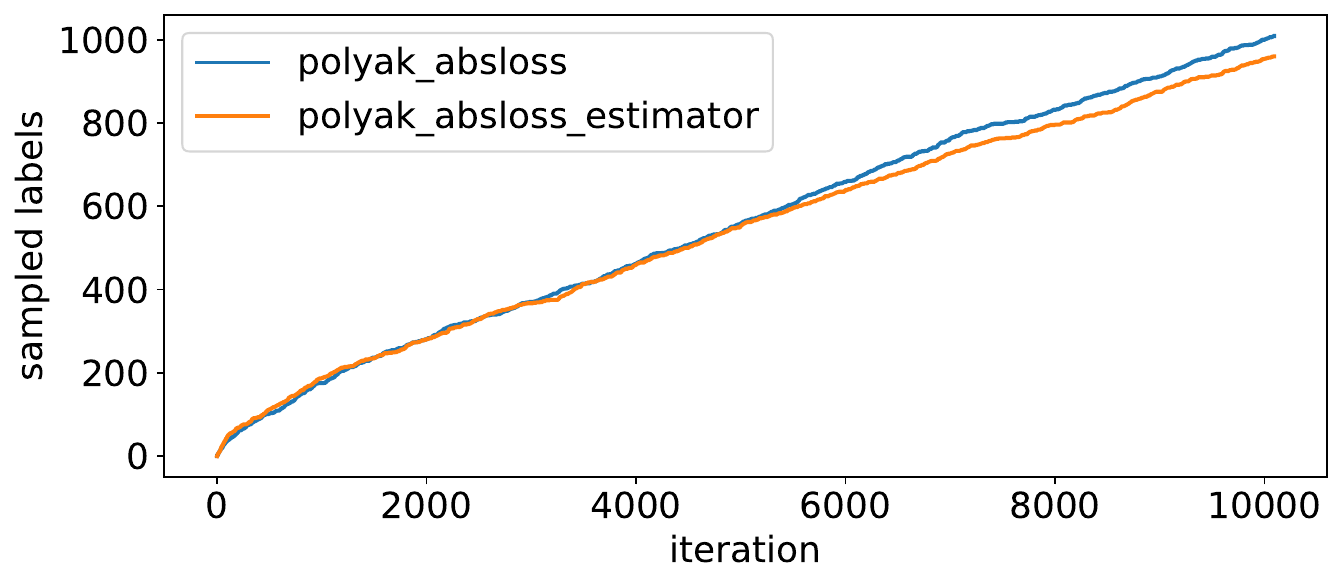}
  \caption{MNIST binary classification of 3s vs 5s}
\end{subfigure}
\begin{subfigure}{0.45\textwidth}
  \centering
  \includegraphics[width=\linewidth]{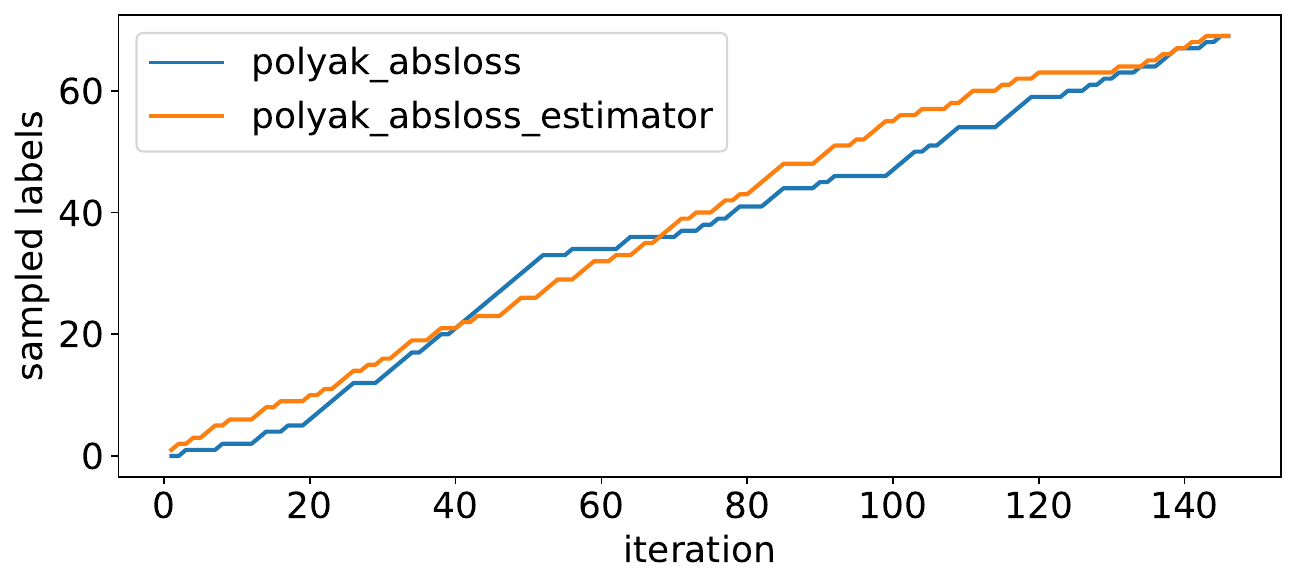}
  \caption{Parkinsons dataset}
\end{subfigure}
\begin{subfigure}{0.45\textwidth}
  \centering
  \includegraphics[width=\linewidth]{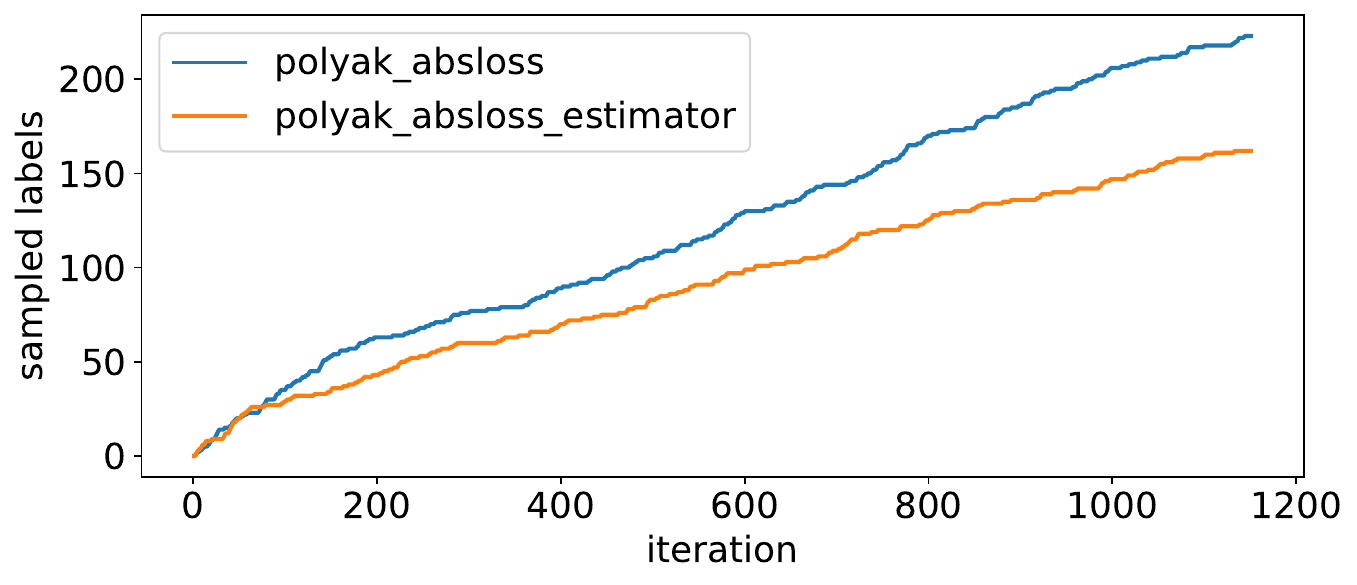}
  \caption{Splice dataset}
\end{subfigure}
\begin{subfigure}{0.45\textwidth}
  \centering
  \includegraphics[width=\linewidth]{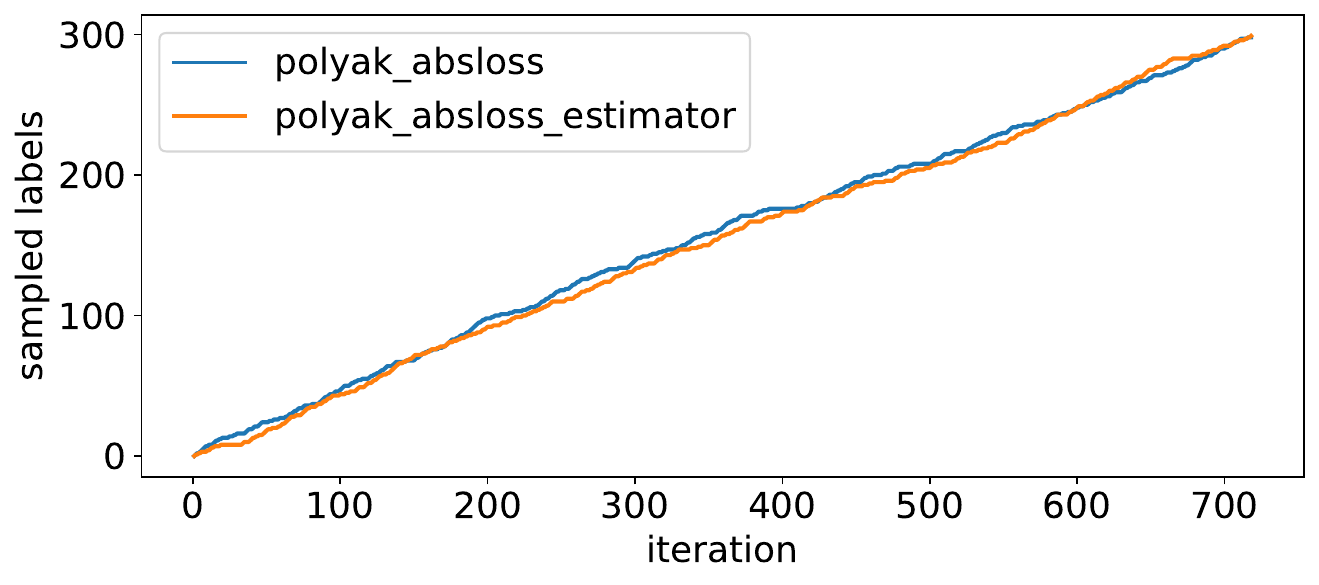}
  \caption{Tictactoe dataset}
\end{subfigure}
\begin{subfigure}{0.45\textwidth}
  \centering
  \includegraphics[width=\linewidth]{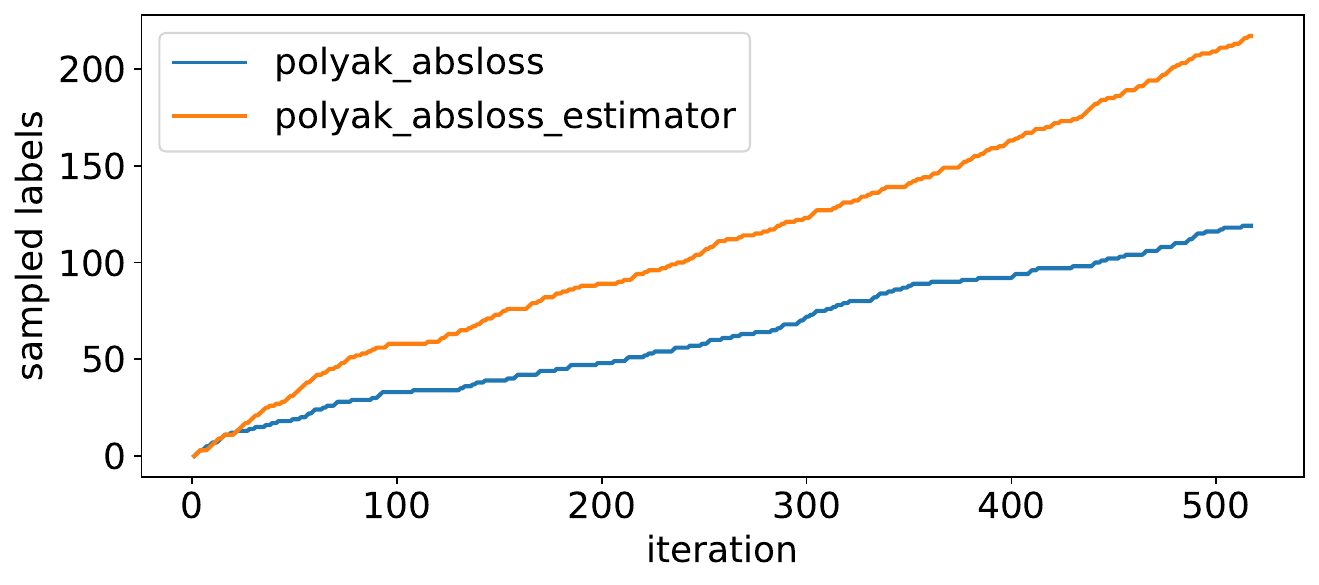}
  \caption{Credit dataset}
\end{subfigure}

\caption{Sampling efficiency for sampling based on the ground truth of absolute loss v.s. on estimated absolute loss.}
\label{fig:sampling_cost}
\end{figure}

\nt{\paragraph{Alternative absloss estimators} In Figure~\ref{fig:absloss_estimator} we shared results of AWS-PA using a Random Forest regressor to estimate the absolute loss. Figure~\ref{fig:absloss_estimator_neural_net} shows the results the credit dataset of an otherwise identical experimental setup where we have replaced the Random Forest regressor absloss estimator with an MLP neural network.}

\begin{figure}[t!]
\centering
  \centering
  \includegraphics[width=0.6\linewidth]{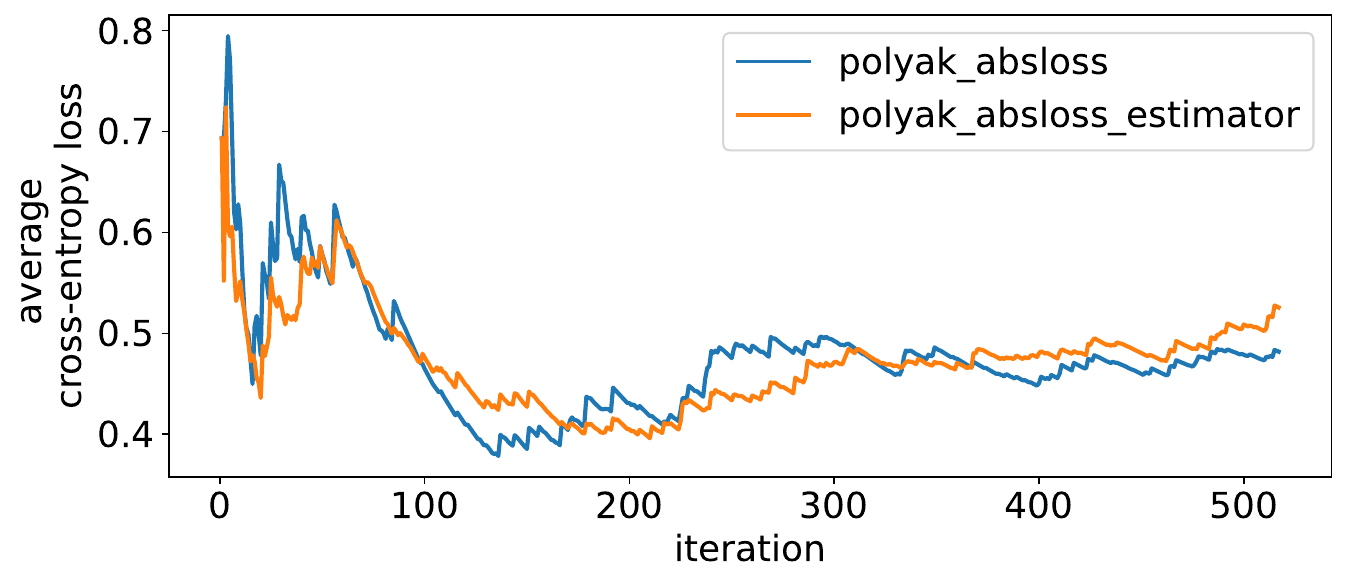}
  \caption{\nt{Credit dataset with an MLP neural network loss estimator.}}
\label{fig:absloss_estimator_neural_net}
\end{figure}

\subsection{Additional experiments}

\paragraph{Experiments with different sampling rates}

\cite{polyak} demonstrated that stochastic gradient descent with a step size corresponding to their stochastic Polyak's step size  converges faster than gradient descent. Figure~\ref{fig:polyak_under_sampling} illustrates that these findings extend to scenarios where we selectively sample from the dataset rather than training on the full dataset, and the step size is according to stochastic Polyak's step size only in expectation.

\begin{figure}[t]
    \centering
    \includegraphics[width=0.75\linewidth]{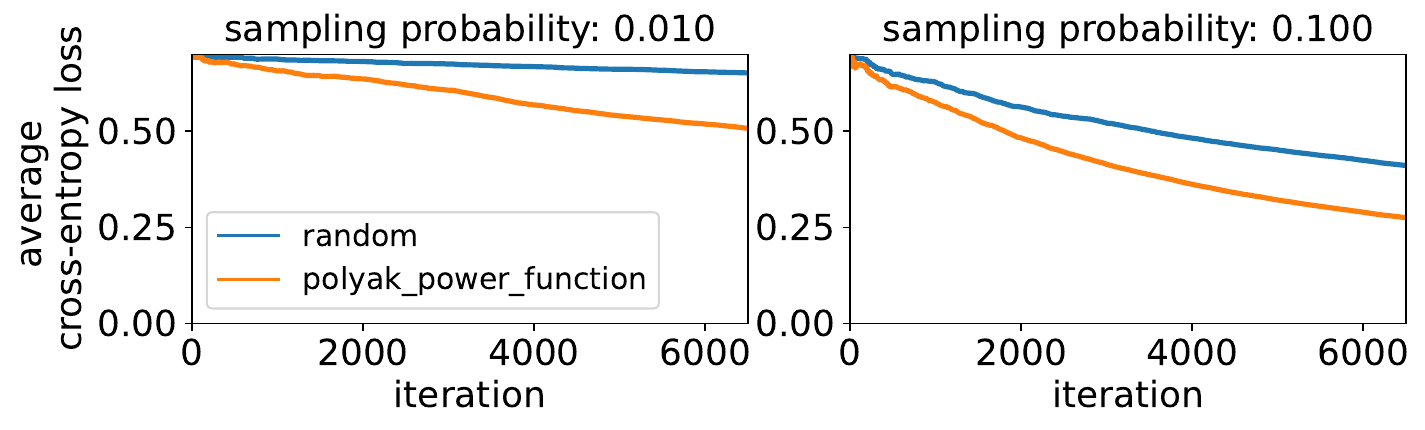}
    \caption{Average cross-entropy progressive loss of Polyak's step size compared to SGD with constant step size, for 1\% and 10\% sampling from the mushrooms data.}
    \label{fig:polyak_under_sampling}
\end{figure} 

To perform these experiments, similarly to the procedure described in Appendix~\ref{sec:hyperparameter_tuning}, we used binary search to find the value of $\beta$ that correspondingly achieves the two target values 1\% and 10\% with Polyak power function, while using the values of $\eta$ and $\rho$ that were optimised for a sampling rate of 50\%. Therefore, our findings of the gains achieved for selective sampling according to stochastic Polyak's step size are likely conservative since $\eta$ and $\rho$ were not optimised for specifically these sampling rates.

\paragraph{Experiments with synthetic absloss estimator } We simulate a noisy estimator of the absolute error loss in AWS-PA. We model an unbiased noisy estimator $\hat{\ell}_{\mathrm{abs}}$ of the absolute error loss $\ell_{\mathrm{abs}} \in [0,1]$ as a random variable following the beta distribution, denoted as $\hat{\ell}_{\mathrm{abs}} \sim \mathrm{Beta}(\alpha, \beta)$, where $\alpha$ and $\beta$ are parameters set to ensure $\mathbb{E}[\hat{\ell}_{\mathrm{abs}}] = \ell_{\mathrm{abs}}$. The variance of the noise can be controlled by the tuning parameter $\alpha$, given by $$
\mathrm{var}[\hat{\ell}_{\mathrm{abs}}] = \frac{\ell_{\mathrm{abs}}(1- \ell_{\mathrm{abs}})}{\alpha + \ell_{\mathrm{abs}}}.
$$

Figure~\ref{fig:polyak_absloss_noisy} shows that the convergence results are robust against estimation noise of the absolute error loss for a wide range of values for $\alpha \geq 1$.

\begin{figure}[t]
    \centering
    \includegraphics[width=0.5\linewidth]{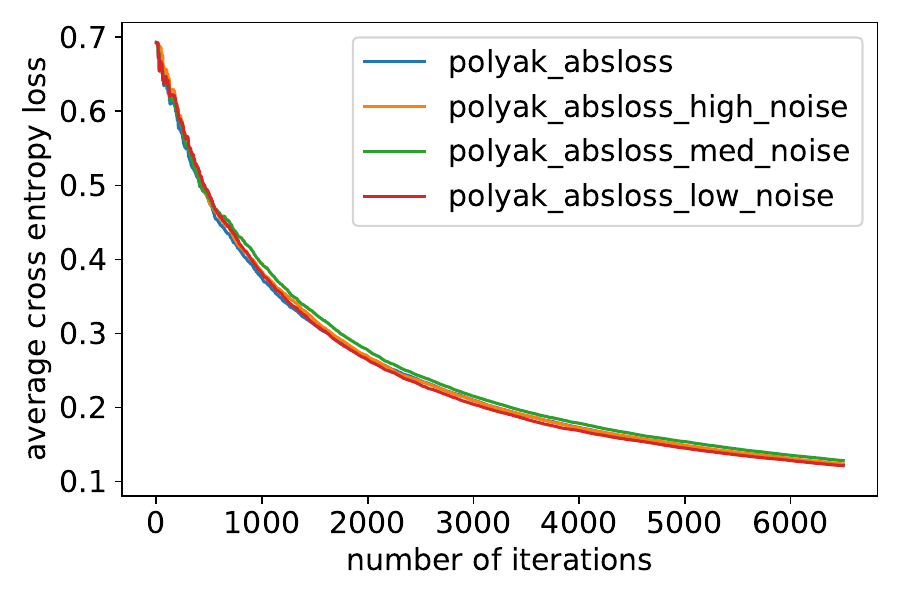}
    \caption{Robustness of the proposed sampling approach with adaptive Polyak's step size for different variance $\mathrm{var}[\hat{\ell}_{\mathrm{abs}}] = \ell^2_{\mathrm{abs}}(1- \ell_{\mathrm{abs}})/(\alpha + \ell_{\mathrm{abs}})$ noise levels of absolute error loss estimator: (low) $\alpha = 100$, (medium) $\alpha = 2.5$, and (high) $\alpha = 1$.}
    \label{fig:polyak_absloss_noisy}
\end{figure}

\newpage

\end{document}